\def\eqref#1{equation~\ref{#1}}
\def\1{\bm{1}}
\DeclareMathAlphabet{\mathsfit}{\encodingdefault}{\sfdefault}{m}{sl}
\SetMathAlphabet{\mathsfit}{bold}{\encodingdefault}{\sfdefault}{bx}{n}
\DeclareMathOperator*{\argmin}{arg\,min}
\DeclareMathOperator*{\supp}{supp}
\newtheorem{theorem}{Theorem}[section]
\newtheorem{remark}{Remark}[theorem]
\newtheorem{proposition}{Proposition}[section]
\newenvironment{proposition*}[1]{\innercustomprop}{\endinnercustomprop}
\newenvironment{assumption*}[1]{\innercustomassum}{\endinnercustomassum}
\definecolor{sapphire}{rgb}{0.03, 0.15, 0.4}
\definecolor{bleudefrance}{rgb}{0.19, 0.55, 0.91}
\title{Adversarial Support Alignment}
\author{%
Shangyuan Tong\thanks{First two authors contributed equally. Correspondence to Shangyuan Tong (\texttt{sytong@csail.mit.edu}).}\\
MIT CSAIL
\And
Timur Garipov\footnotemark[1]  \\
MIT CSAIL
\And
~
\And
~
\AND
Yang Zhang\\
MIT-IBM Watson AI Lab
\And
Shiyu Chang\\
UC Santa Barbara
\And
Tommi Jaakkola\\
MIT CSAIL
}
\begin{document}

\maketitle

\begin{abstract}
    We study the problem of aligning the supports of distributions. Compared to the existing work on distribution alignment, support alignment does not require the densities to be matched. We propose symmetric support difference as a divergence measure to quantify the mismatch between supports. We show that select discriminators (e.g. discriminator trained for Jensen–Shannon divergence) are able to map support differences as support differences in their one-dimensional output space. Following this result, our method aligns supports by minimizing a symmetrized relaxed optimal transport cost in the discriminator 1D space via an adversarial process. Furthermore, we show that our approach can be viewed as a limit of existing notions of alignment by increasing transportation assignment tolerance. We quantitatively evaluate the method across domain adaptation tasks with shifts in label distributions. Our experiments\footnote{We provide the code reproducing experiment results at  \url{https://github.com/timgaripov/asa}.} show that the proposed method is more robust against these shifts than other alignment-based baselines.
\end{abstract}

\renewcommand*{\thefootnote}{\arabic{footnote}}

\section{Introduction}
\label{sec:intro}
Learning tasks often involve estimating properties of distributions from samples or aligning such characteristics across domains. We can align full distributions (adversarial domain alignment), certain statistics (canonical correlation analysis), or the support of distributions (this paper). Much of the recent work has focused on full distributional alignment, for good reasons. In domain adaptation, motivated by theoretical results \citep{ben2007analysis, ben2010theory}, a series of papers \citep{ajakan2014domain, ganin2015unsupervised, ganin2016domain, tzeng2017adversarial, shen2018wasserstein, pei2018multi, zhao2018adversarial, li2018domain, wang2021discriminative, kumar2018co} seek to align distributions of representations between domains, and utilize a shared classifier on the aligned representation space.

Alignment in distributions implies alignment in supports. However, when there are additional objectives/constraints to satisfy, the minimizer for a distribution alignment objective does not necessarily minimize a support alignment objective. %
Example in Figure \ref{fig:motivating_ex} demonstrates the qualitative distinction between two minimizers when distribution alignment is not achievable.
The distribution alignment objective prefers to keep supports unaligned even if support alignment is achievable. Recent works \citep{zhao2019learning, li2020rethinking, tan2020class, wu2019domain, tachet2020domain} have demonstrated that a shift in label distributions between source and target leads to a characterizable performance drop when the representations are forced into a distribution alignment. The error bound in \citet{johansson2019support} suggests aligning the supports of representations instead. 

In this paper, we focus on distribution support as the key characteristic to align. We introduce a support divergence to measure the support mismatch and algorithms to optimize such alignment. We also position our approach in the spectrum of other alignment methods. Our contributions are as follows (all proofs can be found in Appendix \ref{sec:proofs}):

\begin{enumerate}
    \item In Section \ref{sec:diff_supp}, we measure the differences between supports of distributions. Building on the Hausdorff distance, we introduce a novel support divergence better suited for optimization, which we refer to as \emph{symmetric support difference} (SSD) divergence.
    \item In Section \ref{sec:1d_supp}, we identify an important property of the discriminator trained for Jensen–Shannon divergence: support differences in the original space of interest are ``preserved'' as support differences in the one-dimensional discriminator output space.
    \item In Section \ref{sec:asa}, we present our practical algorithm for support alignment, \emph{Adversarial Support Alignment} (ASA). Essentially, based on the analysis presented in Section \ref{sec:1d_supp}, our solution is to align supports in the discriminator 1D space, which is computationally efficient.
    \item In Section \ref{sec:connection}, we place different notions of alignment -- distribution alignment, relaxed distribution alignment and support alignment -- within a coherent spectrum from the point of view of optimal transport, characterizing their relationships, both theoretically in terms of their objectives and practically in terms of their algorithms. 
    \item In Section \ref{sec:experiment}, we demonstrate the effectiveness of support alignment in practice for domain adaptation setting. Compared to other alignment-based baselines, our proposed method is more robust against shifts in label distributions. 
\end{enumerate}

\captionsetup[subfloat]{justification=centering}
\begin{figure}[!h]
    \centering
    \newcommand\figtoyh{1.34in}
    \subfloat[Initialization\\$\mathcal{D}_W(p, q^\theta) = 11.12$ \\ $\mathcal{D}_\triangle(p, q^\theta) =  14.9$]{\includegraphics[height=\figtoyh]{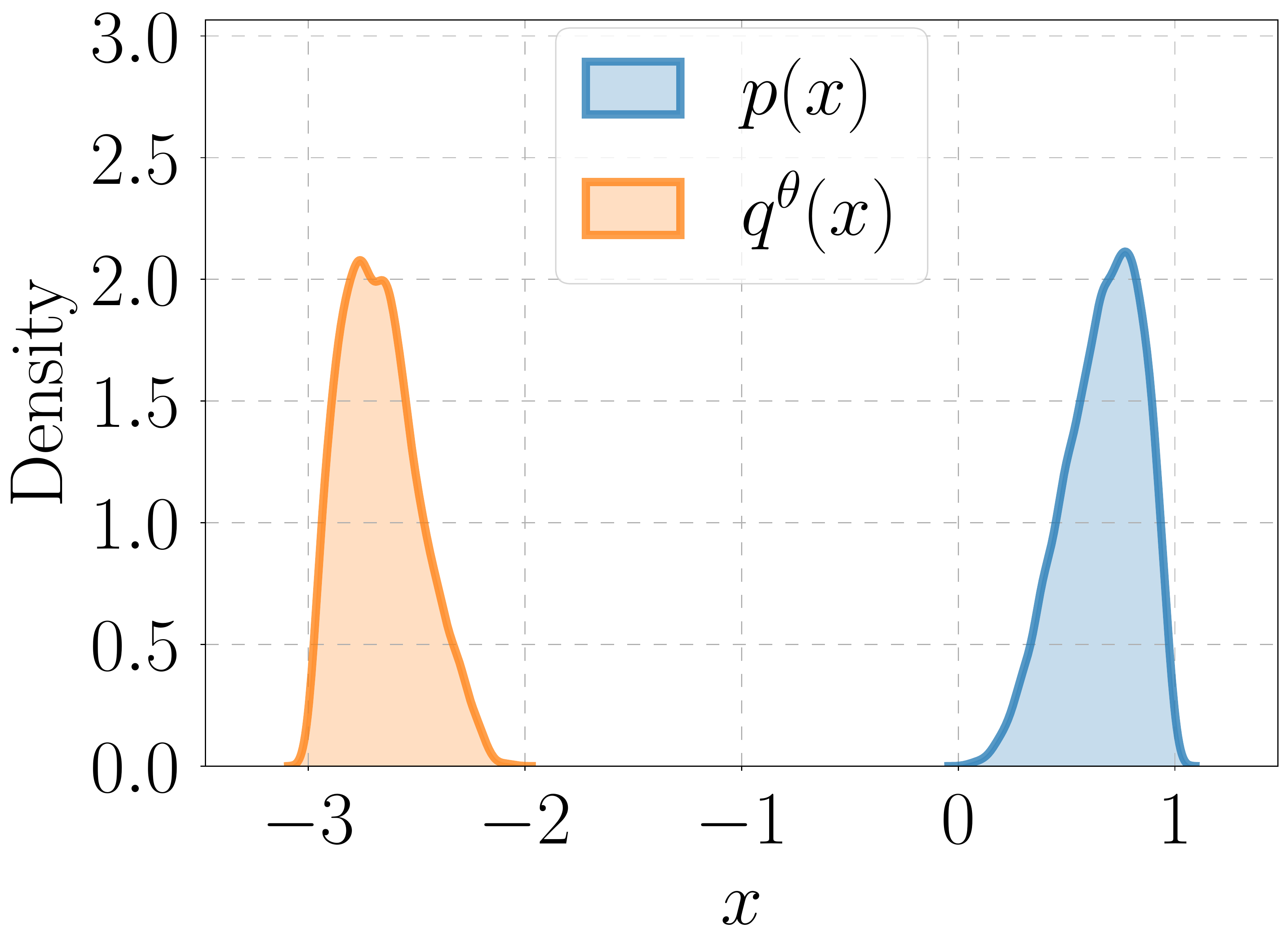}}\quad
    \subfloat[Distribution alignment\\
    $\mathcal{D}_W(p, q^\theta) = 2\cdot10^{-3}$\\
    $\mathcal{D}_\triangle(p, q^\theta) = 6 \cdot 10^{-4}$]{\includegraphics[height=\figtoyh]{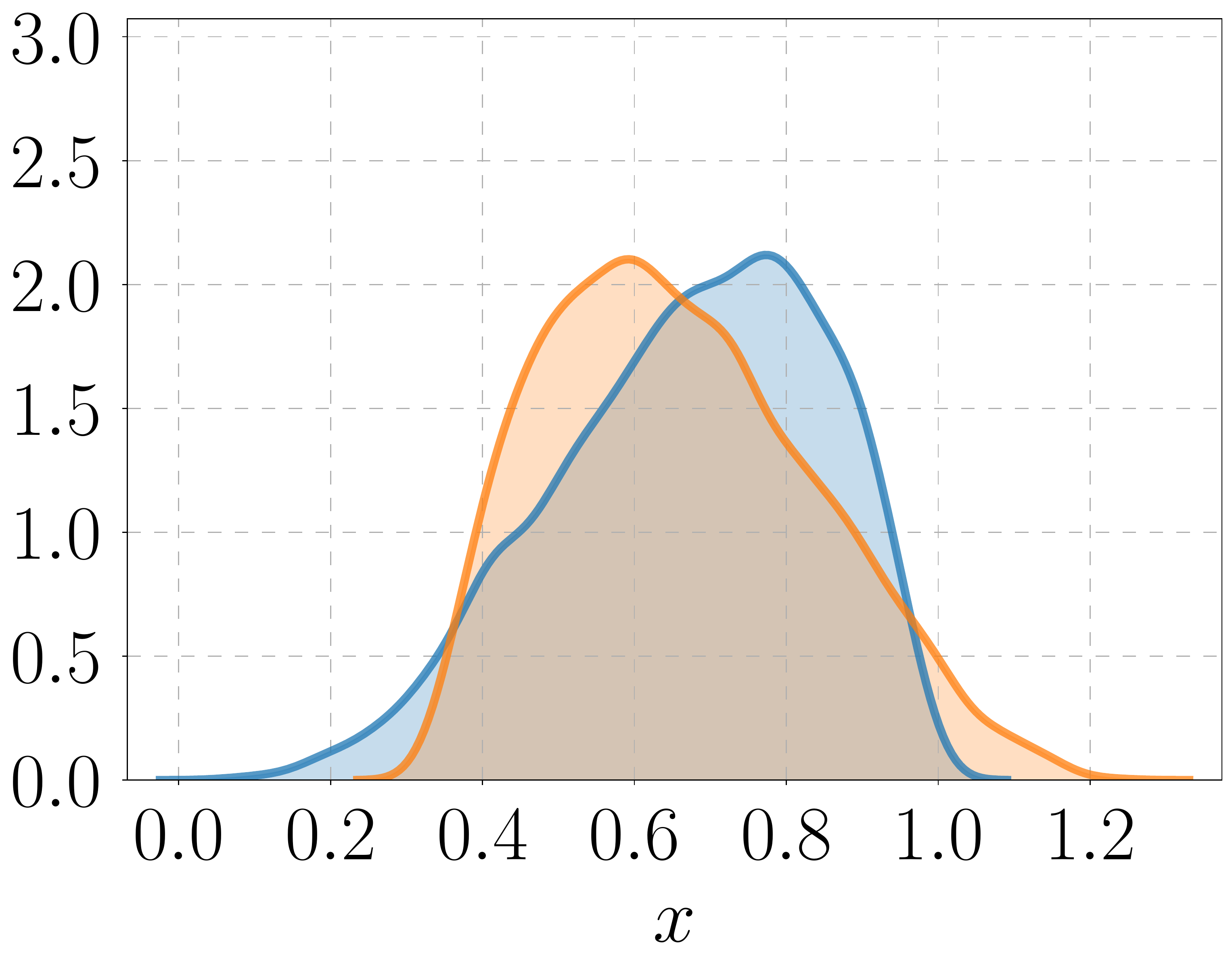}}\quad
    \subfloat[Support alignment\\
    $\mathcal{D}_W(p, q^\theta) = 5\cdot10^{-2}$\\
    $\mathcal{D}_\triangle(p, q^\theta) < 1 \cdot 10^{-6}$]{\includegraphics[height=\figtoyh]{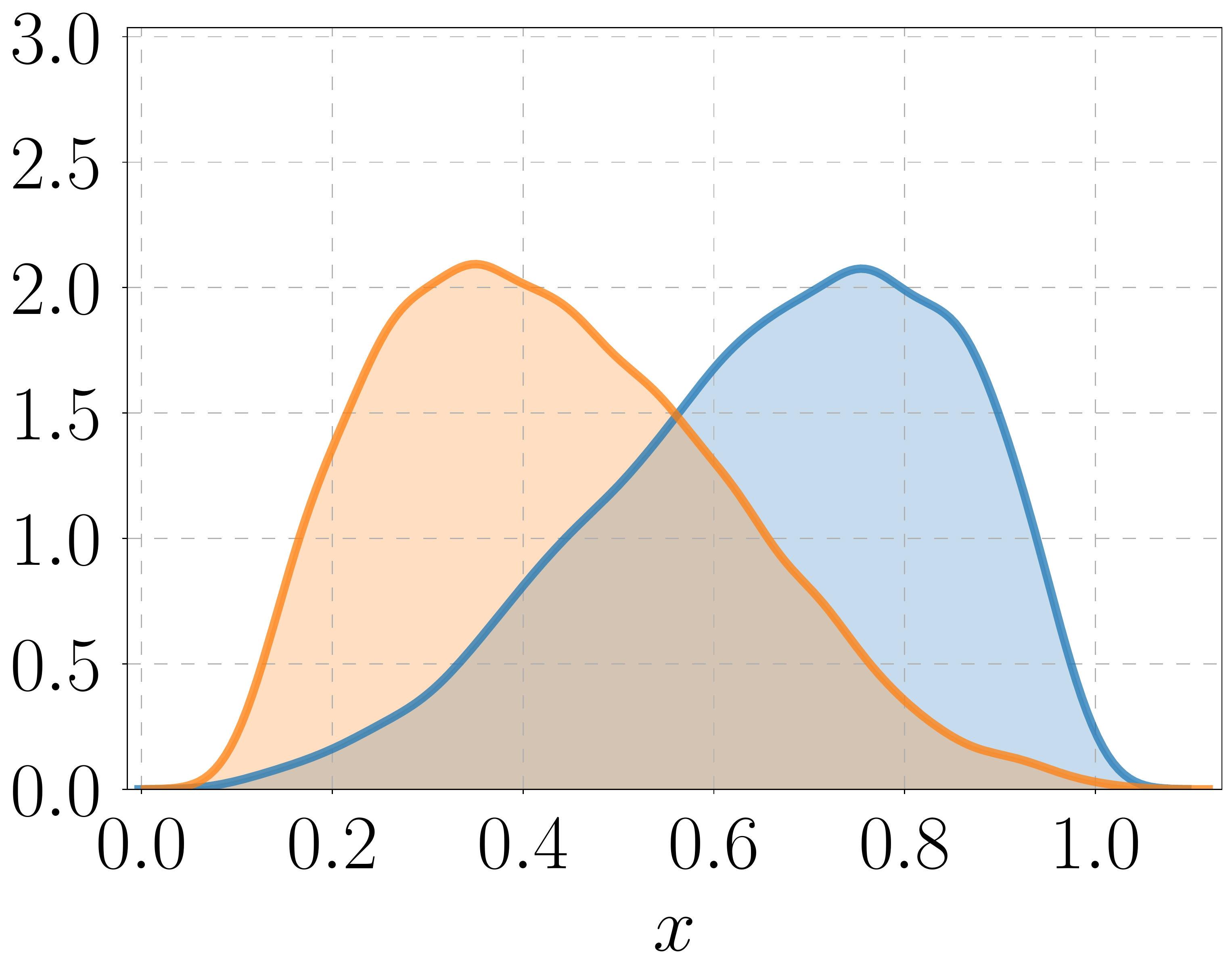}}
    \caption{Illustration of differences between the final configurations of distribution alignment and support alignment procedures. $p(x)$ is a fixed Beta distribution $p(x) = \operatorname{Beta}(x \,\vert\, 4, 2)$ with support $[0, 1]$; $q^\theta(x)$ is a ``shifted'' Beta distribution $q^\theta(x) = \operatorname{Beta}(x - \theta \,\vert\, 2, 4)$ parameterized by $\theta$ with support $[\theta, \theta + 1]$. Panel (a) shows the initial configuration with $\theta_{\text{init}} = -3$. Panel (b) shows the result by distribution alignment. Panel (c) shows the result by support alignment. We report Wasserstein distance $\mathcal{D}_W(p, q^\theta)$ (\ref{eq:w_dis}) and SSD divergence $\mathcal{D}_\triangle(p, q^\theta)$ (\ref{eq:ssd}).}
    \label{fig:motivating_ex}
\end{figure}

\section{SSD divergence and support alignment}
\label{sec:ssd}

\textbf{Notation.} We consider an Euclidean space $\mathcal{X} = \mathbb{R}^n$ equipped with Borel sigma algebra $\mathcal{B}$ and a metric $d: \mathcal{X} \times \mathcal{X} \to \mathbb{R}$ (e.g. Euclidean distance). Let $\mathcal{P}$ be the set of probability measures on $(\mathcal{X}, \mathcal{B})$. For $p%
\in \mathcal{P}$, the support of $p$ is denoted by $\supp(p)$ and is defined as the smallest closed set $X \subseteq \mathcal{X}$ such that $p(X) = 1$. 
$f_\sharp p$ denotes the pushforward measure of $p$ induced by a measurable mapping $f$. With a slight abuse of notation, we use $p(x)$ and $[f_\sharp p](t)$ to denote the densities of measures $p$ and $f_\sharp p$ evaluated at $x$ and $t$ respectively, implicitly assuming that the measures are absolutely continuous. The distance between a point $x\in\mathcal{X}$ and a subset $Y\subseteq\mathcal{X}$ is defined as $d(x, Y) = \inf_{y\in Y}d(x, y)$. The symmetric difference of two sets $A$ and $B$ is defined as $A\,\triangle\,B = \left(A \setminus B\right) \cup \left(B \setminus A\right)$.

\subsection{Difference between supports}
\label{sec:diff_supp}
To align the supports of distributions, we first need to evaluate how different they are. Similar to distribution divergences like Jensen–Shannon divergence, %
we introduce a notion of support divergence.
A \emph{support divergence}\footnote{It is not technically a divergence on the space of distributions, since $\mathcal{D}_S(p, q) = 0$ does not imply $p = q$.} between two distributions in $\mathcal{P}$ is a function $\mathcal{D}_S(\cdot, \cdot): \mathcal{P} \times \mathcal{P} \rightarrow \mathbb{R}$ satisfying: 1)
$\mathcal{D}_S(p, q) \geq 0$ for all $p, q \in \mathcal{P}$; 2)
$\mathcal{D}_S(p, q) = 0$ iff $\operatorname{supp}(p) = \operatorname{supp}(q)$.

While a distribution divergence is sensitive to both density and support differences, a support divergence only needs to detect mismatches in supports, which are subsets of the metric space $\mathcal{X}$. An example of a distance between subsets of a metric space is the Hausdorff distance: $d_H(X, Y) = \max\{\sup_{x \in X} d(x, Y), \sup_{y \in Y} d(y, X)\}$. Since it depends only on the greatest distance between a point and a set, minimizing this objective for alignment only provides signal to a single point. To make the optimization less sparse, we consider all points that violate the support alignment criterion and introduce \emph{symmetric support difference} (SSD) divergence:
\begin{equation}
    \label{eq:ssd}
    \mathcal{D}_\triangle(p, q) = \mathbb{E}_{x \sim p}\left[d(x, \operatorname{supp}(q))\right] + \mathbb{E}_{x \sim q}\left[d(x, \operatorname{supp}(p))\right].
\end{equation}

\begin{proposition}
\label{prop:ssd_support}
SSD divergence $\mathcal{D}_\triangle(p, q)$ is a support divergence.
\end{proposition}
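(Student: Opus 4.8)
The plan is to check the two defining properties of a support divergence directly from the definition in \eqref{eq:ssd}. Non-negativity is immediate: for any point $x$ and subset $Y \subseteq \mathcal{X}$ we have $d(x, Y) = \inf_{y \in Y} d(x, y) \ge 0$, so each of the two expectations in \eqref{eq:ssd} is the integral of a non-negative function and is therefore non-negative; hence $\mathcal{D}_\triangle(p, q) \ge 0$. For the ``if'' direction, suppose $\operatorname{supp}(p) = \operatorname{supp}(q) =: S$. Since $p(S) = 1$, the first term equals $\int_S d(x, S)\,dp(x)$, and $d(x, S) = 0$ for every $x \in S$, so this integral vanishes; the second term vanishes by the same argument with $p$ and $q$ interchanged, giving $\mathcal{D}_\triangle(p, q) = 0$.

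For the ``only if'' direction, which is the substantive part, suppose $\mathcal{D}_\triangle(p, q) = 0$. As both summands are non-negative, each must vanish; in particular $\mathbb{E}_{x \sim p}\!\left[d(x, \operatorname{supp}(q))\right] = 0$. I would first observe that $x \mapsto d(x, \operatorname{supp}(q))$ is $1$-Lipschitz, hence Borel measurable, and non-negative, so a vanishing $p$-integral forces $d(x, \operatorname{supp}(q)) = 0$ for $p$-almost every $x$. Since $\operatorname{supp}(q)$ is closed, $d(x, \operatorname{supp}(q)) = 0$ holds exactly when $x \in \operatorname{supp}(q)$, so $p\big(\operatorname{supp}(q)\big) = 1$. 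Because $\operatorname{supp}(q)$ is a closed set of full $p$-measure and $\operatorname{supp}(p)$ is by definition the smallest such set, we get $\operatorname{supp}(p) \subseteq \operatorname{supp}(q)$. Applying the symmetric argument to the second summand, $\mathbb{E}_{x \sim q}\!\left[d(x, \operatorname{supp}(p))\right] = 0$, yields $\operatorname{supp}(q) \subseteq \operatorname{supp}(p)$, and combining the two inclusions gives $\operatorname{supp}(p) = \operatorname{supp}(q)$, as desired.

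The only mildly technical ingredients are the $1$-Lipschitz continuity (hence measurability) of the point-to-set distance function and the elementary fact that $d(x, C) = 0$ iff $x \in C$ for a closed set $C$; both are standard, and the remainder is bookkeeping. I therefore do not anticipate a genuine obstacle — the one place to be careful is to invoke the minimality clause in the definition of $\operatorname{supp}(\cdot)$ to pass from ``$\operatorname{supp}(q)$ has full $p$-measure'' to ``$\operatorname{supp}(p) \subseteq \operatorname{supp}(q)$'', rather than trying to argue the set equality pointwise.
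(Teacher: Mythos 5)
Your proof is correct and follows essentially the same route as the paper's: decompose $\mathcal{D}_\triangle$ into the two one-sided terms and show each vanishes iff the corresponding support inclusion holds, then combine. You are in fact slightly more careful than the paper at the step passing from ``$d(\cdot,\operatorname{supp}(q))=0$ holds $p$-almost everywhere'' to ``$\operatorname{supp}(p)\subseteq\operatorname{supp}(q)$'', where you correctly invoke the closedness of $\operatorname{supp}(q)$ and the minimality clause in the definition of the support rather than asserting the pointwise statement directly.
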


We note that our proposed SSD divergence is closely related to Chamfer distance/divergence (CD) \citep{fan2017point, nguyen2021point} and Relaxed Word Mover’s Distance (RWMD) \citep{kusner2015word}. While both CD and RWMD are stated for discrete points (see Section \ref{sec:related_work} for further comments), SSD divergence is a general difference measure between arbitrary (discrete or continuous) distributions. This distinction, albeit small, is important in our theoretical analysis (Sections \ref{sec:1d_supp}, \ref{sec:theory_connection}).

\subsection{Support alignment in one-dimensional space}
\label{sec:1d_supp}

\citet{goodfellow2014generative} showed that the log-loss discriminator $f: \mathcal{X} \to [0, 1] $, trained to distinguish samples from distributions $p$ and $q$ ($ \sup_f \ \mathbb{E}_{x \sim p} \left[\log f(x)\right] + \mathbb{E}_{x \sim q} \left[\log(1- f(x))\right] $)
can be used to estimate the Jensen–Shannon divergence between $p$ and $q$. The closed form maximizer $f^*$ is
\begin{equation}
    \label{eq:dis_star}
    f^*(x) = \frac{p(x)}{p(x)+q(x)}, \qquad \forall x  \in\operatorname{supp}(p)\cup\operatorname{supp}(q).
\end{equation}

Note that for a point $x \notin \supp(p) \cup \supp(q)$ the value of $f^*(x)$ can be set to an arbitrary value in $[0, 1]$, since the log-loss does not depend on $f(x)$ for such $x$. 
The form of the optimal discriminator (\ref{eq:dis_star}) gives rise to our main theorem below, which characterizes the ability of the log-loss discriminator to identify support misalignment.

\begin{theorem}
\label{thm:1d_supp_ali}
Suppose distributions $p$ and $q$ have densities satisfying
\begin{equation}
    \frac{1}{C} < p(x) < C, \quad \forall\, x \in \supp(p); \qquad \frac{1}{C} < q(x) < C, \quad \forall\, x \in \supp(q). %
    \label{eq:bounded_pdf}
\end{equation}
Let $f^*$ be the optimal discriminator (\ref{eq:dis_star}). Then, $\mathcal{D}_\triangle(p, q) = 0$ if and only if $\mathcal{D}_\triangle({f^*}_\sharp p, {f^*}_\sharp q) = 0$.
\end{theorem}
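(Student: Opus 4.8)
The plan is to strip away the two SSD divergences and reduce to a statement about topological supports. By Proposition~\ref{prop:ssd_support}, $\mathcal{D}_\triangle(p,q)=0$ iff $\supp(p)=\supp(q)$, and likewise $\mathcal{D}_\triangle({f^*}_\sharp p,{f^*}_\sharp q)=0$ iff $\supp({f^*}_\sharp p)=\supp({f^*}_\sharp q)$; so it suffices to prove that $\supp(p)=\supp(q)$ iff $\supp({f^*}_\sharp p)=\supp({f^*}_\sharp q)$. Two preliminary observations. (i) Both $p$ and $q$ are concentrated on $\supp(p)\cup\supp(q)$, so the pushforwards are unaffected by the arbitrary values $f^*$ takes outside that set; on it $f^*(x)=p(x)/(p(x)+q(x))$ is, $p$- and $q$-a.e., a well-defined $[0,1]$-valued measurable function, the denominator being a.e.\ positive by~(\ref{eq:bounded_pdf}). (ii) From~(\ref{eq:bounded_pdf}), any Borel $E\subseteq\supp(p)$ satisfies $\tfrac{1}{C}\,\mathrm{Leb}(E)\le p(E)\le C\,\mathrm{Leb}(E)$, so $p(E)=0\iff\mathrm{Leb}(E)=0$, and similarly for $q$ over $\supp(q)$; hence over the overlap $S:=\supp(p)\cap\supp(q)$ the measures $p$, $q$ and Lebesgue measure all share the same null sets.

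For the ``$\Rightarrow$'' direction, suppose $\supp(p)=\supp(q)=:S$. For every Borel $A\subseteq\mathbb{R}$, the set $\{x:f^*(x)\in A\}$ is $p$-null iff its intersection with $S$ is Lebesgue-null (by (ii), since $p$ lives on $S$), iff it is $q$-null. Hence ${f^*}_\sharp p$ and ${f^*}_\sharp q$ have the same null sets, hence the same closed sets of full measure, hence the same support.

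For ``$\Leftarrow$'' I argue by contraposition. Suppose $\supp(p)\neq\supp(q)$; by the symmetry between $p$ and $q$ (with $0$ and $1$ interchanged) it is enough to treat the case $\supp(p)\not\subseteq\supp(q)$. Pick $x_0\in\supp(p)\setminus\supp(q)$; since $\supp(q)$ is closed there is an open ball $B=\{x:d(x,x_0)<r\}$ disjoint from $\supp(q)$, so $q=0$ a.e.\ on $B$, whence $f^*=1$ at $p$-a.e.\ point of $B$; since $p(B)>0$ (as $x_0\in\supp(p)$) this yields $({f^*}_\sharp p)(\{1\})>0$, so $1\in\supp({f^*}_\sharp p)$. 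On the other hand, $f^*(x)>1-\epsilon$ forces $q(x)<\tfrac{\epsilon}{1-\epsilon}\,p(x)$, whereas~(\ref{eq:bounded_pdf}) gives $q(x)>1/C$ for a.e.\ $x\in\supp(q)$ and $p(x)\le C$ a.e.; choosing $\epsilon>0$ with $\tfrac{\epsilon}{1-\epsilon}\le 1/C^2$ then makes $\{x:f^*(x)>1-\epsilon\}$ a $q$-null set, so ${f^*}_\sharp q$ is concentrated on the closed set $[0,1-\epsilon]$ and $1\notin\supp({f^*}_\sharp q)$. This contradicts $\supp({f^*}_\sharp p)=\supp({f^*}_\sharp q)$, completing the proof.

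The ``$\Leftarrow$'' direction is the crux, and the conceptual point where~(\ref{eq:bounded_pdf}) is indispensable: the support of a pushforward is its \emph{essential range}, not the closure of its image, and without a lower bound on $q$ over $\supp(q)$ (equivalently, a bound on the ratio $q(x)/p(x)$) one could have $f^*$ creeping arbitrarily close to $1$ on a set of positive $q$-mass---for instance if $q$ had a sufficiently thin tail overlapping $\supp(p)$---so that $\supp({f^*}_\sharp p)=\supp({f^*}_\sharp q)$ despite $\supp(p)$ genuinely protruding beyond $\supp(q)$. Everything else is routine bookkeeping with ``almost everywhere'' qualifiers, since the densities in~(\ref{eq:bounded_pdf}) are only determined up to null sets.
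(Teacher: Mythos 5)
Your proof is correct and follows essentially the same route as the paper's: the crux in both is that on $\supp(p)\setminus\supp(q)$ the discriminator attains the extreme value $1$ with positive ${f^*}_\sharp p$-mass, while assumption~(\ref{eq:bounded_pdf}) keeps $f^*$ bounded away from $1$ by the margin $1/(C^2+1)$ on $\supp(q)$, so the pushforward supports must differ. Your forward direction is in fact slightly more careful than the paper's (which asserts $\supp(f_\sharp p)=\supp(f_\sharp q)$ for any $f$ once $\supp(p)=\supp(q)$, a step that really does need the mutual absolute continuity you supply via~(\ref{eq:bounded_pdf})), but this is a refinement rather than a different argument.
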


The idea of the proof is to show that the extreme values ($0$ and $1$) of $f^*(x)$ can only be attained in $x \in \supp(p) \triangle \supp(q)$. Assumption (\ref{eq:bounded_pdf}) guarantees that $f^*(x)$ cannot approach neither $0$ nor $1$ in the intersection of the supports $\supp(p) \cap \supp(q)$, i.e. the values $\{f^*(x) \,\vert\, x \in \supp(p) \cap \supp(q)\}$ are separated from the extreme values $0$ and $1$.

We conclude this section with two technical remarks on Theorem \ref{thm:1d_supp_ali}.

\begin{remark} 
\normalfont
The result of Theorem \ref{thm:1d_supp_ali} does not necessarily hold for other types of discriminators. For instance, the dual Wasserstein discriminator \citep{arjovsky2017wasserstein, gulrajani2017improved} does not always highlight the support difference in the original space as a support difference in the discriminator output space. This observation is formaly stated in the following proposition.

\begin{proposition}
\label{prop:dis_wasserstein}
Let $f^\star_W$ be the maximizer of $\sup_{f: \operatorname{L}(f)\leq1}\mathbb{E}_{x\sim p}[f(x)] - \mathbb{E}_{x\sim q}[f(x)]$, where $\operatorname{L}(\cdot)$ is the Lipschitz constant. There exist $p$ and $q$ with $\operatorname{supp}(p)\neq \operatorname{supp}(q)$ but $\operatorname{supp}({f^\star_W}_\sharp p) = \operatorname{supp}({f^\star_W}_\sharp q)$.
\end{proposition}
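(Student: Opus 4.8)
The plan is to exhibit an explicit pair of one-dimensional distributions $p$ and $q$ for which the Kantorovich potential $f^\star_W$ is a strictly monotone affine map on the relevant region, so that it is injective and hence cannot collapse the symmetric difference of the supports. First I would take $\mathcal{X} = \mathbb{R}$ and choose, say, $p = \mathrm{Uniform}[0,1]$ and $q = \mathrm{Uniform}[2,3]$, so that $\supp(p) = [0,1]$ and $\supp(q) = [2,3]$ are disjoint and $\supp(p) \neq \supp(q)$ manifestly. The point is that for two separated intervals on the line, the optimal transport plan is order-preserving (monotone rearrangement), the Wasserstein-$1$ distance equals the distance between the intervals, and the optimal Kantorovich potential can be taken to be $f^\star_W(x) = x$ (up to an additive constant), which is $1$-Lipschitz and attains the supremum since $\mathbb{E}_{x\sim p}[x] - \mathbb{E}_{x\sim q}[x] = \tfrac12 - \tfrac52 = -2 = -W_1(p,q)$ — wait, the sign should be handled by taking $f^\star_W(x) = -x$ or by swapping $p,q$; either way an affine potential with slope $\pm 1$ is optimal.

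Next I would push forward: if $f^\star_W(x) = x$ then ${f^\star_W}_\sharp p = \mathrm{Uniform}[0,1]$ and ${f^\star_W}_\sharp q = \mathrm{Uniform}[2,3]$, whose supports are still $[0,1]$ and $[2,3]$ — so in this naive version the supports remain different and the proposition fails. The fix is to choose $p$ and $q$ so that the order-preserving potential is constant on one of the supports. Concretely, take $p$ supported on $[0,1]\cup[3,4]$ (e.g. uniform on the union) and $q$ supported on $[0,1]$ alone, or better: let $p = \mathrm{Uniform}[0,2]$ and $q = \mathrm{Uniform}[0,1]$; then mass only needs to move within $[1,2]$, the optimal potential is constant ($f^\star_W \equiv c$) on a large part of the space because no transport is needed there, and in particular on $[0,1]$ the potential can be taken constant. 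Then ${f^\star_W}_\sharp p$ and ${f^\star_W}_\sharp q$ both contain the atom at $c$ coming from the $[0,1]$ portion, and with a careful choice one arranges $\supp({f^\star_W}_\sharp p) = \supp({f^\star_W}_\sharp q)$. The cleanest instance is probably: $p$ uniform on $[0,1]\cup[2,3]$, $q$ uniform on $[0,1]$; the optimal $f^\star_W$ is $1$-Lipschitz, must increase at unit rate across the "transport corridor" but can be taken constant on both $[0,1]$ and $[2,3]$ after the mass from $[2,3]$ is routed — here one must instead verify that some maximizer collapses $[2,3]$ onto the image of $[0,1]$.

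The main obstacle, and the part deserving real care, is that the Wasserstein discriminator is only determined up to the choice among multiple maximizers: the supremum defining $f^\star_W$ is attained on a whole face of the feasible set whenever the transport plan is non-unique or the potential is underdetermined off the "active" region. So the honest statement is "there exist $p,q$ and a maximizer $f^\star_W$ such that ...". I would therefore (i) write down the concrete pair, (ii) exhibit one explicit $1$-Lipschitz $f$ that is constant on $\supp(p)\triangle\supp(q)$ and verify it achieves $\mathbb{E}_p[f] - \mathbb{E}_q[f] = W_1(p,q)$ by exhibiting a matching transport plan and invoking Kantorovich–Rubinstein duality (so $f$ is indeed a maximizer), and (iii) compute the two pushforwards and check their supports coincide. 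The verification in step (ii) — pairing an explicit potential with an explicit coupling of the same cost — is the crux; everything else is bookkeeping. If one wants to avoid the non-uniqueness caveat entirely, an alternative is to pick $p, q$ whose optimal potential is genuinely unique (strictly monotone transport with a "flat" segment forced by the Lipschitz constraint) and is constant exactly on the symmetric-difference region; a gap-separated family like $p = \mathrm{Uniform}[0,1]$, $q=\tfrac12\mathrm{Uniform}[0,1] + \tfrac12\mathrm{Uniform}[10,11]$ makes the forced-flat behavior on $[0,1]$ transparent, since the potential must be nearly constant there relative to the large excursion needed near $[10,11]$.
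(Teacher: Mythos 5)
Your high-level strategy --- pick $p,q$ on the line and exhibit a $1$-Lipschitz maximizer that is non-injective in a way that merges the image of $\supp(p)\,\triangle\,\supp(q)$ into the image of the common support --- is the right one, and your plan to certify optimality by pairing an explicit potential with an explicit coupling via Kantorovich--Rubinstein duality is the correct way to close the argument. The problem is that none of your concrete candidates can pass that verification, because of a structural obstruction you have not accounted for: by complementary slackness, \emph{every} optimal potential $f$ and \emph{every} optimal coupling $\gamma$ satisfy $f(x)-f(y)=|x-y|$ for $\gamma$-a.e.\ $(x,y)$, so $f$ is forced to have slope exactly $\pm 1$ on every transport segment $[x,y]$ (taking $\gamma$ to be the monotone coupling already extracts this constraint). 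In each of your examples the support-difference region actually has to ship or receive mass. For $p=\mathrm{Unif}([0,1]\cup[2,3])$, $q=\mathrm{Unif}[0,1]$ the monotone map sends $x\mapsto x/2$ on $[0,1]$ and $x\mapsto \tfrac12+(x-2)/2$ on $[2,3]$, and the union of the resulting segments covers $(0,3]$, forcing $f(x)=x+c$ there; the same computation kills $p=\mathrm{Unif}[0,2]$ vs.\ $q=\mathrm{Unif}[0,1]$, and in the $[10,11]$ example the segments cover all of $[0,11]$, so the potential has slope $-1$ everywhere there rather than being ``nearly constant'' on $[0,1]$. In all three cases every maximizer is affine and injective on $\supp(p)\cup\supp(q)$, the pushforward supports remain distinct, and the proposition is not witnessed.

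What is missing is a mechanism that lets some maximizer send the support difference onto the image of a region lying in \emph{both} supports. The paper engineers this with a reflection symmetry: it takes $\supp(p)=[-\tfrac12,\tfrac12]\cup[1,2]$ and $\supp(q)=[-2,-1]\cup[-\tfrac12,\tfrac12]\cup[1,2]$ with masses chosen so that the dual objective reduces to $\tfrac14\bigl(-\int_{-2}^{-1}f+2\int_{-1/2}^{1/2}f-\int_{1}^{2}f\bigr)$, which is invariant under $x\mapsto -x$. Averaging any maximizer $f'$ with its reflection gives an even maximizer $f^\star_W$ (the objective is linear and the Lipschitz constant does not increase), and evenness forces $f^\star_W([-2,-1])=f^\star_W([1,2])$, so the extra piece of $\supp(q)$ contributes nothing new to the pushforward support. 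Without some such device --- a symmetry, or an explicit check that the forced-slope transport segments do not cover the support difference --- your step (ii) cannot be completed, so as written the argument has a genuine gap rather than just unfinished bookkeeping.
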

 
\end{remark}

\begin{remark} 
\label{rem:dis_logit}
\normalfont
In practice the discriminator is typically parameterized as $f(x) = \sigma(g(x))$, where $g: \mathcal{X} \to \mathbb{R}$ is realized by a deep neural network and $\sigma(x) = (1 + e^{-x})^{-1}$ is the sigmoid function. The optimization problem for $g$ is
\begin{equation}
    \label{eq:dis_logit}
    \inf_g \ \mathbb{E}_{x \sim p} \left[\log (1 + e^{-g(x)}) \right] + \mathbb{E}_{x \sim q} \left[\log(1 + e^{g(x)})\right], 
\end{equation}
and the optimal solution is $g^*(x) = \log p(x) - \log q(x)$.
Naturally the result of Theorem \ref{thm:1d_supp_ali} holds for $g^*$, since $g^*(x) = \sigma^{-1}(f^*(x))$ and $\sigma$ is a bijective mapping from $\mathbb{R} \cup \{-\infty, \infty\}$
to $[0, 1]$.%
\end{remark}

\section{Adversarial support alignment}
\label{sec:asa}
We consider distributions $p$ and $q$ parameterized by $\theta$: $p^\theta, q^\theta$. The log-loss discriminator $g$ optimized for (\ref{eq:dis_logit}) is parameterized by $\psi$: $g^\psi$. Our analysis in Section \ref{sec:1d_supp} already suggests an algorithm. Namely, we can optimize $\theta$ by minimizing $\mathcal{D}_\triangle({g^\psi}_{\sharp} p^\theta, {g^\psi}_\sharp q^\theta)$ while optimizing $\psi$ by (\ref{eq:dis_logit}). This adversarial game is analogous to the setup of the existing distribution alignment algorithms%
\footnote{Following existing adversarial distribution alignment methods, e.g. \citep{goodfellow2014generative}, we use single update of $\psi$ per $1$ update of $\theta$. While theoretical analysis for both distribution alignment and support alignment (ours) assume optimal discriminators, training with single update of $\psi$ is computationally cheap and effective.}.

In practice, rather than having direct access to $p^\theta,q^\theta$, which is unavailable, we are often given i.i.d. samples $\{x_i^p\}_{i=1}^N , \{x_i^q\}_{i=1}^M$. They form discrete distributions $\hat{p}^\theta(x) = \frac{1}{N}\sum_{i=1}^N\delta(x-x_i^p), \hat{q}^\theta(x) = \frac{1}{M}\sum_{i=1}^M\delta(x-x_i^q)$, and $[{g^\psi}_\sharp\hat{p}^\theta](t) = \frac{1}{N}\sum_{i=1}^N\delta(t-g^\psi(x_i^p)), [{g^\psi}_\sharp\hat{q}^\theta](t) = \frac{1}{M}\sum_{i=1}^M\delta(t-g^\psi(x_i^q))$. Since ${g^\psi}_\sharp\hat{p}^\theta$ and ${g^\psi}_\sharp\hat{q}^\theta$ are discrete distributions, they have supports $\{g^\psi(x_i^p)\}_{i=1}^N$ and $\{g^\psi(x_i^q)\}_{i=1}^M$ respectively. SSD divergence between discrete distributions ${g^\psi}_\sharp\hat{p}^\theta$ and ${g^\psi}_\sharp\hat{q}^\theta$ is
\begin{equation}
    \label{eq:discrete_ssd}
    \mathcal{D}_\triangle({g^\psi}_\sharp\hat{p}^\theta, {g^\psi}_\sharp\hat{q}^\theta) = \frac{1}{N}\sum_{i=1}^N  d\left(g^\psi(x_i^p), \{g^\psi(x_j^q)\}_{j=1}^M\right) + \frac{1}{M}\sum_{i=1}^M d\left(g^\psi(x_i^q), \{g^\psi(x_j^p)\}_{j=1}^N\right).
\end{equation}

\textbf{Effect of mini-batch training.} When training on large datasets, we need to rely on stochastic optimization with mini-batches. We denote the mini-batches (of same size, as in common practice) from $p^\theta$ and $q^\theta$ as $x^p = \{x_i^p\}_{i=1}^m$ and $x^q = \{x_i^q\}_{i=1}^m$ respectively. By minimizing $\mathcal{D}_\triangle(g^\psi(x^p), g^\psi(x^q))$, we only consider the mini-batch support distance rather than the population support distance (\ref{eq:discrete_ssd}). We observe that in practice the described algorithm brings the distributions to a state closer to distribution alignment rather than support alignment (see Appendix \ref{sec:app_history} for details). The problem is in the typically small batch size. The algorithm actually aims to enforce support alignment for all possible pairs of mini-batches, which is a much stricter constraint than population support alignment. %

To address the issue mentioned above, without working with a much larger batch size, we create two ``history buffers'':
$h^p$, storing the previous 1D discriminator outputs of (at most) $n$ samples from $p^\theta$, and a similar buffer $h^q$ for $q^\theta$. Specifically, $h = \{g^{\psi_{\text{old}, i}}(x_{\text{old}, i})\}_{i=1}^n$ stores the values of the previous $n$ samples $x_{\text{old}, i}$ mapped by their corresponding past ``versions'' of the discriminator $g^{\psi_{\text{old},i}}$. We minimize $\mathcal{D}_\triangle(v^p, v^q)$, where $v^p = \operatorname{concat}(h^p, g^\psi(x^p))$, $v^q = \operatorname{concat}(h^q, g^\psi(x^q))$:

\begin{equation}
    \label{eq:ssd_emp_hist}
    \mathcal{D}_\triangle(v^p, v^q) =
    \frac{1}{n+m} \left(\sum_{i=1}^{n+m} d(v^p_i, v^q) + \sum_{j=1}^{n+m} d(v^q_j, v^p)\right).
\end{equation}
Note that $\mathcal{D}_\triangle(\cdot, \cdot)$ between two sets of 1D samples can be efficiently calculated since $d(v^p_i, v^q)$ and $d(v^q_j, v^p)$ are simply 1-nearest neighbor distances in 1D. Moreover the history buffers store only the scalar values from the previous batches. These values are only considered in nearest neighbor assignment but do not directly provide gradient signal for optimization. Thus, the computation overhead of including a long history buffer is very light. We present our full algorithm, \emph{Adversarial Support Alignment} (ASA), in Algorithm \ref{alg:asa}.

\begin{algorithm}[!h]
\caption{Our proposed ASA algorithm. $n$ (maximum history buffer size), we use $n=1000$.}
\label{alg:asa}
\begin{algorithmic}[1]
    \For{number of training steps}
    \State Sample mini-batches $\{x^p_i\}_{i=1}^m \sim p^\theta$,~ $\{x^q_i\}_{i=1}^m \sim q^\theta$.
    \State Perform optimization step on $\psi$ using stochastic gradient\par
    \hskip\algorithmicindent$\nabla_\psi \left(\frac{1}{m} \sum\limits_{i=1}^m \Big[\log (1 + \exp(-g^\psi(x^p_i))) + \log(1 + \exp(g^\psi(x^q_i)))\Big]\right)$.
    \State $v^p \gets \operatorname{concat}(h^p, \{g^\psi(x^p_i)\}_{i=1}^m)$,
    ~ $v^q \gets \operatorname{concat}(h^q, \{g^\psi(x^q_i)\}_{i=1}^m)$.
    \State $\pi_{p\rightarrow q}^i \gets \argmin_jd(v^p_i, v^q_j)$, ~ $\pi_{q\rightarrow p}^j \gets \argmin_id(v^p_i, v^q_j)$.
    \State Perform optimization step on $\theta$ using stochastic gradient\par
    \hskip\algorithmicindent$\nabla_\theta \left(\frac{1}{n+m}\sum\limits_{i=1}^{n+m}\Big[d(v^p_i, v^q_{\pi_{p\rightarrow q}^i}) + d(v^q_i, v^p_{\pi_{q\rightarrow p}^i})\Big]\right)$.
    \State \textsc{UpdateHistory}$(h^p, \{g^\psi(x^p_i)\}_{i=1}^m)$, ~ \textsc{UpdateHistory}$(h^q, \{g^\psi(x^q_i)\}_{i=1}^m)$.
    \EndFor
\end{algorithmic}
\end{algorithm}

\section{Spectrum of notions of alignment}
\label{sec:connection}
In this section, we take a closer look into our work and different existing notions of alignment that have been proposed in the literature, especially their formulations from the optimal transport perspective. We show that our proposed support alignment framework is a limit of existing notions of alignment, both in terms of theory and algorithm, by increasing transportation assignment tolerance.

\subsection{Theoretical connections}
\label{sec:theory_connection}

\textbf{Distribution alignment.}
Wasserstein distance is a commonly used objective for distribution alignment. In our analysis, we focus on the Wasserstein-1 distance:
\begin{equation}
\label{eq:w_dis}
    \mathcal{D}_W(p, q) = \inf_{\gamma\in\Gamma(p, q)}\mathbb{E}_{(x, y)\sim\gamma}[d(x, y)],
\end{equation}
where $\Gamma(p, q)$ is the set of all measures on $\mathcal{X}\times\mathcal{X}$ with marginals of $p$ and $q$, respectively. The value of $\mathcal{D}_W(p, q)$ is the minimal transportation cost for transporting probability mass from $p$ to $q$. The transportation cost is zero if and only if $p=q$, meaning the distributions are aligned.

\textbf{Relaxed distribution alignment.} \citet{wu2019domain} proposed a modified Wasserstein distance to achieve asymmetrically-relaxed distribution alignment, namely $\beta$-admissible Wasserstein distance:
\begin{equation}
    \label{eq:w_beta}
    \mathcal{D}_W^\beta(p, q) = \inf_{\gamma\in\Gamma_\beta(p, q)}\mathbb{E}_{(x, y)\sim\gamma}[d(x, y)],
\end{equation}
where $\Gamma_\beta(p, q)$ is the set of all measures $\gamma$ on $\mathcal{X}\times\mathcal{X}$ such that $\int\gamma(x, y)dy = p(x), \forall x$ and $\int\gamma(x, y)dx \leq (1+\beta)q(y), \forall y$. With the relaxed marginal constraints, one could choose a transportation plan $\gamma$ which transports probability mass from $p$ to a modified distribution $q'$ rather than the original distribution $q$ as long as $q'$ satisfies the constraint $q'(x)\leq(1+\beta)q(x), \forall x$. Therefore, $\mathcal{D}_W^\beta(p, q)$ is zero if and only if $p(x)\leq(1+\beta)q(x), \forall x$. In \citep{wu2019domain}, $\beta$ is normally set to a positive finite number to achieve the asymmetric-relaxation of distribution alignment, and it is shown that $\mathcal{D}_W^{0}(p, q) = \mathcal{D}_W(p, q)$. We can extend $\mathcal{D}_W^\beta(p, q)$ to a symmetric version, which we term $\beta_1,\beta_2$-admissible Wasserstein distance:
\begin{equation}
    \label{eq:ot_alpha_beta}
    \mathcal{D}_W^{\beta_1,\beta_2}(p, q) = \mathcal{D}_W^{\beta_1}(p, q) + \mathcal{D}_W^{\beta_2}(q, p).
\end{equation}
The aforementioned property of $\beta$-admissible Wasserstein distance implies that $\mathcal{D}_W^{\beta_1,\beta_2}(p, q)=0$ if and only if $p(x)\leq(1+\beta_1)q(x), \forall x$ and $q(x)\leq(1+\beta_2)p(x), \forall x$, in which case we call $p$ and $q$ ``$(\beta_1,\beta_2)$-aligned'', with $\beta_1$ and $\beta_2$ controlling the transportation assignment tolerances.

\textbf{Support alignment.}
The term $\mathbb{E}_p[d(x, \supp(q))]$ in (\ref{eq:ssd}) represents the average distance from samples in $p$ to the support of $q$. From the optimal transport perspective, this value is the minimal transportation cost of transporting the probability mass of $p$ into the support of $q$. We show that SSD divergence can be considered as a transportation cost in the limit of infinite assignment tolerance.

\begin{proposition}
\label{prop:support_ot}  $\mathcal{D}_W^{\infty,\infty}(p, q) \coloneqq \lim_{\beta_1, \beta_2 \to \infty} \mathcal{D}_W^{\beta_1,\beta_2}(p, q) = \mathcal{D}_\triangle(p, q)$.
\end{proposition}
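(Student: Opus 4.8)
The plan is to reduce to an asymmetric statement and then sandwich it between a matching lower and upper bound. Since $\mathcal{D}_W^{\beta_1,\beta_2}(p,q) = \mathcal{D}_W^{\beta_1}(p,q) + \mathcal{D}_W^{\beta_2}(q,p)$, with the two terms depending on $\beta_1$ and $\beta_2$ separately, it suffices to prove $\lim_{\beta\to\infty}\mathcal{D}_W^\beta(p,q) = \mathbb{E}_{x\sim p}[d(x,\supp(q))]$; applying this with $p$ and $q$ interchanged and adding then yields $\mathcal{D}_\triangle(p,q)$ via (\ref{eq:ssd}). Write $S := \supp(q)$. Observe first that $\beta \le \beta'$ implies $\Gamma_\beta(p,q) \subseteq \Gamma_{\beta'}(p,q)$, so $\beta \mapsto \mathcal{D}_W^\beta(p,q)$ is nonincreasing and bounded below by $0$; hence the limit exists and equals $\inf_{\beta \ge 0}\mathcal{D}_W^\beta(p,q)$. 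Throughout I assume $p,q$ have finite first moments, the standing setting whenever $\mathcal{D}_W$ is used; if $\mathbb{E}_p[d(\cdot,S)] = \infty$ the claim is immediate from the lower bound, so assume it is finite.

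\emph{Lower bound.} Fix $\beta \ge 0$ and $\gamma \in \Gamma_\beta(p,q)$, and let $\nu$ be the second marginal of $\gamma$. The admissibility constraint gives $\nu \le (1+\beta)q$, hence $\nu \ll q$ and $\supp(\nu) \subseteq \supp(q) = S$. Thus $\gamma$ is concentrated on $\mathcal{X} \times S$, so $\mathbb{E}_{(x,y)\sim\gamma}[d(x,y)] \ge \mathbb{E}_{(x,y)\sim\gamma}[d(x,S)] = \mathbb{E}_{x\sim p}[d(x,S)]$, the last equality because the first marginal of $\gamma$ is $p$. Taking the infimum over $\gamma$ gives $\mathcal{D}_W^\beta(p,q) \ge \mathbb{E}_{x\sim p}[d(x,S)]$ for every $\beta$, hence $\lim_{\beta\to\infty}\mathcal{D}_W^\beta(p,q) \ge \mathbb{E}_{x\sim p}[d(x,S)]$.

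\emph{Upper bound.} Fix $\eta > 0$; I will produce a finite $\beta$ and a $\gamma \in \Gamma_\beta(p,q)$ of cost at most $\mathbb{E}_p[d(\cdot,S)] + \eta$. Choosing a measurable almost-projection $T : \mathcal{X} \to S$ with $d(x,T(x)) \le d(x,S) + \eta$ (Kuratowski--Ryll-Nardzewski selection for the closed-valued multifunction $x \mapsto \{s \in S : d(x,s) \le d(x,S)+\eta\}$), and truncating $p$ to a large compact set (permissible since $p$ has finite first moment, which also forces $T(\cdot)$ to land in a bounded region on that set), one obtains a \emph{compactly supported} probability measure $\mu$ with $\supp(\mu) \subseteq S$ and $\mathcal{D}_W(p,\mu) \le \mathbb{E}_p[d(\cdot,S)] + \eta/2$. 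Cover the compact set $\supp(\mu)$ by finitely many balls $B(z_1,\delta),\dots,B(z_K,\delta)$ with centers $z_k \in S$, disjointify them into a measurable partition $A_1,\dots,A_K$ of $\bigcup_k B(z_k,\delta)$, and set $\tilde\mu := \sum_{k=1}^K \mu(A_k)\, q|_{B(z_k,\delta)} / q(B(z_k,\delta))$, which is well defined since $z_k \in \supp(q)$ forces $q(B(z_k,\delta)) > 0$ and $\bigcup_k A_k \supseteq \supp(\mu)$ makes $\tilde\mu$ a probability measure. Coupling each $x \in A_k$ with a draw from $q|_{B(z_k,\delta)} / q(B(z_k,\delta))$ moves mass a distance at most $2\delta$, so $\mathcal{D}_W(\mu,\tilde\mu) \le 2\delta$; and the density of $\tilde\mu$ with respect to $q$ at any point is a sum of at most $K$ terms $\mu(A_k)/q(B(z_k,\delta))$, hence bounded by a finite constant, which we call $1+\beta$. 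Consequently $\tilde\mu \le (1+\beta)q$, any optimal coupling of $p$ and $\tilde\mu$ belongs to $\Gamma_\beta(p,q)$, and $\mathcal{D}_W^\beta(p,q) \le \mathcal{D}_W(p,\tilde\mu) \le \mathcal{D}_W(p,\mu) + 2\delta \le \mathbb{E}_p[d(\cdot,S)] + \eta/2 + 2\delta \le \mathbb{E}_p[d(\cdot,S)] + \eta$ once $\delta$ is small enough. Since $\eta$ was arbitrary, $\inf_{\beta}\mathcal{D}_W^\beta(p,q) \le \mathbb{E}_p[d(\cdot,S)]$, matching the lower bound and giving $\lim_{\beta\to\infty}\mathcal{D}_W^\beta(p,q) = \mathbb{E}_{x\sim p}[d(x,\supp(q))]$.

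The delicate point — and the only real obstacle — is the upper bound: the second marginal of the coupling must be dominated by $(1+\beta)q$ for a \emph{finite} $\beta$, whereas the naive push-forward $T_\sharp p$ of $p$ toward $S$ can be singular with respect to $q$ (e.g.\ atomic when $q$ is diffuse), so no finite $\beta$ accommodates it. The remedy is to smear each parcel of mass across a small ball and re-weight it by the restriction of $q$ to that ball; the fact that \emph{finitely many} balls suffice — which is why truncating $p$ to a compact set first is needed — is exactly what keeps the resulting $q$-density bounded. The remaining ingredients (monotonicity in $\beta$, the support-containment argument for the lower bound, and the truncation and measurable-selection reductions) are routine.
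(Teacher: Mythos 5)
Your proof is correct, and it is actually more careful than the one in the paper. Both arguments use the same decomposition into the two asymmetric terms via (\ref{eq:ot_alpha_beta}) and both identify the limiting problem as ``transport $p$ into $\supp(q)$,'' but they diverge at the crucial step. The paper passes directly to the limiting constraint set $\Gamma_\infty(p,q)$ (couplings whose second marginal is merely supported in $\supp(q)$) and exhibits the nearest-point coupling $\gamma^*(x,y) = p(x)\,\delta(y - y^*(x))$ as optimal there; this is clean but implicitly interchanges $\lim_{\beta\to\infty}$ with $\inf_\gamma$, and the witness $\gamma^*$ generally lies in no $\Gamma_\beta$ with $\beta$ finite (its second marginal can be singular with respect to $q$), so the paper's argument really only computes $\inf_{\Gamma_\infty}$ rather than the limit $\lim_\beta \inf_{\Gamma_\beta}$ that the statement asserts. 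You instead prove the limit statement directly by a sandwich: your lower bound (second marginal dominated by $(1+\beta)q$ forces it onto $\supp(q)$, hence cost at least $\mathbb{E}_p[d(\cdot,\supp(q))]$) is essentially the easy half that the paper also uses implicitly, and your upper bound -- truncate, almost-project onto $\supp(q)$ via a measurable selection, then smear each parcel over finitely many small balls re-weighted by $q$ so that the target measure has bounded $q$-density -- supplies genuine elements of $\Gamma_\beta$ at finite $\beta$ with cost arbitrarily close to $\mathbb{E}_p[d(\cdot,\supp(q))]$. This is exactly the missing justification for the paper's limit interchange, and you correctly flag it as the only non-routine point. The only caveats are cosmetic: the finite-first-moment hypothesis you invoke is implicit but unstated in the paper, and the Kuratowski--Ryll-Nardzewski selection is cleanest if applied to the closure of the strict-inequality multifunction $\{s \in S : d(x,s) < d(x,S) + \eta\}$ to get weak measurability with closed values; neither affects correctness.
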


We now have completed the spectrum of alignment objectives defined within the optimal transport framework. The following proposition establishes the relationship within the spectrum.

\begin{proposition}
\label{prop:ot_hierarchy}
Let $p$ and $q$ be two distributions in $\mathcal{P}$. Then,
\begin{enumerate}
    \item $\mathcal{D}_W(p, q) = 0$ implies $\mathcal{D}_W^{\beta_1,\beta_2}(p, q) = 0$ for all finite $\beta_1,\beta_2 >0$.
    \item $\mathcal{D}_W^{\beta_1,\beta_2}(p, q) = 0$ for some finite $\beta_1,\beta_2 >0$ implies $\mathcal{D}_\triangle(p, q)=0$.
    \item The converse of statements 1 and 2 are false.
\end{enumerate}
\end{proposition}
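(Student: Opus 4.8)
The plan is to treat the three statements in order, using the zero-cost characterizations already derived for each object in Section~\ref{sec:theory_connection}. Recall: $\mathcal{D}_W(p,q)=0 \iff p=q$; $\mathcal{D}_W^{\beta_1,\beta_2}(p,q)=0 \iff p(x)\le(1+\beta_1)q(x)$ and $q(x)\le(1+\beta_2)p(x)$ for all $x$ (equivalently, in terms of supports, $\supp(p)\subseteq\supp(q)$ and $\supp(q)\subseteq\supp(p)$ is \emph{implied} but is strictly weaker — the density inequalities are the real condition); and by Proposition~\ref{prop:ssd_support}, $\mathcal{D}_\triangle(p,q)=0 \iff \supp(p)=\supp(q)$.

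For statement~1: if $\mathcal{D}_W(p,q)=0$ then $p=q$, so $p(x)=q(x)\le(1+\beta_1)q(x)$ and $q(x)=p(x)\le(1+\beta_2)p(x)$ for every $x$ whenever $\beta_1,\beta_2>0$ (indeed $\ge 0$); hence $\mathcal{D}_W^{\beta_1,\beta_2}(p,q)=0$ by the characterization above. For statement~2: suppose $\mathcal{D}_W^{\beta_1,\beta_2}(p,q)=0$ for some finite $\beta_1,\beta_2>0$. Then $p(x)\le(1+\beta_1)q(x)$ for all $x$, which forces $\{x : p(x)>0\}\subseteq\{x : q(x)>0\}$; taking closures gives $\supp(p)\subseteq\supp(q)$. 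Symmetrically $\supp(q)\subseteq\supp(p)$, so $\supp(p)=\supp(q)$ and therefore $\mathcal{D}_\triangle(p,q)=0$. (Here I would be slightly careful that ``$p(x)\le(1+\beta)q(x)$ a.e.'' really gives containment of supports; this is routine since supports are defined as closures of the essential-positivity sets, and a null set cannot enlarge a closed support.)

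For statement~3 I must produce two counterexamples on $\mathcal{X}=\mathbb{R}$. For the converse of~2: take $p=\mathrm{Uniform}[0,1]$ and $q$ with density proportional to a function on $[0,1]$ that is bounded but whose ratio to $1$ is unbounded — e.g. $q$ with density $c\,x^{-1/2}$ on $(0,1]$. Then $\supp(p)=\supp(q)=[0,1]$, so $\mathcal{D}_\triangle(p,q)=0$, but $p(x)/q(x)=x^{1/2}/c\to 0$ and $q(x)/p(x)\to\infty$ as $x\to 0^+$, so no finite $\beta_2$ satisfies $q(x)\le(1+\beta_2)p(x)$; hence $\mathcal{D}_W^{\beta_1,\beta_2}(p,q)>0$ for all finite $\beta_1,\beta_2$. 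For the converse of~1: any $p\ne q$ with the density ratios bounded both ways works — e.g. $p=\mathrm{Uniform}[0,1]$ and $q$ with density $\tfrac{1}{2}+x$ on $[0,1]$; then $\mathcal{D}_W^{\beta_1,\beta_2}(p,q)=0$ for, say, $\beta_1=\beta_2=1$ (since $\tfrac12+x\le 2$ and $1\le 2(\tfrac12+x)$ on $[0,1]$), yet $p\ne q$ so $\mathcal{D}_W(p,q)>0$.

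The only genuinely delicate point — the ``main obstacle'', such as it is — is justifying the passage between the density-inequality form and the support form of the conditions in statement~2 (and in the first counterexample of statement~3), i.e. that a one-sided density bound failing to hold with any finite constant does not, through some measure-zero artifact, accidentally still give zero relaxed-Wasserstein cost. This is handled by appealing directly to the characterization of $\mathcal{D}_W^{\beta}$ given in the text (``$\mathcal{D}_W^\beta(p,q)=0$ iff $p(x)\le(1+\beta)q(x)\ \forall x$''), applied with the chosen explicit densities where the inequality visibly fails on a set of positive measure near $0$; no further machinery is needed.
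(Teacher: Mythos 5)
Your proofs of statements 1 and 2 are correct and follow essentially the same route as the paper's: reduce everything to the zero-cost characterization $\tfrac{1}{1+\beta_2}\le p(x)/q(x)\le 1+\beta_1$ on $\supp(p)\cup\supp(q)$, observe that $p=q$ satisfies it for any positive $\beta_1,\beta_2$, and that it forces mutual containment of supports. The genuine difference is in statement 3, where the paper uses two-atom discrete distributions and you use continuous densities on $[0,1]$. Your counterexample to the converse of statement 2 (uniform $p$ versus $q\propto x^{-1/2}$) is in one respect stronger than the paper's: it is a single pair with equal supports for which $\mathcal{D}_W^{\beta_1,\beta_2}(p,q)>0$ for \emph{every} finite $\beta_1,\beta_2$ simultaneously, because the density ratio is genuinely unbounded near $0$; the paper instead chooses the parameter $\varepsilon$ of its two-point example \emph{after} $\beta_2$ is fixed, and indeed no finitely supported pair with equal supports could defeat all tolerances at once since its density ratio is automatically bounded. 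Conversely, your counterexample to the converse of statement 1 is exhibited only for $\beta_1=\beta_2=1$, whereas the paper constructs, for each fixed $(\beta_1,\beta_2)$, a $(\beta_1,\beta_2)$-aligned pair with $p\neq q$; your construction generalizes immediately by taking $q$ sufficiently close to uniform, so this is presentational rather than substantive. Your side remark about passing from a.e.\ density inequalities to containment of (closed) supports is a point the paper also elides, and your handling of it is adequate.
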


In addition to the result presented in Theorem \ref{thm:1d_supp_ali}, we can show that the log-loss discriminator can also ``preserve'' the existing notions of alignment.

\begin{proposition}
\label{prop:1d_ali}
Let $f^*$ be the optimal discriminator (\ref{eq:dis_star}) for given distributions $p$ and $q$. Then,
\begin{center}
\begin{enumerate*}
    \item $\mathcal{D}_W(p, q) = 0$ iff $\mathcal{D}_W({f^*}_\sharp p, {f^*}_\sharp q) = 0$; \qquad \quad
    \item $\mathcal{D}^{\beta_1, \beta_2}_W(p, q) = 0$ iff $\mathcal{D}^{\beta_1, \beta_2}_W({f^*}_\sharp p, {f^*}_\sharp q) = 0$.
\end{enumerate*}
\end{center}
\end{proposition}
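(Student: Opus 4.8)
\textbf{Proof proposal for Proposition~\ref{prop:1d_ali}.}

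The plan is to reduce both statements to a single structural fact about the optimal log-loss discriminator: the map $f^*$ separates the three regions $\supp(p)\setminus\supp(q)$, $\supp(q)\setminus\supp(p)$, and $\supp(p)\cap\supp(q)$ by mapping them into $\{1\}$, $\{0\}$, and a subset of $(0,1)$ respectively. This is exactly the observation already used in the sketch of Theorem~\ref{thm:1d_supp_ali} (from (\ref{eq:dis_star}), $f^*(x)=1$ iff $q(x)=0$, i.e. $x\notin\supp(q)$, and $f^*(x)=0$ iff $x\notin\supp(p)$), except here I will not need the boundedness assumption (\ref{eq:bounded_pdf}) because the two statements only concern the \emph{densities} of the pushforwards rather than proximity to the extreme values. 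The key point I would establish first is: $[f^*_\sharp p](t) \le (1+\beta)[f^*_\sharp q](t)$ for all $t$ if and only if $p(x)\le(1+\beta)q(x)$ for almost all $x$, and symmetrically. Once this is in place, statement~1 follows from the $\beta_1=\beta_2=0$ case together with the fact (recalled in the excerpt) that $\mathcal{D}_W^{0,0}=\mathcal{D}_W$ and $\mathcal{D}_W(\mu,\nu)=0\iff\mu=\nu$, and statement~2 follows from the characterization of $\mathcal{D}_W^{\beta_1,\beta_2}=0$ given right after (\ref{eq:ot_alpha_beta}).

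To prove the density-comparison equivalence, I would push measures through $f^*$ by a change of variables / disintegration argument. Write $\mathcal{X} = S_p^\circ \cup S_q^\circ \cup S_\cap$ where $S_p^\circ=\supp(p)\setminus\supp(q)$, $S_q^\circ=\supp(q)\setminus\supp(p)$, $S_\cap=\supp(p)\cap\supp(q)$ (up to $p+q$-null sets). On $S_\cap$, $f^*$ takes values in $(0,1)$, and for any Borel $A\subseteq(0,1)$ we have $[f^*_\sharp p](A) = p(f^{*-1}(A)\cap S_\cap)$ and likewise for $q$; the condition $p\le(1+\beta)q$ on $S_\cap$ then transfers directly to the pushforward densities on $(0,1)$ by monotonicity of the pushforward under the pointwise inequality of the measures. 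At $t=1$: $[f^*_\sharp p](\{1\}) = p(S_p^\circ)$ while $[f^*_\sharp q](\{1\})=0$, so the inequality $[f^*_\sharp p](\{1\})\le(1+\beta)[f^*_\sharp q](\{1\})$ forces $p(S_p^\circ)=0$, which is precisely the statement that $p$ is absolutely continuous with respect to $q$ near that region; conversely if $p(x)\le(1+\beta)q(x)$ everywhere then $p(S_p^\circ)=0$ automatically. The atom at $t=0$ is handled symmetrically. Combining the three regions gives the equivalence; for the symmetric role of $q$ versus $p$ I would simply swap the two arguments and use that $f^*$ for $(q,p)$ is $1-f^*$ for $(p,q)$, a monotone bijection, so pushforwards transform compatibly.

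The main obstacle is the measure-theoretic bookkeeping at the two atoms $t\in\{0,1\}$: one must be careful that $f^*$ is only defined $p+q$-almost everywhere (the excerpt notes $f^*$ is arbitrary off $\supp(p)\cup\supp(q)$), that $f^{*-1}(\{1\})$ really does coincide (mod null sets) with $\supp(p)\setminus\supp(q)$ rather than something larger, and that the pushforward of an absolutely continuous measure may fail to be absolutely continuous precisely because of these atoms — so the ``density'' notation $[f^*_\sharp p](t)$ must be read as including atomic parts, and the admissibility constraint in (\ref{eq:w_beta}) must be checked in its measure form $\int\gamma(x,y)\,dx \le (1+\beta)q(y)$ understood as a statement about measures. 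I would therefore phrase the whole argument at the level of measures (comparing $f^*_\sharp p$ and $(1+\beta)f^*_\sharp q$ as measures on $[0,1]$, i.e. $f^*_\sharp p(B)\le (1+\beta)f^*_\sharp q(B)$ for all Borel $B$) rather than densities, which makes the atoms harmless and the pushforward step a one-line monotonicity argument. Everything else — invoking $\mathcal{D}_W^{0,0}=\mathcal{D}_W$, the zero-sets of $\mathcal{D}_W$ and $\mathcal{D}_W^{\beta_1,\beta_2}$ — is quoted directly from the surrounding text.
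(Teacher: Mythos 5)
Your high-level plan is sound, and the equivalence you single out as the ``key point'' --- $f^*_\sharp p \le (1+\beta)\, f^*_\sharp q$ as measures if and only if $p \le (1+\beta)\, q$ --- is indeed true and does yield both statements once combined with the zero-set characterizations of $\mathcal{D}_W$ and $\mathcal{D}_W^{\beta_1,\beta_2}$. The problem is that your sketch only proves one direction of that equivalence. Monotonicity of the pushforward gives you ``$p \le (1+\beta) q$ implies $f^*_\sharp p \le (1+\beta) f^*_\sharp q$'' (and your atom analysis at $t\in\{0,1\}$ correctly recovers $p(S_p^\circ)=0$ from the pushforward inequality), but on the interior region $S_\cap$ you never argue the converse: that $[f^*_\sharp p](t) \le (1+\beta)[f^*_\sharp q](t)$ on $(0,1)$ forces $p(x)\le(1+\beta)q(x)$ on $S_\cap$. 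This converse is false for a general measurable map --- a constant map $f\equiv c$ sends every pair $(p,q)$ to identical pushforwards $\delta_c$, so the pushforward inequality holds vacuously while the pointwise one can fail arbitrarily. It holds for $f^*$ only because of a specific structural property: every level set $\{x : f^*(x)=t\}$ is a set of \emph{constant} density ratio $p(x)/q(x)=t/(1-t)$, so the pushforward density ratio at $t$ is not an average over mixed ratios but reproduces the pointwise ratio exactly. This is precisely the paper's Proposition \ref{prop:dis_ratio}, $[{f^*}_\sharp p](t)\big/\bigl([{f^*}_\sharp p](t)+[{f^*}_\sharp q](t)\bigr)=t$ on $\supp({f^*}_\sharp p)\cup\supp({f^*}_\sharp q)$, and it is the load-bearing lemma in the paper's proof of statement~2; your proposal neither states nor proves it, so the ``only if'' directions of both statements remain open in your write-up.

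Two smaller remarks. First, for statement~1 the paper avoids the full density-ratio lemma by a more elementary device: it compares $\mathbb{P}_{{f^*}_\sharp p}(\{t>1/2\})$ with $\mathbb{P}_{{f^*}_\sharp q}(\{t>1/2\})$, rewrites both as integrals of $\mathbb{I}[p(x)-q(x)>0]$ against $p$ and $q$, and concludes $p\le q$ (hence $p=q$) from nonnegativity of $\mathbb{I}[p-q>0](p-q)$; if you want a self-contained proof of statement~1 without the ratio lemma, that is the shortcut to use. Second, your measure-level framing of the $\beta$-admissibility constraint and your care about the atoms and the $p+q$-null ambiguity of $f^*$ are good instincts and slightly cleaner than the paper's density-based bookkeeping --- but they do not substitute for the missing converse step.
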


\subsection{Algorithmic connections}
\label{sec:algorithm_connection}

The result of Proposition \ref{prop:1d_ali} suggests methods similar to our ASA algorithm presented in Section \ref{sec:asa} can achieve different notions of alignment by minimizing objectives discussed in Section \ref{sec:theory_connection} between the 1D pushforward distributions. We consider the setup used in Section \ref{sec:asa} but without history buffers to simplify the analysis, as their usage is orthogonal to our discussion in this section.

Recall that we work with a mini-batch setting, where $\{x^p_i\}_{i=1}^m$ and $\{x^q_i\}_{i=1}^m$ are sampled from $p$ and $q$ respectively, and $g$ is the adversarial log-loss discriminator. We denote the corresponding 1D outputs from the log-loss discriminator by $o^p = \{o^p_i\}_{i=1}^m = \{g(x^p_i)\}_{i=1}^m$ and $o^q$ (defined similarly).

\textbf{Distribution alignment.} 
We adapt (\ref{eq:w_dis}) for $\{o^p_i\}_{i=1}^m$ and $\{o^q_i\}_{i=1}^m$:
\begin{equation}
\label{eq:emp_w_dis}
    \mathcal{D}_W(o^p, o^q) = \inf_{\gamma\in\Gamma(o^p, o^q)}\frac{1}{m}\sum_{i=1}^m\sum_{j=1}^m \gamma_{ij}d(o^p_i, o^q_j),
\end{equation}
where $\Gamma(o^p, o^q)$ is the set of $m \times m$ doubly stochastic matrices. Since $o^p$ and $o^q$ are sets of 1D samples with the same size, it can be shown \citep{rabin2011wasserstein} that the optimal $\gamma^*$ corresponds to an assignment $\pi^*$, which pairs points in the sorting order and can be computed efficiently by sorting both sets $o^p$ and $o^q$. The transportation cost is zero if and only if there exists an invertible \underline{$1$-to-$1$} assignment $\pi^*$ such that $o^p_i = o^q_{\pi^*(i)}$. GAN training algorithms proposed in \citep{deshpande2018generative, deshpande2019max} utilize the above sorting procedure to estimate the maximum sliced Wasserstein distance.

\textbf{Relaxed distribution alignment.}
Similarly, we can adapt (\ref{eq:w_beta}):
\begin{equation}
\label{eq:emp_w_beta}
    \mathcal{D}_W^\beta(o^p, o^q) = \inf_{\gamma\in\Gamma_\beta(o^p, o^q)}\frac{1}{m}\sum_{i=1}^m\sum_{j=1}^m \gamma_{ij}d(o^p_i, o^q_j),
\end{equation}
where $\Gamma_\beta(o^p, o^q)$ is the set of $m \times m$ matrices with non-negative real entries, such that $\sum_{j=1}^m\gamma_{ij}=1, \forall i$ and $\sum_{i=1}^m\gamma_{ij}\leq 1+\beta, \forall j$. The optimization goal in (\ref{eq:emp_w_beta}) is to find a ``soft-assignment'' $\gamma$ which describes the transportation of probability mass from points $o^p_i$ in $o^p$ to points $o^q_i$ in $o^q$. The parameter $\beta$ controls the set of admissible assignments $\Gamma_\beta$, which is similar to its role discussed in Section~\ref{sec:theory_connection}: with transportation assignment tolerance $\beta$, the total mass of points in $o^p$ transported to each of the points $o^q_i$ cannot exceed $1+\beta$. We refer to such assignments as \underline{$(\beta+1)$-to-$1$} assignment%
. The transportation cost is zero if and only if there exists such an assignment between $o^p$ and $o^q$.

It can be shown (see Appendix \ref{sec:app_assignment}) that for integer value of $\beta$, the set of minimizers of (\ref{eq:emp_w_beta}) must contain a ``hard-assignment'' transportation plan, which assigns each point $o^p_i$ to exactly one point $o^q_j$. Then $(1 + \beta)$ gives the upper bound on the number of points $o^p_i$ that can be transported to given point $o^q_j$. 
This hard assignment problem can be solved quasi-linearly with worst case time complexity $\mathcal{O}\left((\beta+1)m^2\right)$ \citep{bonneel2019spot}, which, combined with Proposition \ref{prop:1d_ali}, can lead to new algorithms for relaxed distribution alignment besides those proposed in \citet{wu2019domain}.

\textbf{Support alignment.} When $\beta = \infty$, the sum $\sum_{i=1}^m\gamma_{ij}$ is unconstrained for all $j$,
and each point $o^p_i$ can be assigned to any of the points $o^q_j$. The optimal solution is simply 1-nearest neighbor assignment, or to follow the above terminology, \underline{$\infty$-to-$1$} assignment.

\section{Experiments}
\label{sec:experiment}
\captionsetup[subfloat]{justification=centering}
\begin{figure}[!t]
    \centering
    \newcommand\figvis{1.26in}
    \subfloat[No DA (avg acc: $63 \%$)\\$\mathcal{D}_W(p^\theta_Z, q^\theta_Z) = 0.78$ \\ $\mathcal{D}_\triangle(p^\theta_Z, q^\theta_Z) =  0.10$]{\includegraphics[height=\figvis]{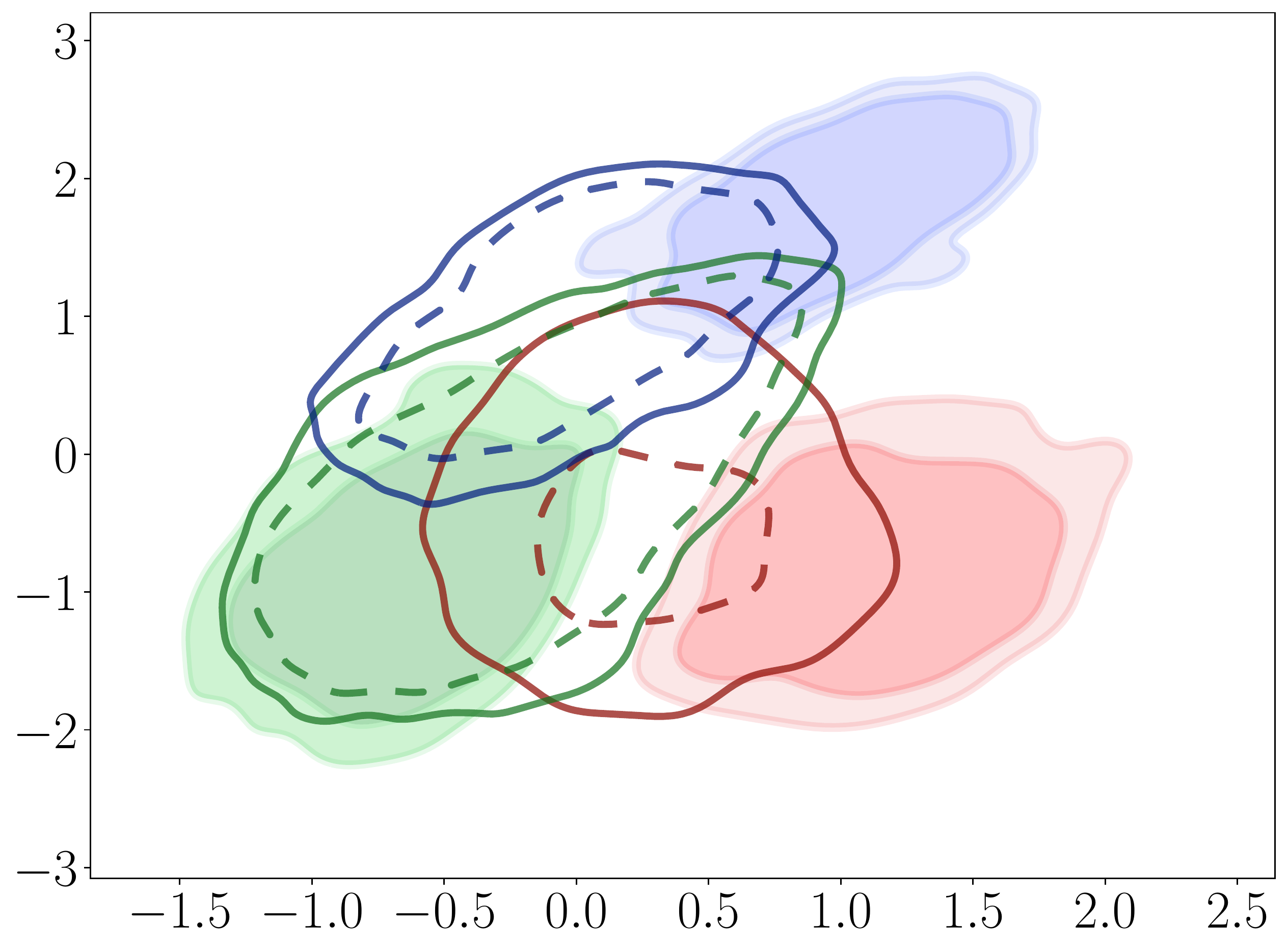}}\quad
    \subfloat[DANN (avg acc: $75 \%$)\\$\mathcal{D}_W(p^\theta_Z, q^\theta_Z) = 0.07$ \\ $\mathcal{D}_\triangle(p^\theta_Z, q^\theta_Z) =  0.02$]{\includegraphics[height=\figvis]{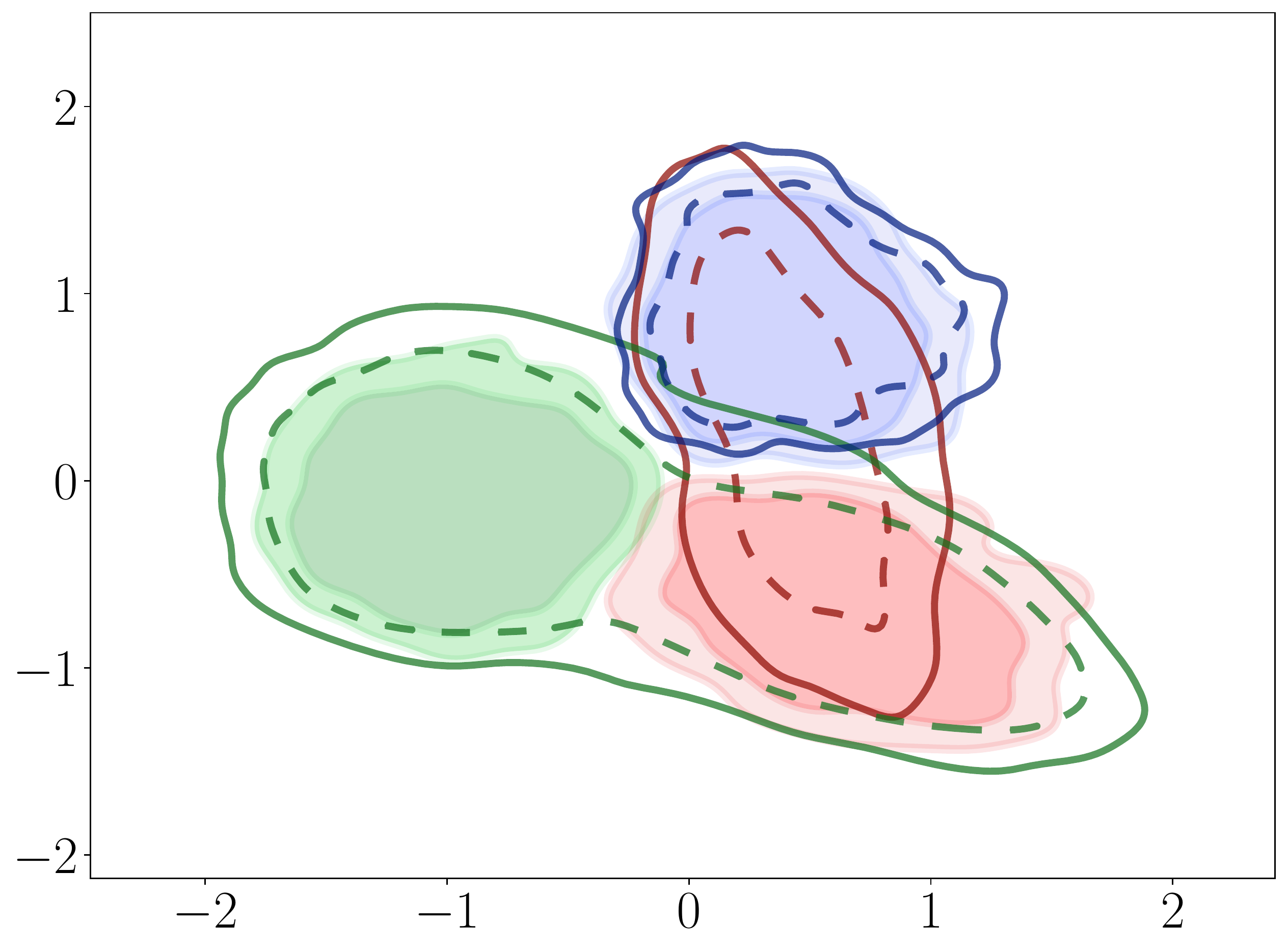}}\quad
    \subfloat[ASA-abs (avg acc: $94 \%$)\\$\mathcal{D}_W(p^\theta_Z, q^\theta_Z) = 0.59$ \\ $\mathcal{D}_\triangle(p^\theta_Z, q^\theta_Z) =  0.03$]{\includegraphics[height=\figvis]{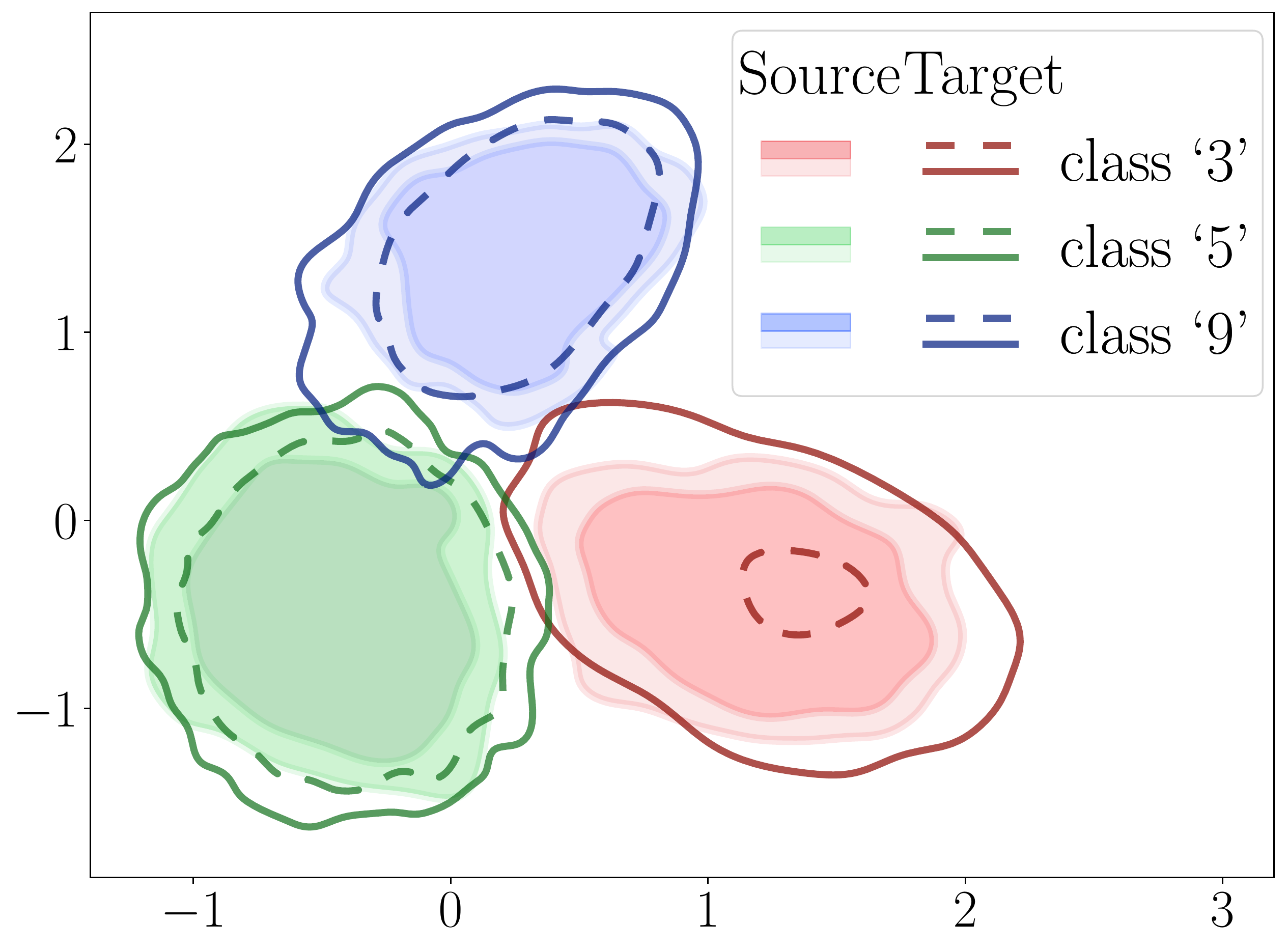}}
    \caption{Visualization of learned 2D embeddings on 3-class USPS$\to$MNIST with label distribution shift. In source domain, all classes have equal probability $\frac{1}{3}$. The target probabilities of classes `3', `5', `9' are $[23\%, 65\%, 12\%]$.
    Each panel shows 2 level sets (outer one approximates the support) of the kernel density estimates of embeddings in source (filled regions) and target domains (solid/dashed lines). We report the average class accuracy of the target domain, $\mathcal{D}_W$ and $\mathcal{D}_\triangle$ between embeddings.}
    \label{fig:vis_2d}
\end{figure}

\textbf{Problem setting.} We evaluate our proposed ASA method in the setting of unsupervised domain adaptation (UDA). The goal of UDA algorithms is to train and ``adapt'' a classification model $M: \mathcal{X} \to \mathcal{Y}$ from source domain distribution $p_{X, Y}$ to target domain distribution $q_{X, Y}$ given the access to a labeled source dataset $\{x^p_i, y^p_i\}_{i=1}^{N^p} \sim p_{X, Y}$ and an unlabeled target dataset $\{x^q_i\}_{i=1}^{N^q} \sim q_{X}$.

A common approach for UDA is to represent $M$ as $C^\phi \circ F^\theta$: a classifier $C^\phi: \mathcal{Z} \to \mathcal{Y}$ and
a feature extractor $F^\theta: \mathcal{X} \to \mathcal{Z}$, and train $C^\phi$ and $F^\theta$ by minimizing: 1) classification loss $\ell_\text{cls}$ on source examples; 2) alignment loss $\mathcal{D}_{\text{align}}$ measuring discrepancy between $p^\theta_Z = {F^\theta}_\sharp p_X$ and $q^\theta_Z = {F^\theta}_\sharp q_X$:
\begin{equation}
    \label{eq:da}
    \min_{\phi, \theta} \;\; \frac{1}{N^p}\sum\limits_{i=1}^{N^p} \ell_\text{cls}(C^\phi(F^\theta(x^p_i)), y^p_i) + \lambda \cdot \mathcal{D}_{\text{align}}\left(\{F^\theta(x^p_i)\}_{i=1}^{N^p}, \{F^\theta(x^q_i)\}_{i=1}^{N^q}\right),
\end{equation}
In practice $\mathcal{D}_{\text{align}}$ is an estimate of a divergence measure via an adversarial discriminator $g^\psi$. Choices of $\mathcal{D}_\text{align}$ include f-divergences \citep{ganin2016domain, nowozin2016f} and Wasserstein distance \citep{arjovsky2017wasserstein} to enforce distribution alignment and versions of re-weighted/relaxed distribution divergences \citep{wu2019domain, tachet2020domain} to enforce relaxed distribution alignment. For support alignment, we apply the proposed ASA method as the alignment subroutine in (\ref{eq:da}) with log-loss discriminator $g^\psi$ (\ref{eq:dis_logit}) and $\mathcal{D}_\text{align}$ computed as (\ref{eq:ssd_emp_hist}).

\textbf{Task specifications.} We consider 3 UDA tasks: USPS$\to$MNIST, STL$\to$CIFAR, and VisDA-2017, %
and 2 versions of ASA: \textbf{ASA-sq}, \textbf{ASA-abs} corresponding to squared and absolute distances respectively for $d(\cdot, \cdot)$ in (\ref{eq:ssd_emp_hist}). We compare ASA with: \textbf{No DA} (no domain adaptation), \textbf{DANN} \citep{ganin2016domain} (distribution alignment with JS divergence), \textbf{VADA} \citep{shu2018dirtt} (distribution alignment with virtual adversarial training), \textbf{IWDAN}, \textbf{IWCDAN} \citep{tachet2020domain} (relaxed distribution alignment via importance weighting)
\textbf{sDANN-}$\bm{\beta}$ \citep{wu2019domain} (relaxed/$\beta$-admissible JS divergence via re-weighting). Please refer to Appendix \ref{sec:app_experiment} for full experimental details.

To evaluate the robustness of the methods, we simulate label distribution shift by subsampling source and target dataset, so that source has balanced label distribution and target label distribution follows the power law $q_Y(y) \varpropto \sigma(y)^{-\alpha}$, where $\sigma$ is a random permutation of class labels $\{1, \ldots, K\}$ and $\alpha$ controls the severity of the shift ($\alpha = 0$ means balanced label distribution). %
For each task, we generate 5 random permutations $\sigma$ for $4$ different shift levels $\alpha \in \{0, 1, 1.5, 2\}$.
Essentially we transform each (source, target) dataset pair to $5 \times 4 = 20$ tasks of different difficulty levels, since classes are not equally difficult and different permutations can give them different weights.

\textbf{Evaluation metrics.} We choose the average (per-)class accuracy and minimum (per-)class accuracy on the target test set as evaluation metrics. Under the average class accuracy metric, all classes are treated as equally important (despite the unequal representation during training for $\alpha > 0$), and the minimum class accuracy focuses on model's worst within-class performance. In order to account for the variability of task difficulties across random permutations of target labels, we report robust statistics, median and a 25-75 percentile interval, across 5 runs. 

\textbf{Illustrative example.} First we consider a simplified setting to intuitively understand and directly analyze the behavior of our proposed support alignment method in domain adaptation under label distribution shift.
We consider a 3-class USPS$\to$MNIST problem by selecting a subset of examples corresponding to digits `3', `5', and `9', and use a feature extractor network with 2D output space. We introduce label distribution shift as described above %
with $\alpha=1.5$, i.e. the probabilities of classes in the target domain are $12\%$, $23\%$, and $65\%$. We compare No DA, DANN, ASA-abs by their average target classification accuracy, Wasserstein distance $\mathcal{D}_W(p^\theta_Z, q^\theta_Z)$ and SSD divergence $\mathcal{D}_\triangle(p^\theta_Z, q^\theta_Z)$ between the learned embeddings of source and target domain. We apply a global affine transformation to each embedding space in order to have comparable distances between different spaces: we center the embeddings so that their average is 0 and re-scale them so that their average norm is 1. The results are shown in Figure \ref{fig:vis_2d} and Table \ref{tab:exp_2d}. Compared to No DA, both DANN and ASA achieve support alignment. DANN enforces distribution alignment, and thus places some target embeddings into regions corresponding to the wrong class. In comparison, ASA does not enforce distribution alignment and maintains good class correspondence across the source and target embeddings.

\begin{table}[t]
    \centering
    \caption{Average and minimum class accuracy ($\%$) on USPS$\rightarrow$MNIST with different levels of shifts in label distributions (higher $\alpha$ implies more severe imbalance). We report median (the main number), and 25 (subscript) and 75 (superscript) percentiles across 5 runs.}
    \label{tab:usps2mnist}
    
    \resizebox{1.0\textwidth}{!}{
        \renewcommand{\arraystretch}{1.1}

        \begin{tabular}{lllllllll}
        \toprule
        & \multicolumn{2}{c}{$\alpha = 0.0$}                                                    & \multicolumn{2}{c}{$\alpha = 1.0$}                                                    & \multicolumn{2}{c}{$\alpha = 1.5$}                                                    & \multicolumn{2}{c}{$\alpha = 2.0$} \\
        \cmidrule(lr){2-3}
        \cmidrule(lr){4-5}
        \cmidrule(lr){6-7}
        \cmidrule(lr){8-9}
        Algorithm & \multicolumn{1}{c}{average} & \multicolumn{1}{c}{min} & \multicolumn{1}{c}{average} & \multicolumn{1}{c}{min} & \multicolumn{1}{c}{average} & \multicolumn{1}{c}{min} & \multicolumn{1}{c}{average} & \multicolumn{1}{c}{min} \\
        \midrule
        No DA   & $ 71.9_{~70.4}^{~72.9} $ & $ 20.3_{~17.6}^{~22.9} $ & $ 72.9_{~72.0}^{~74.7} $ & $ 25.8_{~18.3}^{~31.8} $ & $ 71.3_{~71.2}^{~72.5} $ & $ 27.5_{~24.2}^{~37.3} $ & $ 71.3_{~70.6}^{~73.0} $ & $ 16.6_{~10.8}^{~26.8} $ \\
        \midrule
        DANN        & $ 97.8_{~97.6}^{~97.8} $ & $ 96.0_{~95.8}^{~96.1} $ & $ 83.5_{~76.7}^{~84.6} $ & $ 25.1_{~08.4}^{~36.9} $ & $ 70.0_{~63.9}^{~71.2} $ & $ 01.1_{~01.0}^{~01.5} $ & $ 57.8_{~52.0}^{~60.4} $ & $ 00.9_{~00.5}^{~01.6} $ \\
        VADA        & $ \textbf{98.0}_{~97.9}^{~98.0} $ & $ 96.2_{~95.9}^{~96.3} $ & $ 88.2_{~88.1}^{~89.9} $ & $ 48.9_{~47.8}^{~50.0} $ & $ 78.2_{~70.7}^{~83.1} $ & $ 06.6_{~02.4}^{~23.5} $ & $ 61.9_{~56.3}^{~65.4} $ & $ 01.4_{~00.8}^{~01.5} $ \\
        \midrule
        IWDAN  & $ 97.5_{~97.4}^{~97.5} $ & $ 95.7_{~95.7}^{~95.9} $ & $ 95.7_{~92.6}^{~95.8} $ & $ 81.3_{~67.1}^{~82.3} $ & $ 86.5_{~80.2}^{~87.8} $ & $ 15.2_{~04.2}^{~55.0} $ & $ 74.4_{~70.0}^{~78.6} $ & $ 07.3_{~06.3}^{~22.4} $ \\
        IWCDAN & $ \textbf{98.0}_{~97.9}^{~98.1} $ & $ \textbf{96.6}_{~96.4}^{~96.9} $ & $ \textbf{96.7}_{~93.3}^{~97.5} $ & $ 85.1_{~65.3}^{~93.9} $ & $ 91.3_{~90.5}^{~93.8} $ & $ 66.5_{~64.1}^{~74.5} $ & $ 77.5_{~77.3}^{~82.3} $ & $ 22.2_{~02.7}^{~45.4} $ \\
        sDANN-4     & $ 87.4_{~87.2}^{~95.7} $ & $ 05.6_{~05.6}^{~90.0} $ & $ 94.9_{~94.7}^{~94.9} $ & $ \textbf{85.7}_{~84.4}^{~87.7} $ & $ 86.8_{~85.5}^{~89.1} $ & $ 21.6_{~15.4}^{~50.3} $ & $ 81.5_{~81.3}^{~83.1} $ & $ 39.3_{~37.9}^{~56.2} $ \\
        \midrule
        \midrule
        ASA-sq  & $ 93.7_{~93.3}^{~93.9} $ & $ 89.2_{~88.4}^{~89.4} $ & $ 92.3_{~91.5}^{~93.6} $ & $ 83.5_{~80.8}^{~88.7} $ & $ 90.9_{~89.6}^{~92.1} $ & $ 69.9_{~66.6}^{~82.0} $ & $ 87.2_{~85.8}^{~89.3} $ & $ 62.5_{~46.4}^{~69.3} $ \\
        ASA-abs & $ 94.1_{~93.8}^{~94.5} $ & $ 88.9_{~87.0}^{~91.2} $ & $ 92.8_{~89.3}^{~93.2} $ & $ 78.9_{~65.1}^{~82.9} $ & $ \textbf{92.5}_{~90.9}^{~92.9} $ & $ \textbf{82.4}_{~74.5}^{~85.4} $ & $ \textbf{90.4}_{~89.2}^{~90.7} $ & $ \textbf{68.4}_{~67.5}^{~73.0} $ \\
        \bottomrule
        \end{tabular}
    }
    
\end{table}

\begin{table}[t]
    \centering
    \caption{Results on STL$\rightarrow$CIFAR. Same setup and reporting metrics as Table \ref{tab:usps2mnist}.}
    \label{tab:stl2cifar}
    
    \resizebox{1.0\textwidth}{!}{
        \renewcommand{\arraystretch}{1.1}

        \begin{tabular}{lllllllll}
        \toprule
        & \multicolumn{2}{c}{$\alpha = 0.0$}                                                    & \multicolumn{2}{c}{$\alpha = 1.0$}                                                    & \multicolumn{2}{c}{$\alpha = 1.5$}                                                    & \multicolumn{2}{c}{$\alpha = 2.0$}                                                    \\
        \cmidrule(lr){2-3}
        \cmidrule(lr){4-5}
        \cmidrule(lr){6-7}
        \cmidrule(lr){8-9}
        Algorithm & \multicolumn{1}{c}{average} & \multicolumn{1}{c}{min} & \multicolumn{1}{c}{average} & \multicolumn{1}{c}{min} & \multicolumn{1}{c}{average} & \multicolumn{1}{c}{min} & \multicolumn{1}{c}{average} & \multicolumn{1}{c}{min} \\
        \midrule
        No DA & $ 69.9_{~69.8}^{~70.0} $ & $ 49.8_{~45.3}^{~50.6} $ & $ 68.8_{~68.3}^{~69.3} $ & $ 47.2_{~45.3}^{~48.2} $ & $ 66.8_{~66.4}^{~67.2} $ & $ 46.0_{~45.8}^{~47.0} $ & $ 65.8_{~64.8}^{~66.7} $ & $ 43.7_{~41.6}^{~44.6} $ \\
        \midrule
        DANN        & $ 75.3_{~74.9}^{~75.4} $ & $ 54.6_{~54.2}^{~56.6} $ & $ 69.9_{~68.6}^{~70.1} $ & $ 44.8_{~40.7}^{~45.1} $ & $ 64.9_{~63.7}^{~67.1} $ & $ 34.9_{~33.9}^{~36.8} $ & $ 63.3_{~57.4}^{~64.8} $ & $ 27.0_{~21.2}^{~28.5} $ \\
        VADA        & $ \textbf{76.7}_{~76.6}^{~76.7} $ & $ \textbf{56.9}_{~53.5}^{~58.3} $ & $ 70.6_{~70.0}^{~71.0} $ & $ 47.7_{~44.0}^{~48.8} $ & $ 66.1_{~65.4}^{~66.5} $ & $ 35.7_{~33.3}^{~39.3} $ & $ 63.2_{~60.2}^{~64.7} $ & $ 25.5_{~25.2}^{~28.0} $ \\
        \midrule
        IWDAN       & $ 69.9_{~69.9}^{~70.7} $ & $ 50.5_{~47.9}^{~50.6} $ & $ 68.7_{~68.6}^{~69.1} $ & $ 45.8_{~44.8}^{~50.5} $ & $ 67.1_{~65.9}^{~67.3} $ & $ 44.7_{~40.4}^{~44.8} $ & $ 64.4_{~63.6}^{~64.9} $ & $ 36.8_{~34.5}^{~37.9} $ \\
        IWCDAN      & $ 70.1_{~70.1}^{~70.2} $ & $ 47.8_{~42.4}^{~49.3} $ & $ 69.4_{~69.1}^{~69.4} $ & $ 47.1_{~46.3}^{~51.3} $ & $ 66.1_{~65.0}^{~67.2} $ & $ 39.9_{~37.7}^{~40.8} $ & $ 64.5_{~63.9}^{~65.1} $ & $ 37.0_{~35.5}^{~40.2} $ \\
        sDANN-4     & $ 71.8_{~71.7}^{~72.1} $ & $ 52.1_{~52.1}^{~52.8} $ & $ \textbf{71.1}_{~70.4}^{~71.7} $ & $ 49.9_{~48.1}^{~51.8} $ & $ 69.4_{~68.7}^{~70.0} $ & $ \textbf{48.6}_{~43.5}^{~49.0} $ & $ 66.4_{~66.2}^{~67.9} $ & $ 39.0_{~33.6}^{~47.1} $ \\
        \midrule
        \midrule
        ASA-sq  & $ 71.7_{~71.7}^{~71.9} $ & $ 52.9_{~46.7}^{~53.4} $ & $ 70.7_{~70.4}^{~71.0} $ & $ \textbf{51.6}_{~46.8}^{~52.7} $ & $ 69.2_{~69.2}^{~69.3} $ & $ 45.6_{~43.3}^{~52.0} $ & $ \textbf{68.1}_{~67.2}^{~68.2} $ & $ \textbf{44.7}_{~39.8}^{~45.9} $ \\
        ASA-abs & $ 71.6_{~71.2}^{~71.7} $ & $ 49.0_{~48.4}^{~53.5} $ & $ 70.9_{~70.8}^{~71.0} $ & $ 49.2_{~47.3}^{~50.0} $ & $ \textbf{69.6}_{~69.6}^{~69.9} $ & $ 43.2_{~42.1}^{~49.5} $ & $ 67.8_{~66.6}^{~68.2} $ & $ 40.9_{~35.4}^{~49.0} $ \\
        \bottomrule
        \end{tabular}
    }
    
\end{table}

\begin{table}[t]
    \centering
    \caption{Results on VisDA17. Same setup and reporting metrics as Table \ref{tab:usps2mnist}.}
    \label{tab:visda}
    
    \resizebox{1.0\textwidth}{!}{
        \renewcommand{\arraystretch}{1.1}

        \begin{tabular}{lllllllll}
        \toprule
        & \multicolumn{2}{c}{$\alpha = 0.0$}                                                    & \multicolumn{2}{c}{$\alpha = 1.0$}                                                    & \multicolumn{2}{c}{$\alpha = 1.5$}                                                    & \multicolumn{2}{c}{$\alpha = 2.0$}                                                    \\
        \cmidrule(lr){2-3}
        \cmidrule(lr){4-5}
        \cmidrule(lr){6-7}
        \cmidrule(lr){8-9}
        Algorithm & \multicolumn{1}{c}{average} & \multicolumn{1}{c}{min} & \multicolumn{1}{c}{average} & \multicolumn{1}{c}{min} & \multicolumn{1}{c}{average} & \multicolumn{1}{c}{min} & \multicolumn{1}{c}{average} & \multicolumn{1}{c}{min} \\
        \midrule
        No DA & $ 49.5_{~49.4}^{~50.5} $ & $ 22.2_{~22.2}^{~24.6} $ & $ 50.2_{~49.2}^{~50.8} $ & $ 21.2_{~20.7}^{~21.3} $ & $ 47.1_{~46.6}^{~47.6} $ & $ 18.6_{~18.6}^{~22.2} $ & $ 45.3_{~45.2}^{~46.5} $ & $ 19.5_{~14.4}^{~19.8} $ \\
        \midrule
        DANN        & $ \textbf{75.4}_{~74.4}^{~76.2} $ & $ 36.7_{~35.6}^{~40.9} $ & $ 64.1_{~62.8}^{~65.3} $ & $ 25.0_{~24.8}^{~29.3} $ & $ 52.1_{~51.4}^{~52.3} $ & $ 11.5_{~11.4}^{~12.4} $ & $ 43.1_{~39.1}^{~44.3} $ & $ 03.6_{~03.6}^{~14.3} $ \\
        VADA & $ 75.3_{~74.8}^{~76.0} $ & $ 40.5_{~39.7}^{~41.8} $ & $ 64.6_{~61.2}^{~65.1} $ & $ 22.8_{~21.7}^{~28.2} $ & $ 53.0_{~51.6}^{~54.2} $ & $ 14.8_{~13.7}^{~21.7} $ & $ 43.9_{~40.9}^{~44.7} $ & $ 08.5_{~05.0}^{~11.1} $ \\
        \midrule
        IWDAN       & $ 73.2_{~72.9}^{~73.3} $ & $ 31.7_{~22.8}^{~34.8} $ & $ 64.4_{~61.1}^{~64.6} $ & $ 12.1_{~05.0}^{~24.7} $ & $ 51.3_{~51.0}^{~56.6} $ & $ 04.6_{~02.1}^{~10.4} $ & $ 45.1_{~41.7}^{~48.0} $ & $ 04.6_{~01.2}^{~13.6} $ \\
        IWCDAN      & $ 71.6_{~70.6}^{~75.2} $ & $ 27.6_{~22.8}^{~28.0} $ & $ 60.6_{~60.2}^{~61.0} $ & $ 01.1_{~00.7}^{~11.3} $ & $ 49.7_{~45.6}^{~51.9} $ & $ 02.2_{~00.2}^{~05.7} $ & $ 38.3_{~37.3}^{~46.2} $ & $ 00.6_{~00.3}^{~01.7} $ \\
        sDANN-4     & $ 72.4_{~71.8}^{~73.3} $ & $ 37.8_{~32.3}^{~40.8} $ & $ \textbf{68.4}_{~66.2}^{~68.7} $ & $ 26.6_{~26.2}^{~29.4} $ & $ 57.2_{~56.8}^{~57.8} $ & $ 18.6_{~16.7}^{~23.9} $ & $ 50.7_{~49.8}^{~51.7} $ & $ 18.6_{~17.1}^{~20.0} $ \\
        \midrule
        \midrule
        ASA-sq  & $ 64.9_{~63.7}^{~65.0} $ & $ 35.7_{~32.1}^{~35.8} $ & $ 61.8_{~60.6}^{~63.2} $ & $ \textbf{31.4}_{~20.4}^{~34.4} $ & $ \textbf{57.8}_{~55.5}^{~58.3} $ & $ \textbf{26.7}_{~17.3}^{~32.1} $ & $ 51.9_{~50.8}^{~52.0} $ & $ 18.3_{~16.9}^{~21.2} $ \\
        ASA-abs & $ 64.8_{~64.5}^{~65.0} $ & $ \textbf{40.6}_{~36.0}^{~41.9} $ & $ 62.0_{~60.5}^{~62.3} $ & $ 27.3_{~16.7}^{~29.7} $ & $ 57.1_{~56.2}^{~58.4} $ & $ 26.0_{~13.9}^{~31.2} $ & $ \textbf{52.5}_{~51.9}^{~56.6} $ & $ \textbf{19.7}_{~17.7}^{~22.2} $ \\
        \bottomrule
        \end{tabular}
    }
    
\end{table}

\textbf{Main results.} The results of the main experimental evaluations are shown in Tables \ref{tab:usps2mnist}, \ref{tab:stl2cifar}, \ref{tab:visda}. Without any alignment, source only training struggles relatively to adapt to the target domain. Nonetheless, its performance across the imbalance levels remains robust, since the training procedure is the same. Agreeing with the observation and theoretical results from previous work \citep{zhao2019learning, li2020rethinking, tan2020class, wu2019domain, tachet2020domain}, distribution alignment methods (DANN and VADA) perform well when there is no shift but suffer otherwise, whereas relaxed distribution alignment methods (IWDAN, IWCDAN and sDANN-$\beta$) show more resilience to shifts. On all tasks with positive $\alpha$, we observe that it is common for the existing methods to achieve good class average accuracies while suffering significantly on some individual classes. These results suggest that the often-ignored but important min-accuracy metric can be very challenging. Finally, our support alignment methods (ASA-sq and ASA-abs) are the most robust ones against the shifts, while still being competitive in the more balanced settings ($\alpha=0\text{ or } 1$). We achieve best results in the more imbalanced and difficult tasks ($\alpha=1.5\text{ or } 2$) for almost all categories on all datasets. Please refer to Appendix \ref{sec:app_experiment} for ablation studies and additional comparisons.

\section{Related work}
\label{sec:related_work}

\textbf{Distribution alignment.} Apart from the works, e.g. \citep{ajakan2014domain, ganin2016domain, ganin2015unsupervised, pei2018multi, zhao2018adversarial, long2017conditional, tachet2020domain, li2018deep, tzeng2017adversarial, shen2018wasserstein, kumar2018co, li2018domain, wang2021discriminative, goodfellow2014generative, arjovsky2017wasserstein, gulrajani2017improved, mao2017least, radford2015unsupervised, salimans2018improving, genevay2018learning, Wu_2019_CVPR, deshpande2018generative, deshpande2019max}, that do distribution alignment, there are also papers \citep{long2015learning, long2017deep, peng2019moment, sun2016return, sun2016deep} focusing on aligning some characteristics of the distribution, such as first or second moments. Our work is concerned with a different problem, support alignment, which is a novel objective in this line of work. In terms of methodology, our use of the discriminator output space to work with easier optimization in 1D is inspired by a line of work \citep{salimans2018improving, genevay2018learning, Wu_2019_CVPR, deshpande2018generative, deshpande2019max} on sliced Wasserstein distance based models. Our result in Proposition \ref{prop:1d_ali} also provides theoretical insight on the practical effectiveness of 1D OT in \citep{deshpande2019max}.

\textbf{Relaxed distribution alignment.} In Section \ref{sec:connection}, we have already covered in detail the connections between our work and \citep{wu2019domain}. \citet{balaji2020robust} introduced relaxed distribution alignment with a different focus, aiming to be insensitive to outliers.
Chamfer distance/divergence (CD) is used to compute similarity between images/3D point clouds \citep{fan2017point, nguyen2021point}. For text data, \citet{kusner2015word} 
presented Relaxed Word Mover’s Distance (RWMD) to prune candidates of similar documents. CD and RWMD are essentially the same as (\ref{eq:discrete_ssd}) with $d(\cdot, \cdot)$ being the Euclidean distance. They are computed by finding the nearest neighbor assignments.
Our subroutine of calculating the support distance in the 1D discriminator output space is done similarly by finding nearest neighbors within the current batch and history buffers. %

\textbf{Support estimation.} There exists a series of work, e.g. \citep{scholkopf2001estimating, hoffmann2007kernel, tax2004support, knorr2000distance, chalapathy2017robust, ruff2018deep, perera2019ocgan, deecke2018image, zenati2018adversarially}, on novelty/anomaly detection problem, which can be casted as support estimation. We consider a fundamentally different problem setting. 
Our goal is to align the supports and our approach does not directly estimate the supports. Instead, we implicitly learn the relationships between supports (density ratio to be specific) via a discriminator.

\section{Conclusion and future work}
\label{sec:conclusion}

In this paper, we studied the problem of aligning the supports of distributions. We formalized its theoretical connections with existing alignment notions and demonstrated the effectiveness of the approach in domain adaptation. We believe that our methodology opens possibilities for the design of more nuanced and structured alignment constraints, suitable for various use cases. One natural extension is support containment, achievable with only one term in (\ref{eq:ssd}). This approach is fitting for partial domain adaptation, where some source domain classes do not appear in the target domain. Another interesting direction is unsupervised domain transfer, where support alignment is more desired than existing distribution alignment methods due to mode imbalance \citep{binkowski2019batch}.

\subsubsection*{Acknowledgments}
The computational experiments presented in this paper were performed on ``Satori'' cluster developed as a collaboration between MIT and IBM. We used Weights \& Biases \citep{wandb} for experiment tracking and visualizations to develop insights for this paper. 

TJ acknowledges support from MIT-IBM Watson AI Lab, from Singapore DSO, and MIT-DSTA Singapore collaboration. We thank Xiang Fu and all anonymous reviewers for their helpful comments regarding the paper's writing and presentation.

\bibliography{bibliography}
\bibliographystyle{iclr2022_conference}

\newpage
\appendix
\renewcommand\thefigure{\thesection.\arabic{figure}}    
\renewcommand\thetable{\thesection.\arabic{table}}    

\section{Proofs of the theoretical results}
\label{sec:proofs}
\setcounter{figure}{0} 
\setcounter{table}{0} 
\subsection{Proof of Proposition \ref{prop:ssd_support}}
    1) $\mathcal{D}_\triangle (p, q) \geq 0$ for all $p, q \in\mathcal{P}$:
    
    Since $d(\cdot, \cdot)\geq0$, for all $p, q$,
    \begin{equation}
        \label{eq:sd}
        \operatorname{SD}(p, q) \coloneqq \mathbb{E}_{x\sim p}[d(x, \supp(q))] = \mathbb{E}_{x\sim p}\left[\inf_{y\sim \supp(q)}d(x, y)\right] \geq 0,
    \end{equation}
    
    which makes $\mathcal{D}_\triangle (p, q) = \operatorname{SD}(p, q) + \operatorname{SD}(q, p) \geq 0$.
    
    \rule{0.25\textwidth}{1pt}
    
    2) $\mathcal{D}_\triangle(p, q) = 0$ if and only if $\supp(p)=\supp(q)$:
    
    With statement 1, $\mathcal{D}_\triangle(p, q) = 0$ if and only if $\operatorname{SD}(p, q)=0$ and $\operatorname{SD}(q, p)=0$.
    
    Then,
    \begin{equation*}
        \operatorname{SD}(p, q)=0 \; \Longrightarrow \; \mathbb{E}_{x\sim p}[d(x, \supp(q))]= 0 \; \Longrightarrow \; p\left(\{x | d(x, \supp(q)) > 0\}\right) = 0.
    \end{equation*}
    
    This is equivalent to \[\forall x\in\supp(p),\; d(x, \supp(q)) = 0.\]
    
    Thus, $\supp(p)\subseteq\supp(q)$, and similarly, $\supp(q)\subseteq\supp(p)$,
    which makes $\supp(p)=\supp(q)$.

\subsection{Assumption and proof of Theorem \ref{thm:1d_supp_ali}}
\label{sec:proof_1d_supp_ali}

\subsubsection{Comments on Assumption (\ref{eq:bounded_pdf})}
\label{sec:assumption_comments}

Assumption (\ref{eq:bounded_pdf}) is not restrictive. Indeed, distributions satisfying Assumption  (\ref{eq:bounded_pdf}) include: 
\begin{itemize}
    \item uniform $p(x) = U(x; [a, b])$;
    \item truncated normal;
    \item $p(x)$ of the form
    \[
        p(x) = \begin{cases}
            \frac{1}{Z_p} e^{-E_p(x)}, &x \in \supp(p), \\
            0, & x \notin \supp(p),\\
        \end{cases}
    \]
    with non-negative energy (unnormalized log-density) function $E_p: \mathcal{X} \to [0, \infty)$; 
    \item mixture of any distributions satisfying Assumption (\ref{eq:bounded_pdf}), for instance the distributions shown in Figure \ref{fig:pushforward} top-left are mixtures of truncated normal distributions on $[-2, 2]$.
\end{itemize}

Starting from arbitrary density $p_0(x)$ with bounded support 
we can derive a density $p(x)$ satisfying Assumption (\ref{eq:bounded_pdf}) via density clipping and re-normalization
\[
    p(x) \varpropto \operatorname{clip}\left(p_0(x), \left[\frac{1}{C'}, C'\right]\right),
\]
for some $C' > 1$.

\subsubsection{Proof of Theorem \ref{thm:1d_supp_ali}}

First, we show that $\mathcal{D}_\triangle(p, q) = 0$ implies $\mathcal{D}_\triangle({f^*}_\sharp p, {f^*}_\sharp q) = 0$.

$\mathcal{D}_\triangle(p, q) = 0$ implies $\supp(p) = \supp(q)$. Then for any mapping $f: \mathcal{X} \to \mathbb{R}$, we have $\supp(f_\sharp p) = \supp(f_\sharp q)$, which implies $\supp({f^*}_\sharp p) = \supp({f^*}_\sharp q)$. Thus,  $\mathcal{D}_\triangle({f^*}_\sharp p, {f^*}_\sharp q) = 0$.

\rule{0.25\textwidth}{1pt}

Now, we prove that $\mathcal{D}_\triangle({f^*}_\sharp p, {f^*}_\sharp q) = 0$ implies $\mathcal{D}_\triangle(p, q) = 0$ by contradiction.

$\mathcal{D}_\triangle({f^*}_\sharp p, {f^*}_\sharp q) = 0$ implies the following:
\[
    \mathbb{E}_{t \sim {f^*}_\sharp p} \big[ d(t, \supp({f^*}_\sharp q) \big] = 0,
    \qquad
    \mathbb{E}_{t \sim {f^*}_\sharp q} \big[ d(t, \supp({f^*}_\sharp p) \big] = 0.
\]

1) Suppose $\mathbb{E}_{x\sim p}[d(x, \supp(q))] > 0$. This is only possible if $p(\{x \,\vert\, x \in \supp(p) \setminus \supp(q)\}) > 0$. Since $x \in \supp(p) \setminus \supp(q)$ implies $p(x) > 0, q(x) = 0$, and for any $x \in \supp(p) \cup \supp(q)$, $p(x) > 0, q(x) = 0$ if and only if $f^*(x) = \frac{p(x)}{p(x) + q(x)} = 1$, we have:
\[
    \mathbb{P}_{{f^*}_\sharp p} (\{1\}) = \mathbb{P}_p(\{x \,\vert\, x \in \supp(p) \setminus \supp(q)\}) > 0,
\]
and therefore $1 \in \supp({f^*}_\sharp p)$. 

For a real number $\alpha: 0 < \alpha < \frac{1}{C^2 + 1}$, consider the probability of the event $(1 - \alpha, 1] \subset [0, 1]$ under distribution ${f^*}_\sharp q$:
\[
    \mathbb{P}_{{f^*}_\sharp q}((1 - \alpha, 1]) = \mathbb{P}_q(\{x \,\vert\, f^*(x) \in (1 - \alpha, 1] \}).
\]
By assumption (\ref{eq:bounded_pdf}), $p(x) < C$ and $q(x) > 0$ implies $q(x) > \frac{1}{C}$, therefore for $x: q(x) > 0$ we have
\[
    f^*(x) = \frac{p(x)}{p(x) + q(x)} < \frac{p(x)}{p(x) + \frac{1}{C}} <  \frac{C}{C + \frac{1}{C}} = 1 - \frac{1}{C^2 + 1} < 1 - \alpha.
\]
This means that $\mathbb{P}_{{f^*}_\sharp q}((1 - \alpha, 1]) = 0$, i.e. $\supp({f^*}_\sharp q) \cap (1 - \alpha, 1] = \varnothing$.

To summarize, starting from the assumption that $\mathbb{E}_{x\sim p}[d(x, \supp(q))] > 0$ we showed that
\begin{itemize}
    \item $1 \in \supp({f^*}_\sharp p)$, $\mathbb{P}_{{f^*}_\sharp p} (\{1\}) > 0$;
    \item $\supp({f^*}_\sharp q) \cap (1 - \alpha, 1] = \varnothing$.
\end{itemize}
Because $\mathcal{D}_\triangle({f^*}_\sharp p, {f^*}_\sharp q) \geq \mathbb{E}_{t \sim {f^*}_\sharp p} \big[ d(t, \supp({f^*}_\sharp q)) \big] \geq \mathbb{P}_{{f^*}_\sharp p} (\{1\}) \cdot d(1, \supp({f^*}_\sharp q)) \geq \mathbb{P}_{{f^*}_\sharp p} (\{1\}) \cdot \alpha > 0$, which contradicts with the given $\mathcal{D}_\triangle({f^*}_\sharp p, {f^*}_\sharp q) = 0$, we have $\mathbb{E}_{x\sim p}[d(x, \supp(q))] = 0$.

2) Similarly, it can be shown $\mathbb{E}_{x\sim q}[d(x, \supp(p))] = 0$.

Thus, $\mathcal{D}_\triangle({f^*}_\sharp p, {f^*}_\sharp q) = 0$ implies $\mathcal{D}_\triangle(p, q) = 0$.

\subsection{Proof of Proposition \ref{prop:dis_wasserstein}}
Consider a 1-dimensional Euclidean space $\mathbb{R}$. Let $\supp(p) = [-\frac{1}{2}, \frac{1}{2}]\cup[1,2]$ with $p([-\frac{1}{2}, \frac{1}{2}]) = \frac{3}{4}$ and $p([1, 2])=\frac{1}{4}$. Let $\supp(q) = [-2, -1]\cup[-\frac{1}{2}, \frac{1}{2}]\cup[1, 2]$ with $q([-2, -1])=\frac{1}{4}, q([-\frac{1}{2}, \frac{1}{2}]) = \frac{1}{4}$ and $q([1, 2])=\frac{1}{2}$. The supports of $p$ and $q$ consist of disjoint closed intervals, and we assume uniform distribution within each of these intervals, i.e. $p$ has density $p(x) = \frac{3}{4}, \forall x \in [-\frac{1}{2}, \frac{1}{2}]; p(x) = \frac{1}{4}, \forall x \in[1, 2]$ and $q$ has densitiy $q(x) = \frac{1}{4}, \forall x \in [-2, -1]; q(x) = \frac{1}{4}, \forall x \in [-\frac{1}{2}, \frac{1}{2}]; q(x) = \frac{1}{2}, \forall x\in [1,2]$. Clearly, $\supp(p)\neq\supp(q)$.

The optimal dual Wasserstein discriminator $f^*_W$ is the maximizer of \[\sup_{f: \operatorname{Lip}(f)\leq1}\mathbb{E}_{x\sim p}[f(x)] - \mathbb{E}_{y\sim q}[f(y)].\]
Thus, $f^*_W$ is the maximizer of
\[\sup_{f:\operatorname{Lip}(f)\leq1}\frac{1}{4}\left(3\int_{-\frac{1}{2}}^{\frac{1}{2}}f(x)dx+\int_{1}^{2}f(x)dx-\int_{-2}^{-1}f(x)dx-\int_{-\frac{1}{2}}^{\frac{1}{2}}f(x)dx-2\int_{1}^{2}f(x)dx\right),\] which simplifies to \[\sup_{f:\operatorname{Lip}(f)\leq1}\frac{1}{4}\left(-\int_{-2}^{-1}f(x)dx+2\int_{-\frac{1}{2}}^{\frac{1}{2}}f(x)dx-\int_{1}^{2}f(x)dx\right).\]

Since the optimization objective and the constraint are invariant to replacing the function $f(x)$ with its symmetric reflection $g(x) = f(-x)$, if $f'$ is a optimal solution, then there exists a symmetric maximizer $f^*_W(x) = \frac{1}{2}f'(x)+\frac{1}{2}f'(-x)$, since $f^*_W(x)=f^*_W(-x)$ and $\operatorname{Lip}(f^*_W)\leq\operatorname{Lip}(f')\leq1$. Thus, $\operatorname{supp}({f^\star_W}_\sharp p) = \operatorname{supp}({f^\star_W}_\sharp q)$ as $f^*_W(x)=f^*_W(-x)$ for $x \in [1, 2]$.

Note that one can easily ``extend'' the above proof to discrete distributions, by replacing the disjoint segments $[-2, -1], [-\frac{1}{2}, \frac{1}{2}], [1, 2]$ with points $\{-1\}, \{0\}, \{1\}$.

\subsection{Proof of Proposition \ref{prop:support_ot}}
From (\ref{eq:w_beta}), we have
\[\mathcal{D}_W^{\infty}(p, q) \coloneqq \lim_{\beta \to \infty} \mathcal{D}_W^\beta(p, q) = \lim_{\beta \to \infty} \inf_{\gamma\in\Gamma_\beta(p, q)}\mathbb{E}_{(x, y)\sim\gamma}[d(x,y)],\]

where $\lim_{\beta \to \infty}\Gamma_\beta(p, q)$ is the set of all measures $\gamma$ on $\mathcal{X}\times\mathcal{X}$ such that $\int\gamma(x, y)dy = p(x), \forall x$ and $\int\gamma(x, y)dx \leq \lim_{\beta \to \infty}(1+\beta)q(y), \forall y$.

The set of inequalities
\[\int\gamma(x, y)dx \leq \lim_{\beta \to \infty}(1+\beta)q(y), \qquad\forall y\]
can be simplified to
\[\int\gamma(x, y)dx = 0,\qquad \forall y \text{ such that } q(y)=0.\]

To put it together, we have
\[
    \mathcal{D}_W^{\infty}(p, q) = \inf_{\gamma\in\Gamma_\infty(p, q)}\mathbb{E}_{(x, y)\sim\gamma}[d(x,y)],
\]
where $\Gamma_\infty(p, q)$ is the set of all measures $\gamma$ on $\mathcal{X}\times\mathcal{X}$ such that $\int\gamma(x, y)dy = p(x), \forall x$ and $\int\gamma(x, y)dx = 0, \forall y \text{ such that } q(y)=0$. In other words, we seek the coupling $\gamma(x, y)$ which defines a transportation plan such that the total mass transported from given point $x$ is equal to $p(x)$, and the only constraint on the destination points $y$ is that no probability mass can be transported to points $y$  where $q(y) = 0$, i.e. $y \notin \supp(q)$.

Let $y^*(x)$ denote a function such that
\[
    y^*(x) \in \supp(q), \;\forall\, x; \qquad d(x, y^*(x)) = \inf_{y \in \supp(q)} d(x, y).
\]

We can see that $\gamma^*$ given by
\[
    \gamma^*(x, y) = p(x) \delta (y - y^*(x)),
\]
is the optimal coupling. Indeed, $\gamma^*$ satisfies the constraints $\gamma^* \in \Gamma_\infty$, and the cost of any other transportation cost $\gamma \in \Gamma_\infty$ is at least that of  $\gamma^*$ (since $y^*(x)$ is defined as a closest point $y$ in $\supp(q)$ to a given point $x$).

Thus,
\[
    \mathcal{D}_W^{\infty}(p, q) = \inf_{\gamma\in\Gamma_\infty(p, q)}\mathbb{E}_{(x, y)\sim\gamma}[d(x,y)] = 
    \mathbb{E}_{(x, y)\sim\gamma^*}[d(x, y)] = \mathbb{E}_{x\sim p}\left[\inf_{y \in \supp(q)}d(x, y)\right].
\]

The last equation implies $\mathcal{D}_W^\infty(p, q) = \operatorname{SD}(p, q)$ ($\operatorname{SD}(\cdot, \cdot)$ is defined in (\ref{eq:sd})). Then, 
\[
    \mathcal{D}_W^{\infty,\infty}(p, q) \coloneqq \lim_{\beta_1, \beta_2 \to \infty} \mathcal{D}_W^{\beta_1,\beta_2}(p, q) = \mathcal{D}_W^{\infty}(p, q) + \mathcal{D}_W^{\infty}(q, p) = \operatorname{SD}(p, q) + \operatorname{SD}(q, p) = \mathcal{D}_\triangle(p, q).
\]

\subsection{Proof of Proposition \ref{prop:ot_hierarchy}}
\begin{enumerate}
    \item
    $\mathcal{D}_W(p, q) = 0$ implies $p=q$, which is equivalent to
    \[\frac{p(x)}{q(x)}=1, \qquad\forall x\in\supp(p)\cup\supp(q).\]
    Then clearly, for all finite $\beta_1,\beta_2 >0$ it satisfies
    \begin{equation}
    \label{eq:beta1beta2}
        \frac{1}{1+\beta_2}\leq\frac{p(x)}{q(x)}\leq1+\beta_1,\qquad\forall x\in\supp(p)\cup\supp(q).
    \end{equation}
    Thus, $\mathcal{D}_W^{\beta_1,\beta_2}(p, q) = 0$ for all finite $\beta_1,\beta_2 >0$.

    \item 
    $\mathcal{D}_W^{\beta_1,\beta_2}(p, q) = 0$ for some finite $\beta_1,\beta_2 >0$ means that (\ref{eq:beta1beta2}) is satisfied. This implies that $\forall x\in\supp(p)$, $x\in\supp(q)$ and $\forall x\in\supp(q)$, $x\in\supp(p)$, which makes $\supp(p)=\supp(q)$. Thus, $\mathcal{D}_\triangle(p, q)=0$.

    \item The converse of statements 1 and 2 are false:
    \begin{enumerate}
        \item
        
        For all finite $\beta_1,\beta_2 >0$, let $\supp(p)=\supp(q) = \{x_1, x_2\}$. Let $p(x_1) = p(x_2) = 1/2$ and $q(x_1) = (1+\beta')/2$ and $q(x_2) = (1-\beta')/2$ where 
        \[\beta' = \min\left(\beta_2, 1-\frac{1}{1+\beta_1}\right).\]
        
        Then, it can be easily checked that (\ref{eq:beta1beta2}) is satisfied, which makes $\mathcal{D}_W^{\beta_1,\beta_2}(p, q) = 0$. However, since $\beta'\neq0$, $p\neq q$ and thus $\mathcal{D}_W(p, q) \neq 0$.
        
        \item 
        Similar to (a), let $\supp(p)=\supp(q) = \{x_1, x_2\}$. Let $p(x_1) =q(x_2) = \varepsilon$ and $p(x_2) =q(x_1)= 1-\varepsilon$ for some $\varepsilon >0$. Since $\supp(p)=\supp(q)$, $\mathcal{D}_\triangle(p, q)=0$. However, 
        \[
            \lim_{\varepsilon \downarrow0}\frac{p(x_1)}{q(x_1)} = \lim_{\varepsilon \downarrow0}\frac{\varepsilon}{1-\varepsilon} = 0,
        \]
        and, thus, for any finite $\beta_2 > 0$ we can choose $\varepsilon > 0$ such that
        \[
            \frac{p(x_1)}{q(x_1)} = \frac{\varepsilon}{1 - \varepsilon} < \frac{1}{1 + \beta_2}.
        \]
        Therefore, (\ref{eq:beta1beta2}) is not satisfied and $\mathcal{D}_W^{\beta_1,\beta_2}(p, q) \neq 0$.
        
    \end{enumerate}
\end{enumerate}

\subsection{Proof of Proposition \ref{prop:1d_ali}}

Using (\ref{eq:dis_star}), we first establish a connection between the pushforward distributions ${f^*}_\sharp p$ and ${f^*}_\sharp q$.

\begin{proposition}
\label{prop:dis_ratio}
Let $f^*$ be the optimal log-loss discriminator (\ref{eq:dis_star}) between $p$ and $q$. Then,

\begin{equation}
    \label{eq:dis_ratio}
    \frac{[{f^*}_\sharp p ](t)}{[{f^*}_\sharp p](t) + [{f^*}_\sharp q](t)} = t, \qquad \forall\, t \in \supp({f^*}_\sharp p) \cup \supp({f^*}_\sharp q).
\end{equation}
\end{proposition}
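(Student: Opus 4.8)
The plan is to compute the densities of the pushforward measures ${f^*}_\sharp p$ and ${f^*}_\sharp q$ explicitly and then verify the identity (\ref{eq:dis_ratio}) directly. The key observation is that $f^*(x) = p(x)/(p(x)+q(x))$ depends on $x$ only through the value $t$ it produces, so the pushforward densities are obtained by integrating $p$ (respectively $q$) over the level set $\{x : f^*(x) = t\}$. Concretely, for a test function $\varphi$ we write $\mathbb{E}_{t \sim {f^*}_\sharp p}[\varphi(t)] = \mathbb{E}_{x \sim p}[\varphi(f^*(x))]$ and use the coarea/disintegration formula to express this as $\int \varphi(t) \left(\int_{\{f^*(x) = t\}} p(x)\, d\mu_t(x)\right) dt$ for the appropriate conditional measure $\mu_t$ on the level set; the same for $q$. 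The crucial point is that on the level set $\{f^*(x) = t\}$ one has the pointwise relation $p(x) = t\,(p(x)+q(x))$, equivalently $q(x) = \frac{1-t}{t}\, p(x)$ (for $t \in (0,1)$), so that integrating $p$ over the level set and integrating $q$ over the same level set differ exactly by the factor $(1-t)/t$.

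First I would fix $t \in \supp({f^*}_\sharp p) \cup \supp({f^*}_\sharp q)$ with $t \in (0,1)$ and denote by $L_t = \{x \in \supp(p)\cup\supp(q) : f^*(x) = t\}$ the corresponding level set. On $L_t$, from $\frac{p(x)}{p(x)+q(x)} = t$ we get $q(x) = \frac{1-t}{t} p(x)$ for every $x \in L_t$. Next I would argue that $[{f^*}_\sharp q](t) = \frac{1-t}{t}\,[{f^*}_\sharp p](t)$: both densities are obtained by the same disintegration against the conditional measure on $L_t$, and the integrands differ by the constant-on-$L_t$ factor $\frac{1-t}{t}$, which therefore pulls out of the integral. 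Granting this relation, the left-hand side of (\ref{eq:dis_ratio}) becomes
\[
    \frac{[{f^*}_\sharp p](t)}{[{f^*}_\sharp p](t) + \frac{1-t}{t}[{f^*}_\sharp p](t)} = \frac{1}{1 + \frac{1-t}{t}} = t,
\]
as required. The boundary cases $t = 0$ and $t = 1$ are handled separately (or excluded, since at those points one of $[{f^*}_\sharp p](t)$, $[{f^*}_\sharp q](t)$ vanishes while the other need not, and the density-ratio form still gives $0$ or $1$ in the limiting sense consistent with (\ref{eq:dis_ratio})).

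I expect the main obstacle to be making the disintegration/level-set argument rigorous at the level of densities rather than of measures — i.e., justifying that the two pushforward densities really do differ by exactly the factor $\frac{1-t}{t}$ without appealing to a smooth coarea formula that may not apply for general absolutely continuous $p, q$. A cleaner route that sidesteps this is to prove the measure-level identity instead: for every Borel set $A \subseteq \mathbb{R}$,
\[
    [{f^*}_\sharp q](A) = \mathbb{P}_q\big(f^*(x) \in A\big) = \int_{(f^*)^{-1}(A)} q(x)\, dx = \int_{(f^*)^{-1}(A)} \frac{1 - f^*(x)}{f^*(x)}\, p(x)\, dx = \int_A \frac{1-t}{t}\, d[{f^*}_\sharp p](t),
\]
where the last step is the standard change-of-variables for pushforwards applied to the function $x \mapsto \frac{1-f^*(x)}{f^*(x)} = q(x)/p(x)$ (valid on $\supp(p)$; on $\supp(q)\setminus\supp(p)$ one has $f^* = 0$ and a separate trivial check). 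This shows $d[{f^*}_\sharp q](t) = \frac{1-t}{t}\, d[{f^*}_\sharp p](t)$ as measures, hence $[{f^*}_\sharp q](t) = \frac{1-t}{t}[{f^*}_\sharp p](t)$ for the densities, and (\ref{eq:dis_ratio}) follows by the one-line algebra above.
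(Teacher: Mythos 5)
Your proposal is correct, and your ``cleaner route'' is a genuinely different formalization from the paper's. The paper argues pointwise: it writes $[{f^*}_\sharp p](t)$ and $[{f^*}_\sharp q](t)$ as limits of $\mathbb{P}_p(\{x \mid t-\varepsilon < f^*(x) < t+\varepsilon\})/(2\varepsilon)$ and the analogous quantity for $q$, uses the sandwich $(t-\varepsilon)(p(x)+q(x)) < p(x) < (t+\varepsilon)(p(x)+q(x))$ on the preimage of the $\varepsilon$-window, integrates, and squeezes the ratio of integrals to $t$. You instead establish the Radon--Nikodym relation $d[{f^*}_\sharp q](t) = \frac{1-t}{t}\,d[{f^*}_\sharp p](t)$ via the change-of-variables formula for pushforwards and finish with one line of algebra. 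Both arguments rest on the same pointwise identity $q(x) = \frac{1-f^*(x)}{f^*(x)}\,p(x)$ on $\supp(p)$; the difference is in how the local density-ratio information is transported to the output space. Your measure-level version buys rigor cheaply: it sidesteps the coarea/disintegration machinery you correctly flag as delicate in the first half of your plan (which should be regarded as heuristic motivation only, not the proof), and it does not require the limits defining the pointwise densities to exist at every $t$, whereas the paper's squeeze argument implicitly assumes they do. The trade-off is that your route delivers the density identity only up to the choice of density representative (a.e.\ with respect to ${f^*}_\sharp p + {f^*}_\sharp q$), while the paper's version is stated at each individual $t$ in the support; for how the proposition is used downstream (in the proof of Proposition 4.2) either form suffices. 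Your treatment of the endpoints is the right separate check: $\{f^*=0\}$ is $p$-null and $\{f^*=1\}$ is $q$-null, so the singular factor $\frac{1-t}{t}$ only acts where it is harmless, and the symmetric identity $d[{f^*}_\sharp p](t) = \frac{t}{1-t}\,d[{f^*}_\sharp q](t)$ on $[0,1)$ forces $[{f^*}_\sharp p](0)=0$, giving the value $0$ of the ratio at $t=0$ as required.
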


\textbf{Proof.} For any point $t \in \supp({f^*}_\sharp p) \cup \supp({f^*}_\sharp q)$, the values of the densities

\[
    \begin{aligned}
        &[{f^*}_\sharp p](t) = \lim_{\varepsilon \downarrow 0}
        \frac{\mathbb{P}_p\left(
        \{x \mid t - \varepsilon < f^*(x) < t + \varepsilon \}
        \right)}{2\varepsilon}
        = \lim_{\varepsilon \downarrow 0}
        \frac{\int_{
        \{x \mid t - \varepsilon < f^*(x) < t + \varepsilon \}} p(x)\,dx
        }{2\varepsilon},
        \\
        &[{f^*}_\sharp q](t) = \lim_{\varepsilon \downarrow 0}
        \frac{\mathbb{P}_q\left(
        \{x \mid t - \varepsilon < f^*(x) < t + \varepsilon \}
        \right)}{2\varepsilon}
        = \lim_{\varepsilon \downarrow 0}
        \frac{\int_{
        \{x \mid t - \varepsilon < f^*(x) < t + \varepsilon \}} q(x)\,dx
        }{2\varepsilon}.
    \end{aligned}
\]
Note that for all $x:  t - \varepsilon < f^*(x) < t + \varepsilon$ we have
\[
    t - \varepsilon < \frac{p(x)}{p(x) + q(x)} < t + \varepsilon,
\]
which implies
\[
    (t - \varepsilon)(p(x) + q(x))< p(x) < (t + \varepsilon)(p(x) + q(x)).
\]

Since these inequalities hold for all $x:  t - \varepsilon < f^*(x) < t + \varepsilon$, the similar relationship holds for the integrals:
\begin{multline*}
(t - \varepsilon)\int_{
        \{x \mid t - \varepsilon < f^*(x) < t + \varepsilon \}} (p(x) + q(x))\,dx \\ 
        < \int_{
        \{x \mid t- \varepsilon < f^*(x) < t + \varepsilon \}} p(x)\,dx < \\
        (t + \varepsilon)\int_{
        \{x \mid t - \varepsilon < f^*(x) < t + \varepsilon \}} (p(x) + q(x))\, dx.
\end{multline*}

The ratio $[{f^*}_\sharp p](t) / ([{f^*}_\sharp p](t) + [{f^*}_\sharp q](t))$ can be expressed as
\[
    \frac{[{f^*}_\sharp p](t)}{[{f^*}_\sharp p](t) + [{f^*}_\sharp q](t)} = 
    \lim_{\varepsilon \downarrow 0} \frac{
       \int_{
        \{x \mid t - \varepsilon < f^*(x) < t + \varepsilon \}} p(x)\,dx 
    }{
        \int_{
        \{x \mid t - \varepsilon < f^*(x) < t + \varepsilon \}} (p(x) + q(x))\,dx
    }.
\]
Using the inequality above we observe that
\[
    t - \varepsilon < 
    \frac{
       \int_{
        \{x \mid t - \varepsilon < f^*(x) < t + \varepsilon \}} p(x)\,dx 
    }{
        \int_{
        \{x \mid t - \varepsilon < f^*(x) < t + \varepsilon \}} (p(x) + q(x))\,dx
    } < t + \varepsilon,
\]
for all $\varepsilon > 0$, and taking the limit $\varepsilon \downarrow 0$ we obtain
\[
    \frac{[{f^*}_\sharp p](t)}{[{f^*}_\sharp p](t) + [{f^*}_\sharp q](t)} = t.
\]

\begin{figure}[!h]
    \centering
    \includegraphics[width=0.89\textwidth]{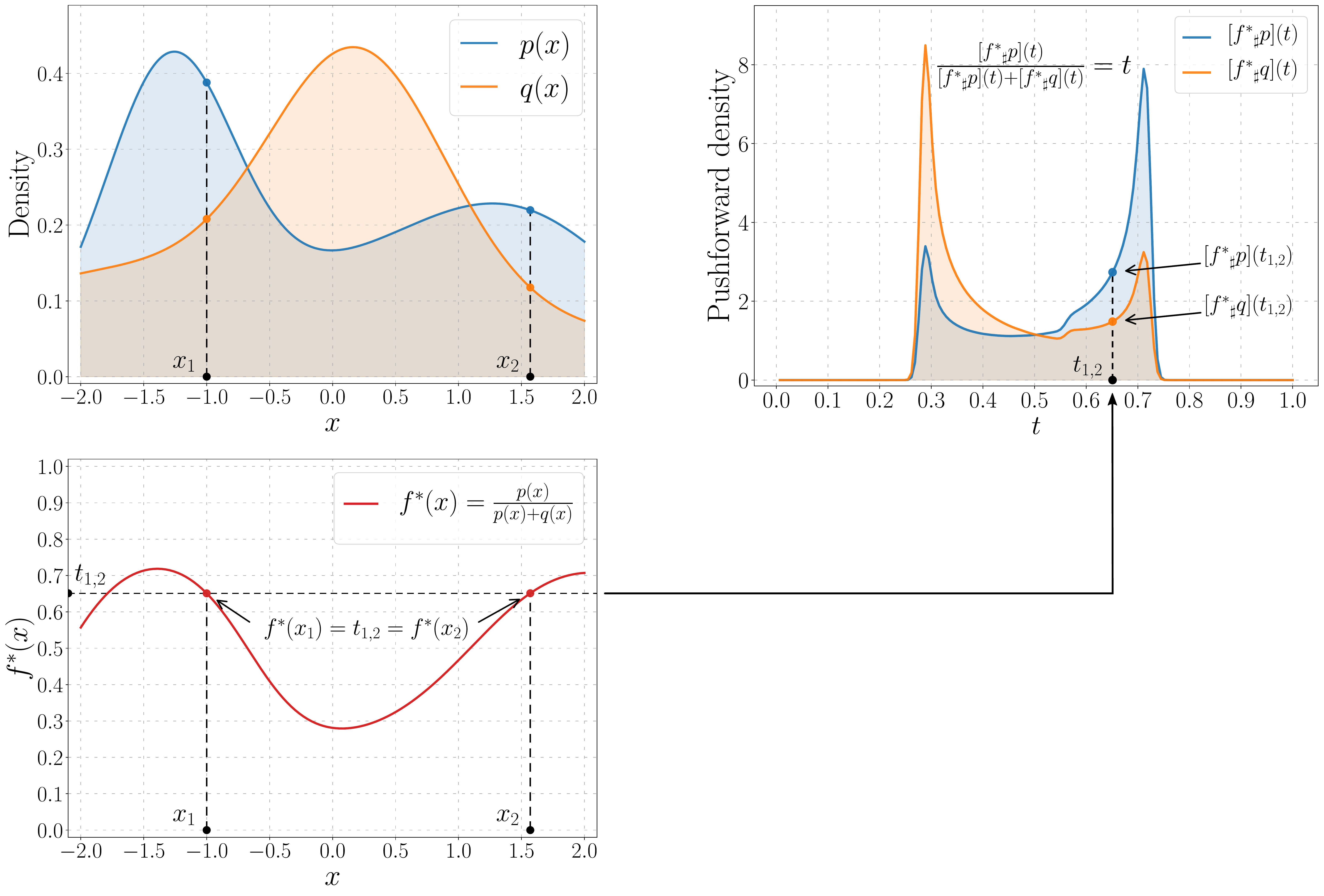}
    \caption{Visual illustration of the statement of Proposition \ref{prop:dis_ratio}. The top-left panel shows two example PDFs $p(x)$, $q(x)$ on closed interval $[-2, 2]$. The bottom-left panel shows the optimal discriminator function $f^*(x) = p(x) / (p(x) + q(x))$ as a function of $x$ on $[-2, 2]$. The top-right panel shows the PDFs $[{f^*}_{\sharp} p](t)$, $[{f^*}_{\sharp} q](t)$ of the pushforward distributions ${f^*}_{\sharp} p$, ${f^*}_{\sharp} q$ induced by the discriminator mapping $f^*$. $f^*$ maps $[-2, 2]$ to $[0, 1]$ and $[{f^*}_{\sharp} p]$, $[{f^*}_{\sharp} q]$ are defined on $[0, 1]$.
    \newline
    Consider point $x_1 \in [-2, 2]$. The value $f^*(x_1)$ characterizes the ratio of densities $p(x_1)/(p(x_1) + q(x_1))$ at $x_1$. For another point $x_2$ mapped to the same value $f^*(x_2) = f^*(x_1) = t_{1, 2}$, the ratio of densities $p(x_2)/(p(x_2) + q(x_2))$ is the same as $p(x_1)/(p(x_1) + q(x_1))$. All points $x$ mapped to $t_{1,2}$ share the same ratio of the densities $p(x) /(p(x) + q(x))$. This fact implies that the ratio of the pushforward densities $[{f^*}_{\sharp} p](t_{1, 2})/([{f^*}_{\sharp} p](t_{1, 2}) + [{f^*}_{\sharp} q](t_{1, 2}))$ at $t_{1, 2}$ must be the same as the ratio of densities $p(x_1)/(p(x_1) + q(x_1)) = t_{1, 2}$ at $x_1$ (or $x_2$). The pushforward PDFs $[{f^*}_{\sharp} q](t)$, $[{f^*}_{\sharp} q](t)$ satisfy property $[{f^*}_{\sharp} p](t)/([{f^*}_{\sharp} p](t) + [{f^*}_{\sharp} q](t)) = t$ for all $t \in \supp({f^*}_{\sharp} p) \cup \supp({f^*}_{\sharp} q)$.}
    \label{fig:pushforward}
\end{figure}

\textbf{Comment.} Intuitively this proposition states the following. If for some $x \in \mathcal{X}$ we have $f^*(x) = t \in[0,1]$, $t$ directly corresponds to the ratio of densities not only in the original space $t=p(x)/(p(x)+q(x))$, but also in the 1D discriminator output space $t=[{f^*}_\sharp p](t)/([{f^*}_\sharp p](t)+[{f^*}_\sharp q](t))$.

We also provide an intuitive example in Figure \ref{fig:pushforward}.

    \textbf{Proof of Proposition \ref{prop:1d_ali}, statement \#1.}
    
    $\Longrightarrow$: If $\mathcal{D}_W(p, q) = 0$ then $p=q$, then ${f^*}_\sharp p = {f^*}_\sharp q$. Thus, $\mathcal{D}_W({f^*}_\sharp p, {f^*}_\sharp q) = 0$.
    
    \rule{0.25\textwidth}{1pt}
    
    $\Longleftarrow$: If $\mathcal{D}_W({f^*}_\sharp p, {f^*}_\sharp q) = 0$, then ${f^*}_\sharp p = {f^*}_\sharp q$. 
    
    Consider probability of event $\left\{t \,\middle\vert\, t > \frac{1}{2}\right\}$ under distribution ${f^*}_\sharp p$.
    \begin{equation*}
        \mathbb{P}_{{f^*}_\sharp p}\left(\left\{t \,\middle\vert\, t > \frac{1}{2}\right\}\right) = \int \mathbb{I}\left[f^*(x) > \frac{1}{2}\right] p(x)\,dx,
    \end{equation*}
    where $\mathbb{I}[\cdot]$ is the indicator function ($\mathbb{I}[c]$ equal to $1$ when the condition $c$ is satisfied, and equal to $0$ otherwise).
    For all $x: p(x) > 0$, we have that $f^*(x) = \frac{p(x)}{p(x) + q(x)}$ and $p(x) + q(x) > 0$. Therefore, the expression above can be re-written as 
    \begin{equation*}
        \mathbb{P}_{{f^*}_\sharp p}\left(\left\{t \,\middle\vert\, t > \frac{1}{2}\right\}\right) = \int \mathbb{I}\left[\frac{p(x)}{p(x) + q(x)} > \frac{1}{2}\right] p(x)\,dx = \int \mathbb{I}[p(x) - q(x) > 0] p(x)\,dx.
    \end{equation*}
    
    Similarly, the probability of event $\left\{t \,\middle\vert\, t > \frac{1}{2}\right\}$ under distribution ${f^*}_\sharp q$ is
    \begin{equation*}
        \mathbb{P}_{{f^*}_\sharp q}\left(\left\{t \,\middle\vert\, t > \frac{1}{2}\right\}\right) = \int \mathbb{I}[p(x) - q(x) > 0] q(x)\,dx.
    \end{equation*}
    
    ${f^*}_\sharp p = {f^*}_\sharp q$ implies that 
    \begin{equation*}
        \mathbb{P}_{{f^*}_\sharp p}\left(\left\{t \,\middle\vert\, t > \frac{1}{2}\right\}\right) = \mathbb{P}_{{f^*}_\sharp q}\left(\left\{t \,\middle\vert\, t > \frac{1}{2}\right\}\right),
    \end{equation*}
    or equivalently
    \begin{equation*}
        \int \mathbb{I}[p(x) - q(x) > 0] (p(x) - q(x))\,dx = 0.
    \end{equation*}
    Note, that the function $\mathbb{I}[p(x) - q(x) > 0] (p(x) - q(x))$ is non-negative for any $x$. This means that the integral can be zero only if the function is zero everywhere implying that for any $x$ either $\mathbb{I}[p(x) - q(x) > 0] = 0$ or $p(x) - q(x) = 0$. In other words,
    \begin{equation*}
        p(x) \leq q(x), \quad \forall\, x.
    \end{equation*}
    Using the fact the both densities $p(x)$ and $q(x)$ must sum up to $1$, we conclude that $p=q$ and $\mathcal{D}_W(p, q) = 0$.
    
    \textbf{Proof of Proposition \ref{prop:1d_ali}, statement \#2.}
    
    Note that by (\ref{eq:dis_star}) and (\ref{eq:dis_ratio}), we have
    \[f^*(x) = \frac{p(x)}{p(x)+q(x)} = t = \frac{[{f^*}_\sharp p](t)}{[{f^*}_\sharp p](t)+[{f^*}_\sharp q](t)}, \qquad\forall x\in\supp(p)\cup\supp(q).\]
    
    \rule{0.25\textwidth}{1pt}
    
    $\Longrightarrow$: Suppose $\mathcal{D}_W^{\beta_1, \beta_2}(p, q)=0$. Then $\supp(p) = \supp(q) = S$ (by Proposition \ref{prop:ot_hierarchy}) and $\supp({f^*}_\sharp p) = \supp({f^*}_\sharp q) = T$ by (Theorem \ref{thm:1d_supp_ali}). Moreover,  $\mathcal{D}_W^{\beta_1, \beta_2}(p, q)=0$ implies
    \begin{equation*}
        \frac{1}{1+\beta_2}\leq\frac{p(x)}{q(x)}\leq1+\beta_1,\qquad\forall x \in S.
    \end{equation*}
    
    Since
    \begin{equation*}
        f^*(x) = \frac{p(x)}{p(x) + q(x)} = \frac{\frac{p(x)}{q(x)}}{1 + \frac{p(x)}{q(x)}}, \qquad \forall x \in S,
    \end{equation*}
    the inequalities above are equivalent to
    \[
        \frac{1}{2+\beta_2}\leq f^*(x)\leq\frac{1+\beta_1}{2+\beta_1}, \qquad\forall x\in S.
    \] 
    Combined with Proposition \ref{prop:dis_ratio}, the above implies that
    \[
        \frac{1}{2+\beta_2}\leq\frac{[{f^*}_\sharp p](t)}{[{f^*}_\sharp p](t)+[{f^*}_\sharp q](t)}\leq\frac{1+\beta_1}{2+\beta_1}, \qquad\forall t \in T,
    \]
    or equivalently
    \[
        \frac{1}{1+\beta_2}\leq\frac{[{f^*}_\sharp p](t)}{[{f^*}_\sharp q](t)}\leq1+\beta_1,\qquad\forall t \in T.
    \]
    Therefore, $\mathcal{D}_W^{\beta_1, \beta_2}({f^*}_\sharp p, {f^*}_\sharp q)=0$.
    
    \rule{0.25\textwidth}{1pt}
    
    $\Longleftarrow$: similarly, when $\mathcal{D}_W^{\beta_1, \beta_2}({f^*}_\sharp p, {f^*}_\sharp q)=0$,
    \begin{equation*}
        \supp({f^*}_\sharp p) = \supp({f^*}_\sharp q) = T
        \qquad \implies \qquad 
        \supp(p) = \supp(q) = S.
    \end{equation*}
    Moreover,
    \begin{gather*}
        \frac{1}{1+\beta_2}\leq\frac{[{f^*}_\sharp p](t)}{[{f^*}_\sharp q](t)}\leq1+\beta_1,\qquad\forall t \in T,\\
        \Downarrow\\
        \frac{1}{2+\beta_2}\leq\frac{[{f^*}_\sharp p](t)}{[{f^*}_\sharp p](t) + [{f^*}_\sharp q](t)}\leq \frac{1+\beta_1}{2 + \beta_2}, \qquad\forall t \in T \\
        \Downarrow\\
        \frac{1}{2+\beta_2}\leq f^*(x)\leq\frac{1+\beta_1}{2+\beta_1}, \qquad\forall x \in S,\\
        \Downarrow\\
        \frac{1}{1+\beta_2}\leq\frac{p(x)}{q(x)}\leq1+\beta_1,\qquad\forall x \in S.
    \end{gather*}
    Therefore, $\mathcal{D}_W^{\beta_1, \beta_2}(p, q)=0$.

\section{A comment on ``sliced'' SSD divergence}
\label{sec:app_linear}
\setcounter{figure}{0} 
\setcounter{table}{0}

Recent works \citep{deshpande2018generative, deshpande2019max} have proposed to perform optimal transport (OT)-based distribution alignment by reducing the OT problem (\ref{eq:w_dis}) in the original, potentially high-dimensional space, to that between 1D distributions. Specifically, \citet{deshpande2018generative} consider the sliced Wasserstein distance \citep{rabin2011wasserstein}:
\begin{equation}
    \mathcal{D}_{SW}(p, q) = \int_{\mathbb{S}^{n - 1}} \mathcal{D}_W({f^\theta}_\sharp p, {f^\theta}_\sharp q)\, d\theta,
\end{equation}
where $\mathbb{S}^{n - 1} = \{\theta \in \mathbb{R}^n \mid \|\theta\|=1\}$ is a unit sphere in $\mathbb{R}^n$, and $f^\theta$ is a 1D linear projection $f^\theta(x) = \langle \theta, x \rangle$.
It is known that $\mathcal{D}_{SW}$ is a valid distribution divergence: for any $p \neq q$ there exists a linear slicing function $f^\theta$ , $\theta \in \mathbb{S}^{n - 1}$ which identifies difference in the distributions, i.e.  ${f^\theta}_\sharp p \neq {f^\theta}_\sharp q$ (Cramér–Wold theorem). %
By reducing the original OT problem to that in a 1D space, \citet{Wu_2019_CVPR} and \citet{deshpande2019max} develop efficient practical methods for distribution alignment based on fast algorithms for the 1D OT problem.%

Unfortunately, the straight-forward extension of SSD divergence (\ref{eq:ssd}) to a 1D linearly sliced version does not provide a valid support divergence.  %

\begin{proposition}
\label{prop:linear_counterexample}
There exist two distributions $p$ and $q$ in $\mathcal{P}$, such that $\supp(p) \neq \supp(q)$ but $\supp({f^\theta}_\sharp p) = \supp({f^\theta}_\sharp q)$, $\forall f^\theta(x) = \langle \theta, x \rangle$ with $\theta \in \mathbb{S}^{n - 1}$.
\end{proposition}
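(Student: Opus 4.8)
The plan is to exhibit, for $n \ge 2$, a concrete pair $p, q$ whose supports differ but whose every one-dimensional linear shadow is the same set. I would take $q$ to be the uniform distribution on the closed unit ball $\bar{B}^n = \{x \in \mathbb{R}^n : \|x\| \le 1\}$ and $p$ to be the uniform distribution on the unit sphere $\mathbb{S}^{n-1} = \{x : \|x\| = 1\}$. Then $\supp(q) = \bar{B}^n \ne \mathbb{S}^{n-1} = \supp(p)$, so the first half of the claim is immediate. (For $n = 1$ the ``sliced'' divergence coincides with the original one, so the statement is only meaningful for $n \ge 2$, which I would note in passing.)

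Next, fix an arbitrary direction $\theta \in \mathbb{S}^{n-1}$ and study the projection $f^\theta(x) = \langle \theta, x\rangle$. The first step is to identify the image sets: by Cauchy--Schwarz, $f^\theta(\bar{B}^n) = [-1, 1]$ (the endpoints attained at $x = \pm\theta$), and since $\pm\theta \in \mathbb{S}^{n-1}$ and $\mathbb{S}^{n-1}$ is connected, its continuous image $f^\theta(\mathbb{S}^{n-1})$ is an interval containing $-1$ and $1$, hence also equals $[-1,1]$. This shows the two images coincide; it does not yet show that the pushforward measures have the same support, since a measure may be supported on a proper subset of the closure of the range of the map.

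The key step is therefore to check that each pushforward puts positive mass on every nonempty open subinterval $J \subset (-1, 1)$. For $q$, the set $(f^\theta)^{-1}(J) \cap \bar{B}^n$ is a nonempty open slab of the ball, of positive Lebesgue measure, so $[{f^\theta}_\sharp q](J) > 0$; combined with $-1$ and $1$ being limit points of this set, $\supp({f^\theta}_\sharp q) = [-1,1]$. For $p$, the set $(f^\theta)^{-1}(J) \cap \mathbb{S}^{n-1}$ is a nonempty open zonal band of the sphere, which has positive $(n-1)$-dimensional surface measure precisely because $n \ge 2$, so $[{f^\theta}_\sharp p](J) > 0$ and likewise $\supp({f^\theta}_\sharp p) = [-1, 1]$. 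Since $\theta$ was arbitrary, $\supp({f^\theta}_\sharp p) = \supp({f^\theta}_\sharp q)$ for all $\theta \in \mathbb{S}^{n-1}$, which is the claim. (If one prefers explicit formulas, the pushforwards are the semicircle-type law with density $\propto (1-t^2)^{(n-1)/2}$ and the arcsine-type law with density $\propto (1-t^2)^{(n-3)/2}$ on $(-1,1)$ respectively, both with support $[-1,1]$, but identifying the common support is all that is needed.)

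The main obstacle is conceptual rather than computational: one must resist concluding ``equal projected supports'' directly from ``equal images of the supporting sets,'' and instead verify the positivity-on-subintervals property that pins down the support of the pushforward measure. That band/slab argument is the only nontrivial ingredient, and it is also where the hypothesis $n \ge 2$ enters, since for $n = 1$ a ``band'' of the $0$-dimensional sphere $\{-1,1\}$ is finite and carries no mass.
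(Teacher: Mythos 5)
Your proof is correct and follows essentially the same strategy as the paper's, which uses the uniform distributions on a disk versus an annulus in $\mathbb{R}^2$ (rotationally symmetric supports with the same circumscribed radius, one with a hole) so that every linear slice yields the same interval. Your additional verification that the pushforward \emph{measures} charge every open subinterval — rather than merely noting that the images of the two supports coincide — is a point the paper's proof glosses over, and is worth keeping.
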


\begin{proof}
Consider a 2-dimensional Euclidean space $\mathbb{R}^2$ and let $\supp(p) = \{(x, y) | x^2+y^2\leq2\}$ and $\supp(q) = \{(x, y) | 1\leq x^2+y^2\leq2\}$. Then, $\forall f^\theta(x) = \langle \theta, x \rangle$ with $\theta \in \mathbb{S}^1$,
\[\supp({f^\theta}_\sharp p) = \supp({f^\theta}_\sharp q) = [-2, 2].\]

This counterexample is shown in Figure \ref{fig:linear_1d_supp}.
\end{proof}

\begin{figure}
    \centering
    \includegraphics[width=0.6\textwidth]{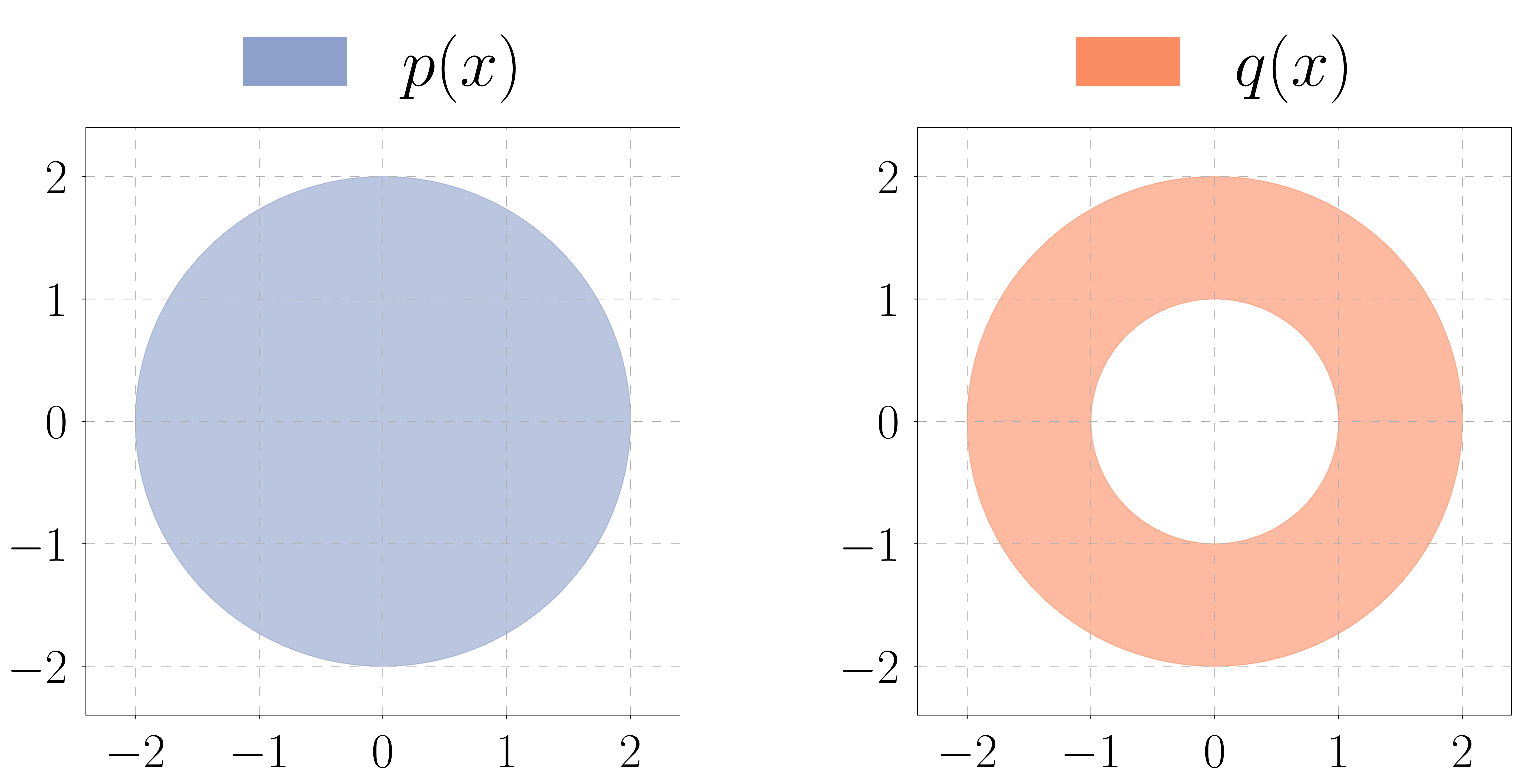}
    \caption{Visualization of example distributions for Proposition \ref{prop:linear_counterexample}}
    \label{fig:linear_1d_supp}
\end{figure}

\section{Discussion of ``soft'' and ``hard'' assignments with 1D discrete distributions}
\label{sec:app_assignment}
\setcounter{figure}{0} 
\setcounter{table}{0}

In Section \ref{sec:algorithm_connection} we considered the ``soft-assignment'' relaxed OT problem (\ref{eq:emp_w_beta}) and claimed that for integer $\beta$, the set of minimizers of (\ref{eq:emp_w_beta}) must contain a ``hard-assignment'' transportation plan, meaning $\gamma_{ij} \in \{0, 1\}, \forall i, j$. Below we justify this claim.

Note that for $\beta=0$ the OT problem (\ref{eq:emp_w_beta}) is the standard OT problem for Wasserstein-1 distance (\ref{eq:emp_w_dis}), since the inequality constraints $\sum_{i=1}^m \gamma_{ij} \leq 1$, $\forall j$ can only be satisfied as equalities. For this problem, it is known (e.g. see \cite{peyre2019computational} Proposition 2.1) that the set of optimal ``soft-assignment'' contains a ``hard-assignment'' represented by a normalized permutation matrix. This fact can be proven using the Birkhoff–von Neumann theorem. The Birkhoff–von Neumann theorem states that the set of doubly stochastic matrices
\[
    P \in \mathbb{R}^{n \times n}: \qquad P_{ij} \geq 0, \forall\, i, j, \qquad
    \sum_{j=1}^n P_{ij} = 1, \forall\,i, \qquad \sum_{i=1}^n P_{ij} = 1, \forall\, j
\]
is exactly the set of all finite convex combinations of permutation matrices. In the context of the linear program (\ref{eq:emp_w_beta}) with $\beta=0$, the Birkhoff–von Neumann theorem means that all extreme points of the polyhedron $\Gamma_\beta(o^p, o^q)$ are hard-assignment matrices. Therefore, by the fundamental theorem of linear programming \citep{bertsimas1997introduction}, the minimum of the objective is reached at a ``hard-assignment'' matrix.

We argue that a similar result holds for the case of integer $\beta > 0$. In this case, the matrices in $\Gamma_\beta(o^p, o^q)$ can not be associated with the doubly stochastic matrices, since constraints on of the marginals of $\gamma$ are relaxed to inequality constraints. Because of that, the Birkhoff–von Neumann theorem can not be applied. However, \citet{budish2009implementing} provide a generalization of the Birkhoff–von Neumann theorem (Theorem 1 in \citep{budish2009implementing}) which applies to the cases where the equality constraints are replaced with integer-valued inequality constraints (recall that we consider integer $\beta$). Using this generalized result, our claim can be proven by performing the following steps. 

Clearly, the polyhedron $\Gamma_\beta(o^p, o^q)$ contains all ``hard-assignment'' matrices and all their finite convex combinations. The result proven in \citep{budish2009implementing} implies that each element of $\Gamma_\beta(o^p, o^q)$ can be represented as a finite convex combination of ``hard-assignment'' matrices. Thus, the polyhedron $\Gamma_\beta(o^p, o^q)$ is exactly the set of all finite convex combinations of ``hard-assignment'' matrices and all extreme points of the polyhedron are ``hard-assignment'' matrices. Finally, by analogy with the case of $\beta=0$, we invoke the fundamental theorem of the linear programming and conclude that the minimum of the objective (\ref{eq:emp_w_beta}) is reached at $\gamma$ corresponding to a ``hard-assignment'' matrix.

\newpage
\section{Experiment details}
\label{sec:app_experiment}
\setcounter{figure}{0} 
\setcounter{table}{0}

\subsection{USPS to MNIST experiment specifications}
\label{sec:app_usps_mnist}

We use USPS \citep{hull1994database} and MNIST \citep{lecun1998gradient} datasets for this adaptation problem.

Following \citet{tachet2020domain} we use LeNet-like \citep{lecun1998gradient} architecture for the feature extractor with the 500-dimensional feature representation. The classifier consists of a single linear layer. The discriminator is implemented by a 3-layer MLP with 512 hidden units and leaky-ReLU activation.

We train all methods for $65\,000$ steps with batch size $64$. We train the feature extractor, the classifier, and the discriminator with SGD (learning rate $0.02$, momentum $0.9$, weight decay $5\cdot10^{-4})$. We perform a single discriminator update per 1 update of the feature extractor and the classifier. After the first $30\,000$ steps we linearly anneal the feature extractor's and classifier's learning rates for $30\,000$ steps to the final value $2\cdot 10^{-5}$.

The feature extractor's loss is given by a weighted combination of the cross-entropy classification loss on the labeled source example and a domain alignment loss computed from the discriminator's signal (recall that different method use different forms of the alignment loss). The weight for the classification term is constant and set to $\lambda_\text{cls} = 1$. We introduce schedule for the alignment weight $\lambda_\text{align}$. For all alignment methods we linearly increase $\lambda_\text{align}$ from $0$ to $1.0$ during the  first $10000$ steps. 

For ASA we use history buffers of size 1000.

\subsection{STL to CIFAR experiment specifications}

We use STL \citep{coates2011analysis} and CIFAR-10 \citep{krizhevsky2009learning} for this adaptation task. STL and CIFAR-10 are both 10-class classification problems. There are 9 common classes between the two datasets. Following \citet{shu2018dirtt} we create a 9-class classification problem by selecting the subsets of examples of the 9 common classes.

For the feature extractor, we adapt the deep CNN architecture of \citet{shu2018dirtt}. The feature representation is a $192$-dimensional vector. The classifier consists of a single linear layer. The discriminator is implemented by a 3-layer MLP with 512 hidden units and leaky-ReLU activation.

We train all methods for $40\,000$ steps with batch size $64$. We train the feature extractor, the classifier, and the discriminator with ADAM \citep{kingma2014adam} (learning rate $0.001$, $\beta_1 = 0.5$, $\beta_2=0.999$, no weight decay). We perform a single discriminator update per 1 update of the feature extractor and the classifier.

The weight for the classification loss term is constant and set to $\lambda_\text{cls} = 1$. For all alignment methods we use constant alignment weight $\lambda_\text{align}=0.1$.  

For ASA we use history buffers of size 1000.

\textbf{Conditional entropy loss.} Following \citep{shu2018dirtt} we use auxiliary conditional entropy loss on target examples for domain adaptation methods. For a classifier $C^\phi: \mathcal{Z} \to \mathcal{Y}$ and
a feature extractor $F^\theta: \mathcal{X} \to \mathcal{Z}$ where classifier outputs the distribution over class labels $\{1, \ldots, K\}$
\[
    C^\phi(z) \in \mathbb{R}^K: \quad \big[C^\phi(z)\big]_k \geq 0, \quad \sum\limits_{k=1}^K \big[C^\phi(z)\big]_k =1,
\]
the conditional entropy loss on target examples $\{x_i^q\}_{i=1}^{N_q}$ is given by
\begin{equation}
    \label{eq:entropy}
    \mathcal{L}_{\text{ent}} = \lambda_{\text{ent}} \cdot \frac{1}{N^q} \sum_{i=1}^{N_q} \Bigg(
        - \sum_{k=1}^K \big[C^\phi(F^\theta(x_i^q))\big]_k \log \big[C^\phi(F^\theta(x_i^q))\big]_k
    \Bigg).
\end{equation}
$\lambda_{\text{ent}}$ is the weight of the conditional entropy loss in the total training objective. This loss acts as an additional regularization of the embeddings of the unlabeled target examples: minimization of the conditional entropy pushes target embeddings away from the classifier’s decision boundary.

For all domain adapation methods we use the conditional entropy loss (\ref{eq:entropy}) on target examples with the weight $\lambda_{\text{ent}} = 0.1$.

\subsection{Extended experimental results on STL to CIFAR}

\textbf{Effect of conditional entropy loss.} In order to quantify the improvements of the support alignment objective and the conditional entropy objective in separation, we conduct an ablation study. In addition to the results reported in Section \ref{sec:experiment}, we evaluate all domain adaptation methods (except VADA which uses the conditional entropy in the original implementation) on STL$\to$CIFAR task without the conditional entropy loss ($\lambda_{\text{ent}} = 0$). The results of the ablation study are presented in Table \ref{tab:exp_ent_weight}. We observe that the effect of the auxiliary conditional entropy is essentially the same for all methods across all imbalance levels: with $\lambda_{\text{ent}} = 0.1$ the accuracy either improves (especially the average class accuracy) or roughly stays on the same level. The relative ranking of distribution alignment, relaxed distribution alignment, and support alignment methods is the same with both $\lambda_{\text{ent}} = 0$ and $\lambda_{\text{ent}} = 0.1$. The results demonstrate that the benefits of support alignment approach and conditional entropy are orthogonal.

\begin{table}[!ht]
    \centering
    \caption{Results of ablation experiments of the effect of auxiliary conditional entropy loss on STL$\rightarrow$CIFAR data. Same setup and reporting metrics as Table \ref{tab:usps2mnist}.}
    \label{tab:exp_ent_weight}
    \resizebox{1.0\textwidth}{!}{
    \renewcommand{\arraystretch}{1.1}
    \begin{tabular}{llllllllll}
        \toprule
        & & \multicolumn{2}{c}{$\alpha = 0.0$}                                                  & \multicolumn{2}{c}{$\alpha = 1.0$}                                                    & \multicolumn{2}{c}{$\alpha = 1.5$}                                                    & \multicolumn{2}{c}{$\alpha = 2.0$}                                                    \\
        \cmidrule(lr){3-4}
        \cmidrule(lr){5-6}
        \cmidrule(lr){7-8}
        \cmidrule(lr){9-10}
        Algorithm & $\lambda_{\text{ent}}$ &  \multicolumn{1}{c}{average} & \multicolumn{1}{c}{min} & \multicolumn{1}{c}{average} & \multicolumn{1}{c}{min} & \multicolumn{1}{c}{average} & \multicolumn{1}{c}{min} & \multicolumn{1}{c}{average} & \multicolumn{1}{c}{min} \\
        \midrule
        DANN & 0.0         & $ 74.6_{~74.1}^{~75.1} $ & $ 51.5_{~49.9}^{~55.0} $ & $ 68.4_{~67.0}^{~69.2} $ & $ 43.2_{~41.2}^{~43.7} $ & $ 65.7_{~62.8}^{~65.9} $ & $ 35.5_{~29.6}^{~36.2} $ & $ 62.5_{~60.0}^{~64.6} $ & $ 27.5_{~25.7}^{~27.5} $ \\
        DANN & 0.1         & $ 75.3_{~74.9}^{~75.4} $ & $ 54.6_{~54.2}^{~56.6} $ & $ 69.9_{~68.6}^{~70.1} $ & $ 44.8_{~40.7}^{~45.1} $ & $ 64.9_{~63.7}^{~67.1} $ & $ 34.9_{~33.9}^{~36.8} $ & $ 63.3_{~57.4}^{~64.8} $ & $ 27.0_{~21.2}^{~28.5} $ \\
        \midrule
        IWDAN & 0.0        & $ 70.4_{~70.2}^{~70.7} $ & $ 47.2_{~46.8}^{~48.0} $ & $ 68.6_{~68.4}^{~68.8} $ & $ 43.6_{~43.2}^{~46.3} $ & $ 66.7_{~66.0}^{~67.9} $ & $ 44.7_{~43.3}^{~46.2} $ & $ 63.9_{~62.9}^{~66.1} $ & $ 36.5_{~32.7}^{~37.3} $ \\
        IWDAN & 0.1        & $ 69.9_{~69.9}^{~70.7} $ & $ 50.5_{~47.9}^{~50.6} $ & $ 68.7_{~68.6}^{~69.1} $ & $ 45.8_{~44.8}^{~50.5} $ & $ 67.1_{~65.9}^{~67.3} $ & $ 44.7_{~40.4}^{~44.8} $ & $ 64.4_{~63.6}^{~64.9} $ & $ 36.8_{~34.5}^{~37.9} $ \\
        IWCDAN & 0.0       & $ 70.1_{~70.0}^{~70.8} $ & $ 50.5_{~49.1}^{~50.8} $ & $ 68.6_{~68.2}^{~69.4} $ & $ 44.2_{~41.2}^{~45.8} $ & $ 66.0_{~65.9}^{~66.0} $ & $ 45.0_{~43.7}^{~47.8} $ & $ 63.8_{~62.3}^{~64.1} $ & $ 37.3_{~33.6}^{~37.7} $ \\
        IWCDAN & 0.1       & $ 70.1_{~70.1}^{~70.2} $ & $ 47.8_{~42.4}^{~49.3} $ & $ 69.4_{~69.1}^{~69.4} $ & $ 47.1_{~46.3}^{~51.3} $ & $ 66.1_{~65.0}^{~67.2} $ & $ 39.9_{~37.7}^{~40.8} $ & $ 64.5_{~63.9}^{~65.1} $ & $ 37.0_{~35.5}^{~40.2} $ \\
        sDANN-4 & 0.0      & $ 69.4_{~68.8}^{~70.0} $ & $ 46.5_{~45.1}^{~49.7} $ & $ 69.6_{~69.3}^{~69.7} $ & $ 49.1_{~47.4}^{~49.2} $ & $ 68.0_{~67.8}^{~68.6} $ & $ 48.2_{~42.6}^{~48.8} $ & $ 66.3_{~64.2}^{~66.4} $ & $ 40.7_{~36.6}^{~42.9} $ \\
        sDANN-4 & 0.1      & $ 71.8_{~71.7}^{~72.1} $ & $ 52.1_{~52.1}^{~52.8} $ & $ 71.1_{~70.4}^{~71.7} $ & $ 49.9_{~48.1}^{~51.8} $ & $ 69.4_{~68.7}^{~70.0} $ & $ 48.6_{~43.5}^{~49.0} $ & $ 66.4_{~66.2}^{~67.9} $ & $ 39.0_{~33.6}^{~47.1} $ \\
        \midrule
        \midrule
        ASA-sq & 0.0        & $ 69.9_{~69.9}^{~70.3} $ & $ 48.0_{~46.6}^{~50.1} $ & $ 68.8_{~68.6}^{~68.9} $ & $ 47.3_{~45.3}^{~49.3} $ & $ 68.1_{~67.2}^{~68.7} $ & $ 45.4_{~45.2}^{~47.8} $ & $ 65.7_{~65.6}^{~66.4} $ & $ 43.6_{~41.3}^{~45.0} $ \\
        ASA-sq & 0.1        & $ 71.7_{~71.7}^{~71.9} $ & $ 52.9_{~46.7}^{~53.4} $ & $ 70.7_{~70.4}^{~71.0} $ & $ 51.6_{~46.8}^{~52.7} $ & $ 69.2_{~69.2}^{~69.3} $ & $ 45.6_{~43.3}^{~52.0} $ & $ 68.1_{~67.2}^{~68.2} $ & $ 44.7_{~39.8}^{~45.9} $ \\
        ASA-abs & 0.0       & $ 69.8_{~68.9}^{~70.0} $ & $ 45.7_{~45.4}^{~48.0} $ & $ 68.4_{~68.4}^{~68.6} $ & $ 44.3_{~44.0}^{~46.8} $ & $ 67.9_{~67.0}^{~68.1} $ & $ 46.6_{~40.4}^{~48.4} $ & $ 66.3_{~65.7}^{~66.9} $ & $ 41.6_{~40.3}^{~44.9} $ \\
        ASA-abs & 0.1       & $ 71.6_{~71.2}^{~71.7} $ & $ 49.0_{~48.4}^{~53.5} $ & $ 70.9_{~70.8}^{~71.0} $ & $ 49.2_{~47.3}^{~50.0} $ & $ 69.6_{~69.6}^{~69.9} $ & $ 43.2_{~42.1}^{~49.5} $ & $ 67.8_{~66.6}^{~68.2} $ & $ 40.9_{~35.4}^{~49.0} $ \\
        \bottomrule
    \end{tabular}
    }
\end{table}

\textbf{Comparison with optimal transport based baselines.} 

We provide additional experimental results comparing our method with OT-based methods for domain adaptation. We implement two OT-based methods which we describe below.

\begin{itemize}
    \item The first method is a variant of the max-sliced Wasserstein distance (which was proposed for GAN training by \citet{deshpande2019max}) for domain adaptation. In the table below we refer to this method as DANN-OT. In our implementation DANN-OT minimizes the Wasserstein distance between the pushforward distributions $g^*_{\sharp} p$, $g^*_{\sharp} q$ induced by the optimal log-loss discriminator $g^*$ (\ref{eq:dis_logit}). As discussed in Section \ref{sec:algorithm_connection} (paragraph “Distribution alignment”) the computation of the Wasserstein distance between 1D distributions can be implemented efficiently via sorting. 
    \item The second method is an OT-based variant of DANN which uses a dual Wasserstein discriminator instead of the log-loss discriminator. In the table below we refer to this method as DANN-WGP. This method minimizes the Wasserstein distance in its dual Kantorovich form. We train the discriminator with the Wasserstein dual objective and a gradient penalty proposed to enforce Lipshitz-norm constraint \citep{gulrajani2017improved}.
 \end{itemize}

We present the evaluation results of the OT-based methods on STL$\to$CIFAR domain adaptation task in the Table \ref{table:extra_ot}. Note that the OT-based methods aim to enforce distribution alignment constraints. We observe that the OT-based methods follow the same trend as DANN: they deliver improved accuracy compared to No DA in the balanced setting, but suffer in the imbalanced settings ($\alpha > 0$) due to their distribution alignment nature.

\begin{table}[!h]
    \centering
    \caption{Results of comparison with optimal transport based methods on STL$\rightarrow$CIFAR data. Same setup and reporting metrics as Table \ref{tab:usps2mnist}.}
    \label{table:extra_ot}
    \resizebox{1.0\textwidth}{!}{
    \renewcommand{\arraystretch}{1.1}
    \begin{tabular}{lllllllll}
    \toprule
        & \multicolumn{2}{c}{$\alpha = 0.0$}                                                    & \multicolumn{2}{c}{$\alpha = 1.0$}                                                    & \multicolumn{2}{c}{$\alpha = 1.5$}                                                    & \multicolumn{2}{c}{$\alpha = 2.0$} \\
        \cmidrule(lr){2-3}
        \cmidrule(lr){4-5}
        \cmidrule(lr){6-7}
        \cmidrule(lr){8-9}
        Algorithm & \multicolumn{1}{c}{average} & \multicolumn{1}{c}{min} & \multicolumn{1}{c}{average} & \multicolumn{1}{c}{min} & \multicolumn{1}{c}{average} & \multicolumn{1}{c}{min} & \multicolumn{1}{c}{average} & \multicolumn{1}{c}{min} \\
    \midrule
    No DA           & $ 69.9_{~69.8}^{~70.0} $ & $ 49.8_{~45.3}^{~50.6} $ & $ 68.8_{~68.3}^{~69.3} $ & $ 47.2_{~45.3}^{~48.2} $ & $ 66.8_{~66.4}^{~67.2} $ & $ 46.0_{~45.8}^{~47.0} $ & $ 65.8_{~64.8}^{~66.7} $ & $ 43.7_{~41.6}^{~44.6} $ \\
    \midrule
    DANN                  & $ 75.3_{~74.9}^{~75.4} $ & $ 54.6_{~54.2}^{~56.6} $ & $ 69.9_{~68.6}^{~70.1} $ & $ 44.8_{~40.7}^{~45.1} $ & $ 64.9_{~63.7}^{~67.1} $ & $ 34.9_{~33.9}^{~36.8} $ & $ 63.3_{~57.4}^{~64.8} $ & $ 27.0_{~21.2}^{~28.5} $ \\
    DANN-OT               & $ 76.0_{~75.8}^{~76.0} $ & $ 55.2_{~54.3}^{~55.5} $ & $ 67.7_{~67.1}^{~68.9} $ & $ 43.0_{~36.5}^{~43.7} $ & $ 64.5_{~60.9}^{~65.1} $ & $ 34.4_{~29.3}^{~34.6} $ & $ 61.3_{~54.4}^{~62.0} $ & $ 24.3_{~23.2}^{~25.5} $ \\
    DANN-WGP              & $ 74.8_{~74.7}^{~75.1} $ & $ 53.5_{~53.3}^{~54.4} $ & $ 67.7_{~65.3}^{~67.9} $ & $ 38.6_{~34.4}^{~41.0} $ & $ 63.3_{~57.1}^{~63.4} $ & $ 27.0_{~26.3}^{~32.4} $ & $ 59.0_{~54.3}^{~61.8} $ & $ 21.9_{~18.6}^{~22.5} $ \\
    \midrule
    sDANN-4              & $ 71.8_{~71.7}^{~72.1} $ & $ 52.1_{~52.1}^{~52.8} $ & $ 71.1_{~70.4}^{~71.7} $ & $ 49.9_{~48.1}^{~51.8} $ & $ 69.4_{~68.7}^{~70.0} $ & $ 48.6_{~43.5}^{~49.0} $ & $ 66.4_{~66.2}^{~67.9} $ & $ 39.0_{~33.6}^{~47.1} $ \\
    \midrule
    \midrule
    ASA-sq            & $ 71.7_{~71.7}^{~71.9} $ & $ 52.9_{~46.7}^{~53.4} $ & $ 70.7_{~70.4}^{~71.0} $ & $ 51.6_{~46.8}^{~52.7} $ & $ 69.2_{~69.2}^{~69.3} $ & $ 45.6_{~43.3}^{~52.0} $ & $ 68.1_{~67.2}^{~68.2} $ & $ 44.7_{~39.8}^{~45.9} $ \\
    ASA-abs           & $ 71.6_{~71.2}^{~71.7} $ & $ 49.0_{~48.4}^{~53.5} $ & $ 70.9_{~70.8}^{~71.0} $ & $ 49.2_{~47.3}^{~50.0} $ & $ 69.6_{~69.6}^{~69.9} $ & $ 43.2_{~42.1}^{~49.5} $ & $ 67.8_{~66.6}^{~68.2} $ & $ 40.9_{~35.4}^{~49.0} $ \\
    \bottomrule
    \end{tabular}
    }
\end{table}

We would also like to make a comment on OT-based relaxed distribution alignment. \citet{wu2019domain} propose method WDANN-$\beta$ which minimizes the dual form of the asymmetrically-relaxed Wasserstein distance. However, they observe that sDANN-$\beta$ outperforms WDANN-$\beta$ in experiments. Hence, we use sDANN-$\beta$ as a relaxed distribution alignment baseline in our experiments.

\textbf{Effect of alignment weight.} We provide additional experimental results comparing ASA with DANN and VADA across different values of the alignment loss weight $\lambda_{\text{align}}$ on STL$\to$CIFAR task. The results are shown in Table \ref{table:alignment_weight}.
DANN with a higher alignment weight $\lambda_{\text{align}} = 1.0$ performs better in the balanced ($\alpha=0$) setting and worse in the imbalanced ($\alpha>0$) setting compared to a lower weight $\lambda_{\text{align}} = 0.1$, as the distribution alignment constraint is enforced stricter. VADA optimizes a combination of distribution alignment + VAT (virtual adversarial training) objectives \citep{shu2018dirtt}, and we observe the same trend: with lower alignment weight $\lambda_{\text{align}} = 0.01$, VADA performs worse in the balanced setting and better in the imbalanced setting compared to a higher weight $\lambda_{\text{align}} = 0.1$. Weight $\lambda_{\text{align}} = 0.1$ is a middle ground between having poor performance in the imbalanced setting ($\lambda_{\text{align}} = 1.0$) and not sufficiently enforcing distribution alignment ($\lambda_{\text{align}} = 0.01$).

The role of VAT (similarly to that of conditional entropy loss) is orthogonal to alignment objectives. Thus, we provide additional evaluations of combining support alignment and VAT (the “ASA-sq + VAT” entry in Table \ref{table:alignment_weight}) with alignment weight $\lambda_{\text{align}} = 1.0$. The good performance of such combination shows that:
\begin{itemize}
    \item One could improve our current support alignment performance by using auxiliary objectives.
    \item Support alignment based method performs qualitatively different from distribution alignment based method, since the performance holds with stricter support alignment while distribution alignment needs to loosen the constraints considerably to reduce the performance degradation in the imbalanced setting.
\end{itemize}

\begin{table}[!h]
    \centering
    \caption{Results of comparison of ASA with DANN and VADA across different values of the alignment loss weight $\lambda_{\text{align}}$ on STL$\rightarrow$CIFAR data. Same setup and reporting metrics as Table \ref{tab:usps2mnist}.}
    \label{table:alignment_weight}
    \resizebox{1.0\textwidth}{!}{
    \renewcommand{\arraystretch}{1.1}
    \begin{tabular}{llllllllll}
        \toprule
        & & \multicolumn{2}{c}{$\alpha = 0.0$}                                                    & \multicolumn{2}{c}{$\alpha = 1.0$}                                                    & \multicolumn{2}{c}{$\alpha = 1.5$}                                                    & \multicolumn{2}{c}{$\alpha = 2.0$} \\
        \cmidrule(lr){3-4}
        \cmidrule(lr){5-6}
        \cmidrule(lr){7-8}
        \cmidrule(lr){9-10}
        Algorithm & $\lambda_{\text{align}}$ & \multicolumn{1}{c}{average} & \multicolumn{1}{c}{min} & \multicolumn{1}{c}{average} & \multicolumn{1}{c}{min} & \multicolumn{1}{c}{average} & \multicolumn{1}{c}{min} & \multicolumn{1}{c}{average} & \multicolumn{1}{c}{min} \\
        \midrule
        DANN & 0.01              & $ 72.3_{~72.2}^{~72.7} $ & $ 49.5_{~48.8}^{~50.8} $ & $ 70.6_{~69.7}^{~71.2} $ & $ 48.9_{~41.5}^{~51.2} $ & $ 68.5_{~67.2}^{~68.7} $ & $ 46.1_{~36.2}^{~50.0} $ & $ 65.9_{~64.1}^{~66.0} $ & $ 36.7_{~29.9}^{~39.4} $ \\
        DANN & 0.1               & $ 75.3_{~74.9}^{~75.4} $ & $ 54.6_{~54.2}^{~56.6} $ & $ 69.9_{~68.6}^{~70.1} $ & $ 44.8_{~40.7}^{~45.1} $ & $ 64.9_{~63.7}^{~67.1} $ & $ 34.9_{~33.9}^{~36.8} $ & $ 63.3_{~57.4}^{~64.8} $ & $ 27.0_{~21.2}^{~28.5} $ \\
        DANN & 1.0               & $ 77.2_{~76.8}^{~77.3} $ & $ 58.5_{~56.7}^{~59.4} $ & $ 66.3_{~64.5}^{~66.8} $ & $ 37.9_{~37.5}^{~41.6} $ & $ 62.8_{~56.1}^{~63.3} $ & $ 27.5_{~24.6}^{~28.9} $ & $ 58.7_{~52.3}^{~59.7} $ & $ 18.5_{~17.2}^{~20.5} $ \\
        \midrule
        VADA & 0.01              & $ 74.4_{~74.2}^{~74.4} $ & $ 54.2_{~52.6}^{~55.4} $ & $ 71.7_{~71.7}^{~71.7} $ & $ 51.6_{~45.0}^{~52.0} $ & $ 69.5_{~68.4}^{~69.7} $ & $ 47.5_{~40.0}^{~49.8} $ & $ 65.9_{~64.8}^{~66.1} $ & $ 37.2_{~35.3}^{~39.4} $ \\
        VADA & 0.1               & $ 76.7_{~76.6}^{~76.7} $ & $ 56.9_{~53.5}^{~58.3} $ & $ 70.6_{~70.0}^{~71.0} $ & $ 47.7_{~44.0}^{~48.8} $ & $ 66.1_{~65.4}^{~66.5} $ & $ 35.7_{~33.3}^{~39.3} $ & $ 63.2_{~60.2}^{~64.7} $ & $ 25.5_{~25.2}^{~28.0} $ \\
        \midrule
        \midrule
        ASA-sq & 0.1             & $ 71.7_{~71.7}^{~71.9} $ & $ 52.9_{~46.7}^{~53.4} $ & $ 70.7_{~70.4}^{~71.0} $ & $ 51.6_{~46.8}^{~52.7} $ & $ 69.2_{~69.2}^{~69.3} $ & $ 45.6_{~43.3}^{~52.0} $ & $ 68.1_{~67.2}^{~68.2} $ & $ 44.7_{~39.8}^{~45.9} $ \\
        ASA-sq + VAT & 1.0      & $ 74.2_{~74.0}^{~74.5} $ & $ 52.2_{~51.9}^{~52.5} $ & $ 72.2_{~71.9}^{~72.2} $ & $ 53.5_{~45.4}^{~53.6} $ & $ 70.6_{~70.4}^{~70.8} $ & $ 48.9_{~45.6}^{~52.3} $ & $ 67.4_{~66.8}^{~67.7} $ & $ 43.0_{~39.4}^{~46.0} $ \\
        \bottomrule
    \end{tabular}
    }
\end{table}

\newpage

\subsection{VisDA-17 experiment specifications}
\label{sec:app_visda}

We use train and validation sets of the VisDA-17 challenge \citep{visda2017}.

For the feature extractor we use ResNet-50 \cite{he2016deep} architecture with modified output size of the final linear layer. The feature representation is 256-dimensional vector. We use the weights from pre-trained ResNet-50 model (\texttt{torchvision} model hub) for all layers except the final linear layer. The classifier consists of a single linear layer. The discriminator is implemented by a 3-layer MLP with 1024 hidden units and leaky-ReLU activation.

We train all methods for $50\,000$ steps with batch size $36$. We train the feature extractor, the classifier, and the discriminator with SGD. For the feature extractor we use learning rate $0.001$, momentum $0.9$, weight decay $0.001$. For the classifier we use learning rate $0.01$, momentum $0.9$, weight decay $0.001$. For the discriminator we use learning rate $0.005$, momentum $0.9$, weight decay $0.001$. We perform a single discriminator update per 1 update of the feature extractor and the classifier. We linearly anneal the feature extractor's and classifier's learning rate throughout the training ($50\,000$) steps. By the end of the training the learning rates of the feature extractor and the classifier are decreased by a factor of $0.05$. 

The weight for the classification term is constant and set to $\lambda_\text{cls} = 1$. We introduce schedule for the alignment weight $\lambda_\text{align}$. For all alignment methods we linearly increase $\lambda_\text{align}$ from $0$ to $0.1$ during the first $10000$ steps. For all methods we use auxiliary conditional entropy loss on target examples with the weight $\lambda_{\text{ent}} = 0.1$.

For ASA we use history buffers of size 1000.

\subsection{History size effect and evaluation of support distance}

\label{sec:app_history}

\textbf{History size effect}. To quantify the effects of mini-batch training mentioned in Section \ref{sec:asa}, we explore different sizes of history buffers on USPS$\to$MNIST task with the label distribution shift $\alpha = 1.5$. The results are presented in Figure \ref{fig:history} and Table \ref{tab:history}. Figure \ref{fig:app_disc} shows the  distributions of outputs of the learned discriminator at the end of the training. While without any alignment objectives neither the densities nor the supports of ${g^\psi}_\sharp p^\theta_Z$ and ${g^\psi}_\sharp q^\theta_Z$ are aligned, both alignment methods approximately satisfy their respective alignment constraints. Compared with DANN results, ASA with small history size performs similarly to distribution alignment, while all history sizes are enough for support alignment. We also observe the correlation between distribution distance and target accuracy: under label distribution shifts, the better distribution alignment is achieved, the more target accuracy suffers. Note that with too big history buffers (e.g. $n=5000$), we observe a sudden drop in performance and increases in distances. We hypothesize that this could be caused by the fact that the history buffer stores discriminator output values from the past steps while the discriminator parameters constantly evolve during training. As a result, for a large history buffer, the older items might no longer accurately represent the current pushforward distribution as they become outdated.%

\begin{figure}[!h]
    \centering
    \includegraphics[width=\textwidth]{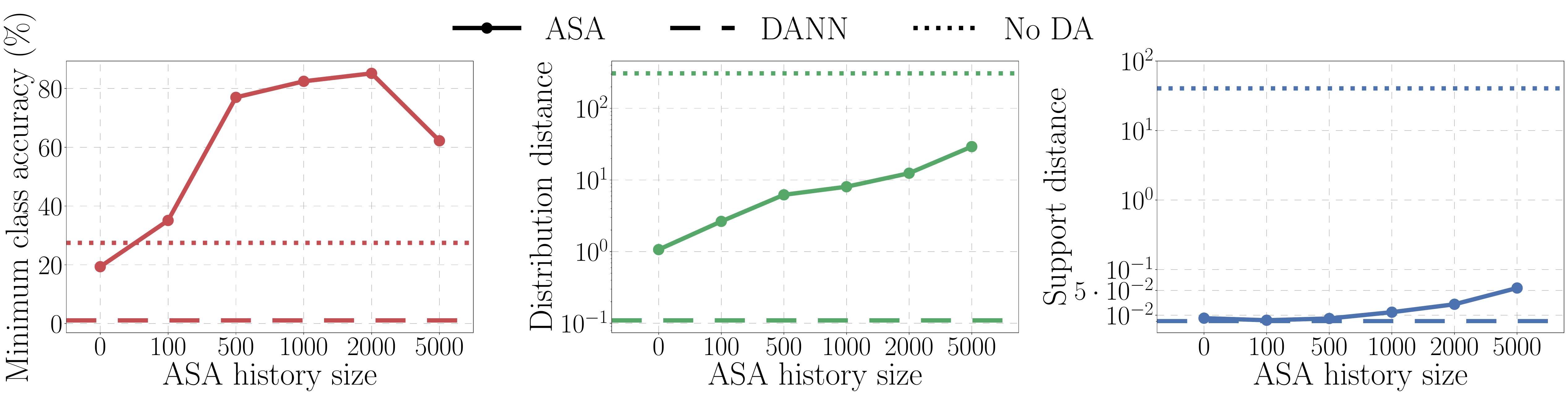}
    \caption{Evaluation of history size effect for ASA on MNIST$\to$USPS with the label distribution shift ($\alpha = 1.5$). The panels show (left to right): minimum class accuracy on target test set; Wasserstein distance $\mathcal{D}_W({g^\psi}_\sharp p^\theta_Z, {g^\psi}_\sharp q^\theta_Z)$ between the pushforward distributions of source and target representations induced by the discriminator; SSD divergence $\mathcal{D}_\triangle({g^\psi}_\sharp p^\theta_Z, {g^\psi}_\sharp q^\theta_Z)$ between the pushforward distributions. In each panel the dashed lines show the respective quantities for ``No DA'' and DANN methods.}
    \label{fig:history}
\end{figure}

\newpage
\textbf{Direct evaluation of support distance.} In order to directly evaluate the ability of ASA (with history buffers) to enforce support alignment, we consider the setting of the illustrative experiment described in Section \ref{sec:experiment} (3-class USPS$\to$MNIST adaptation with 2D feature extractor, $\alpha = 1.5$). We compare methods No~DA, DANN, and ASA-abs (with different history buffer sizes). For each method we consider the embedding space of the learned feature extractor at the end of training and compute Wasserstein distance $\mathcal{D}_W(p^\theta_Z, q^\theta_Z)$ and SSD divergence $\mathcal{D}_\triangle(p^\theta_Z, q^\theta_Z)$ between the embeddings of source and target domain (note that we compute the distances in the original embedding space directly without projecting data to 1D with the discriminator). To ensure meaningful comparison of the distances between different embedding spaces, we apply a global affine transformation for each embedding space: we center the embeddings so that their average is 0 and re-scale them so that their average norm is 1. The results of this evaluation are shown in Table \ref{tab:exp_2d}. We observe that, compared to no alignment and distribution alignment (DANN) methods, ASA aligns the supports without necessarily aligning the distributions (in this imbalanced setting, distribution alignment implies low adaptation accuracy).

\begin{table}[!h]
    \centering
    \caption{Analysis of effect history size parameter for ASA on USPS$\to$MNIST with class label distribution shift corresponding to $\alpha=1.5$. We report distribution and support distances between the pushforward distributions ${g^\psi}_\sharp p^\theta_Z$ and ${g^\psi}_\sharp q^\theta_Z$, %
    as well as the value of discriminator's log-loss.}
    \label{tab:history}
    \resizebox{1.0\textwidth}{!}{
        \renewcommand{\arraystretch}{1.1}
        \begin{tabular}{llllccl}
        \toprule
        &  & \multicolumn{2}{c}{Target accuracy (\%)}  & \multicolumn{2}{c}{Distribution distances} &  \\
        \cmidrule(lr){3-4}
        \cmidrule(lr){5-6}
        Method & History size & \multicolumn{1}{c}{average} & \multicolumn{1}{c}{min} & \multicolumn{1}{c}{$\mathcal{D}_W({g^\psi}_\sharp p^\theta_Z, {g^\psi}_\sharp q^\theta_Z)$} &  \multicolumn{1}{c}{$\mathcal{D}_\triangle({g^\psi}_\sharp p^\theta_Z, {g^\psi}_\sharp q^\theta_Z)$} & Log-loss \\
        \midrule
        No DA & ---    & $ 71.28_{~71.25}^{~72.51} $ & $ 27.46_{~24.21}^{~37.26} $ & $ 307.56_{~277.33}^{~322.00} $ & $ 40.35_{~32.10}^{~46.06} $ & $ 00.05_{~00.04}^{~00.07} $ \\
        \midrule
        DANN  & ---    & $ 69.96_{~63.89}^{~71.25} $ & $ 01.11_{~00.99}^{~01.53} $ & $ 00.11_{~00.10}^{~00.11} $ & $ 00.00_{~00.00}^{~00.00} $ & $ 00.65_{~00.65}^{~00.65} $    \\
        \midrule
        \midrule
        ASA-abs & 0    & $ 62.75_{~61.78}^{~64.35} $ & $ 19.36_{~17.90}^{~23.63} $ & $ 01.07_{~00.99}^{~01.15} $ & $ 00.01_{~00.00}^{~00.01} $ & $ 00.57_{~00.56}^{~00.58} $    \\
        ASA-abs & 100  & $ 80.58_{~78.22}^{~81.73} $ & $ 35.09_{~32.10}^{~44.37} $ & $ 02.64_{~02.15}^{~02.70} $ & $ 00.00_{~00.00}^{~00.00} $ & $ 00.53_{~00.52}^{~00.53} $    \\
        ASA-abs & 500  & $ 92.02_{~90.56}^{~92.76} $ & $ 76.96_{~70.94}^{~83.72} $ & $ 06.21_{~05.69}^{~06.48} $ & $ 00.00_{~00.00}^{~00.01} $ & $ 00.45_{~00.45}^{~00.45} $    \\
        ASA-abs & 1000 & $ 92.54_{~90.90}^{~92.93} $ & $ 82.41_{~74.53}^{~85.43} $ & $ 08.06_{~07.97}^{~08.19} $ & $ 00.01_{~00.01}^{~00.02} $ & $ 00.41_{~00.40}^{~00.41} $    \\
        ASA-abs & 5000 & $ 86.03_{~84.86}^{~87.50} $ & $ 62.19_{~46.98}^{~71.62} $ & $ 29.23_{~24.54}^{~29.63} $ & $ 00.05_{~00.05}^{~00.08} $ & $ 00.29_{~00.29}^{~00.30} $    \\
        \bottomrule
        \end{tabular}
    }
\end{table}

\captionsetup[subfloat]{justification=centering}
\begin{figure}[!h]
    \centering
    \newcommand\fighisth{1.0in}
    \subfloat[No DA]{\includegraphics[height=\fighisth]{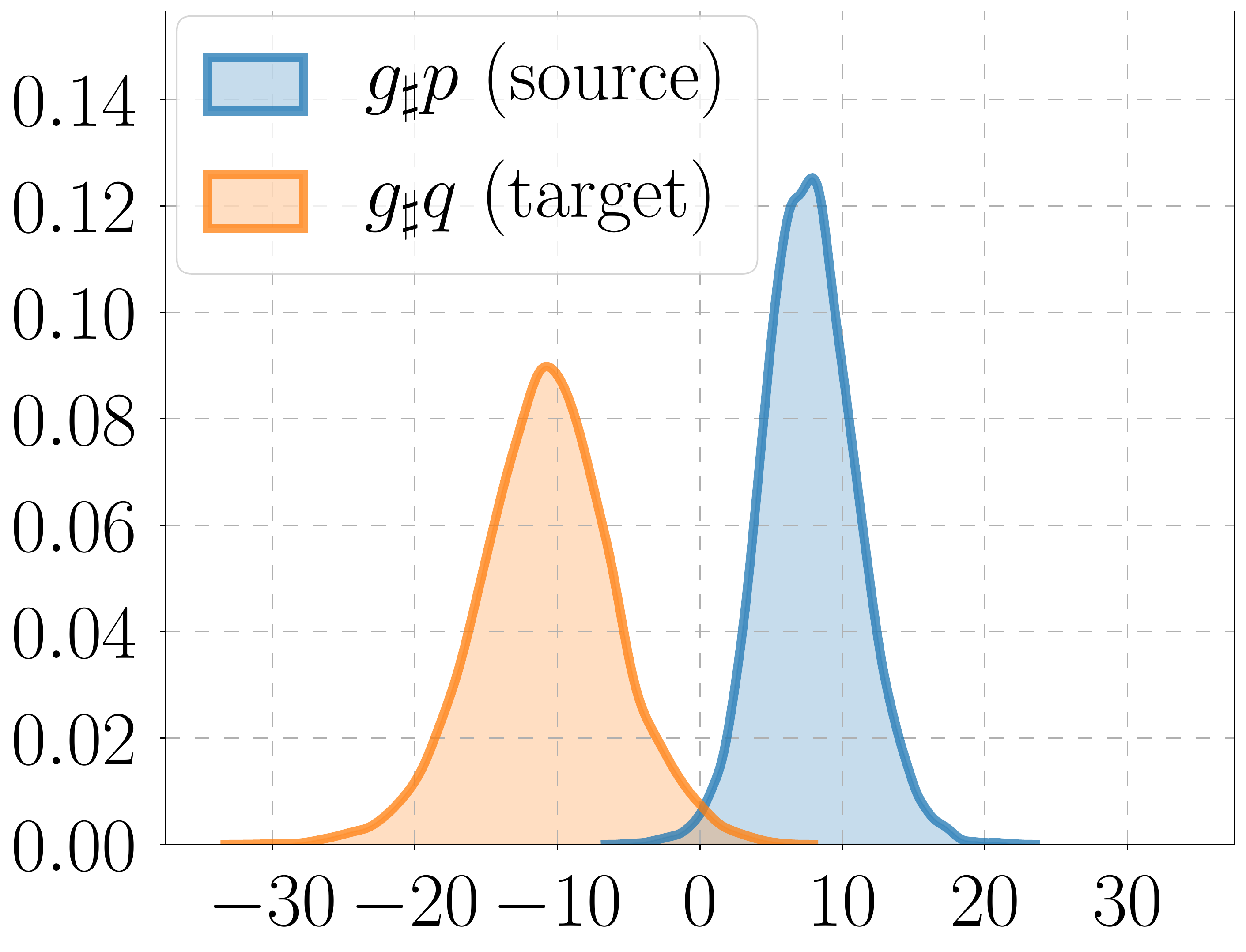}}\;\;
    \subfloat[DANN]{\includegraphics[height=\fighisth]{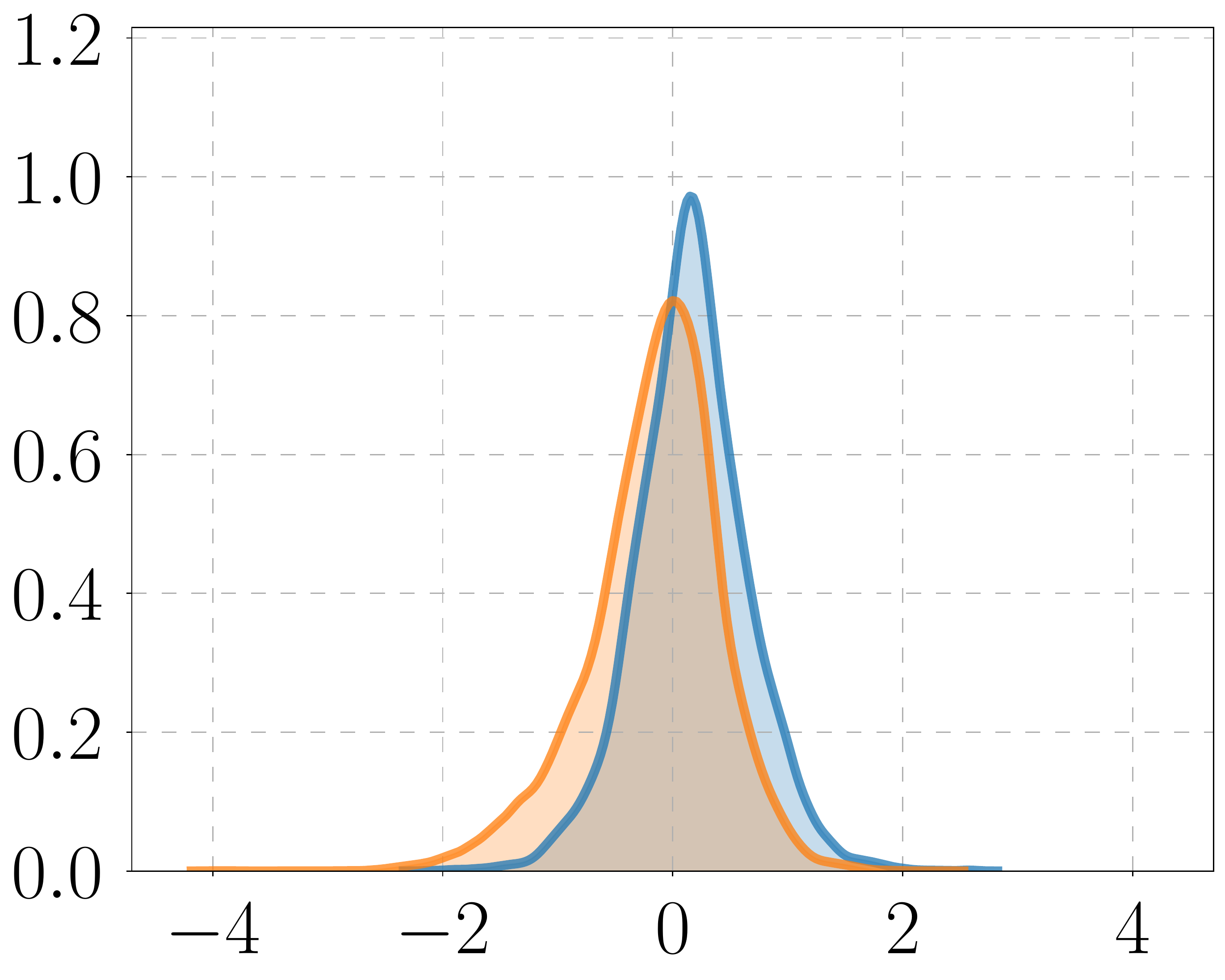}}\;\;
    \subfloat[ASA ($n=0$)]{\includegraphics[height=\fighisth]{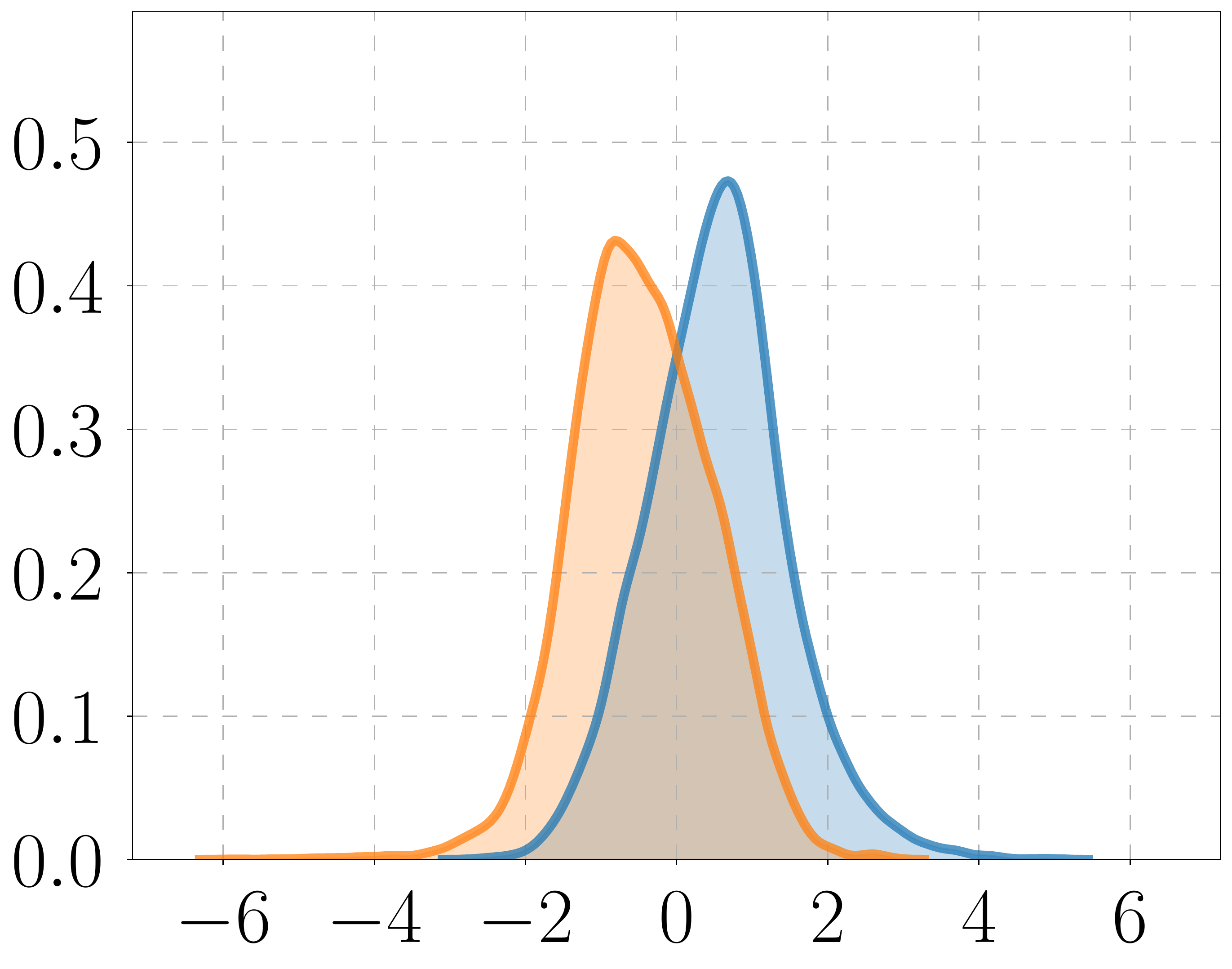}}\;\;
    \subfloat[ASA ($n=1000$)]{\includegraphics[height=\fighisth]{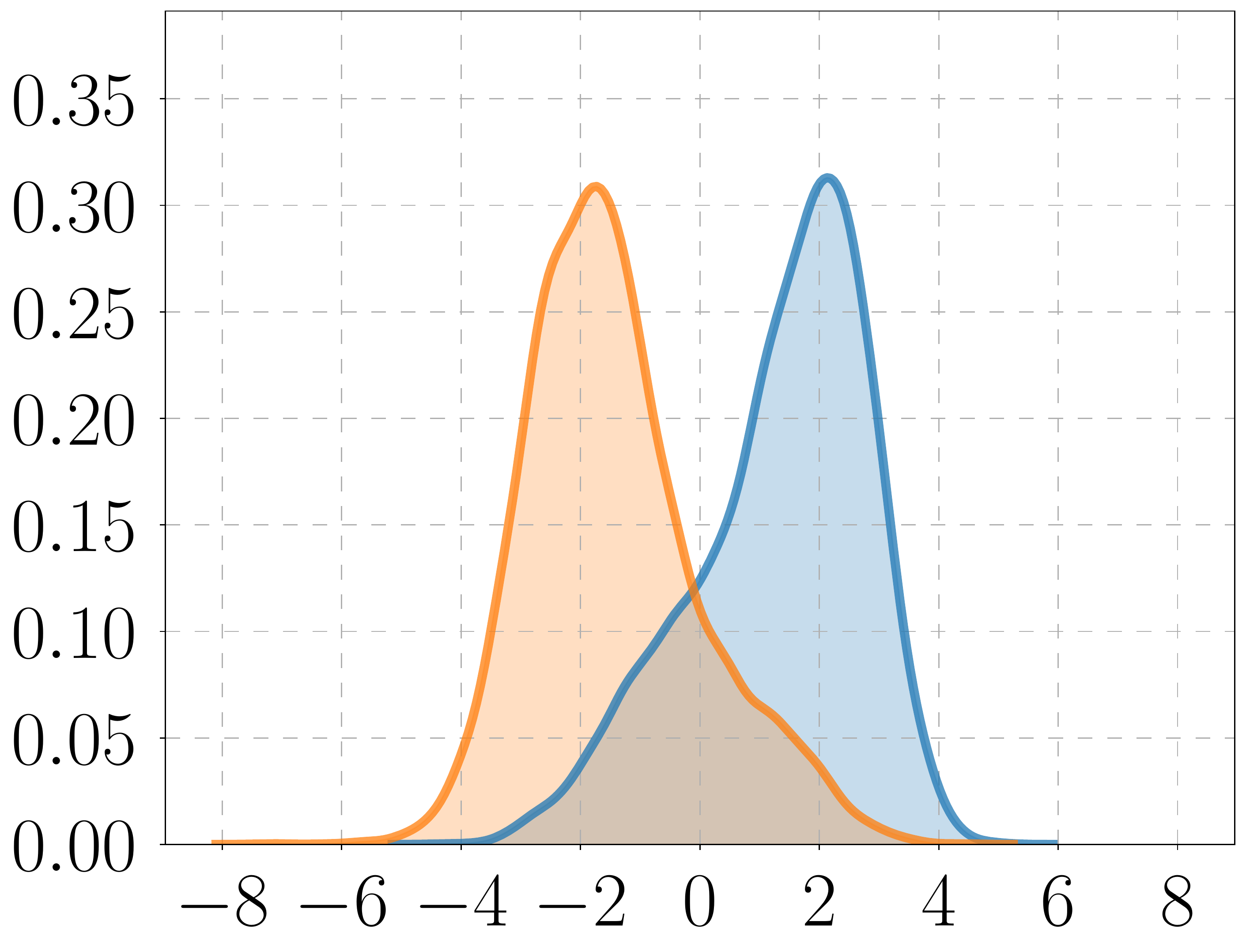}}
    \caption{Kernel density estimates (in the discriminator output space) of ${g^\psi}_\sharp p^\theta_Z$, ${g^\psi}_\sharp q^\theta_Z$ at the end of the training on USPS$\to$MNIST task with $\alpha = 1.5$. $n$ is the size of ASA history buffers.}
    \label{fig:app_disc}
\end{figure}

\begin{table}[!ht]
    \centering
    \caption{Results of No DA, DANN, and ASA-abs (with different history sizes) on 3-class USPS$\to$MNIST adaptation with 2D feature extractor and label distribution shift corresponding to $\alpha = 1.5$. We report average and minimum target class accuracy, as well as Wasserstein distance $\mathcal{D}_W$ and support divergence $\mathcal{D}_\triangle$ between source $p^\theta_Z$ and target $q^\theta_Z$ 2D embedding distributions. We report median (the main number), and 25 (subscript) and 75 (superscript) percentiles across 5 runs.}
    \label{tab:exp_2d}
    \renewcommand{\arraystretch}{1.1}
    \begin{tabular}{llllll}
    \toprule
    Algorithm & History size        & Accuracy (avg)                & Accuracy (min) & $\mathcal{D}_W(p^\theta_Z, q^\theta_Z)$ & $\mathcal{D}_\triangle(p^\theta_Z, q^\theta_Z)$ \\
    \midrule
    No DA & ---        & $ 63.0_{~62.3}^{~69.6} $ & $ 45.3_{~37.9}^{~53.6} $ & $ 0.78_{~00.75}^{~00.84} $ & $ 0.10_{~0.10}^{~0.10} $ \\
    \midrule
    DANN  & ---        & $ 75.6_{~72.4}^{~83.7} $ & $ 54.8_{~49.6}^{~55.1} $ & $ 0.07_{~0.06}^{~0.08} $ & $ 0.02_{~0.02}^{~0.02} $ \\
    \midrule
    \midrule
    ASA-abs & 0    & $ 73.9_{~73.4}^{~84.1} $ & $ 61.8_{~54.6}^{~72.4} $ & $ 0.23_{~0.22}^{~0.47} $ & $ 0.03_{~0.03}^{~0.03} $ \\
    ASA-abs & 100  & $ 88.5_{~86.8}^{~95.1} $ & $ 71.4_{~70.6}^{~93.3} $ & $ 0.54_{~0.56}^{~0.36} $ & $ 0.03_{~0.03}^{~0.03} $ \\
    ASA-abs & 500  & $ 94.5_{~88.7}^{~94.7} $ & $ 89.0_{~83.1}^{~90.3} $ & $ 0.59_{~0.55}^{~0.64} $ & $ 0.03_{~0.03}^{~0.03} $ \\
    ASA-abs & 1000       & $ 91.1_{~91.1}^{~93.0} $ & $ 85.6_{~80.7}^{~86.2} $ & $ 0.59_{~0.55}^{~0.62} $ & $ 0.03_{~0.03}^{~0.03} $ \\
    ASA-abs & 2000 & $ 94.0_{~91.2}^{~94.7} $ & $ 88.6_{~80.2}^{~89.4} $ & $ 0.62_{~0.58}^{~0.66} $ & $ 0.03_{~0.03}^{~0.03} $ \\
    ASA-abs & 5000 & $ 82.1_{~81.8}^{~83.9} $ & $ 68.9_{~65.5}^{~70.9} $ & $ 0.64_{~0.63}^{~0.67} $ & $ 0.04_{~0.04}^{~0.04} $ \\
    \bottomrule
    \end{tabular}
\end{table}

\end{document}